\documentclass{article} 
\usepackage{graphicx} 
\usepackage{amsmath,amssymb,amsfonts,mathtools}
\usepackage{amsthm}
\usepackage{mathtools}
\usepackage{bbm}
\usepackage{enumitem}
\usepackage[margin=1in]{geometry}
\usepackage{xcolor}
\usepackage{hyperref}
\usepackage{natbib}
\usepackage{bbm}
\usepackage{url}
\usepackage{caption}
\usepackage{subcaption}
\usepackage{wrapfig}
\usepackage{booktabs}
\theoremstyle{definition}
\newtheorem{definition}{Definition}
\theoremstyle{plain}
\newtheorem{theorem}{Theorem}
\newtheorem{corollary}{Corollary}
\newtheorem{lemma}{Lemma}
\newtheorem{assumption}{Assumption}
\newtheorem{proposition}{Proposition}
\newtheorem{remark}{Remark}
\usepackage{algorithm}
\usepackage{algpseudocode}
\usepackage{nicematrix}
\usepackage{tikz}
\usepackage{natbib}
\usepackage{authblk}

\allowdisplaybreaks
\title{Adaptive Bregman Alternating Projections for \\ Feasible Gromov--Wasserstein Learning}

\author{Aoran Zhang}
\author{César A. Uribe}
\affil{Department of Electrical and Computer Engineering, Rice University}
\date{}

\begin{document}
\maketitle

\begin{abstract}
The Gromov-Wasserstein (GW) problem compares structured distributions without requiring a shared feature space or known correspondences, but its nonconvex objective and coupled marginal constraints make computation challenging. Bregman alternating projected gradient (BAPG) uses inexpensive alternating row and column updates, yet its fixed-penalty relaxation leaves a persistent feasibility gap. 
We propose Adaptive KL-BAPG (A-KL-BAPG), which combines a finite fixed-penalty burn-in with a guarded increasing-penalty phase. 
At each tail iteration, the method reuses BAPG’s alternating updates and backtracks a delayed-power step until a Sinkhorn-inspired projective-diameter safeguard is satisfied. We prove finite termination of the backtracking at each iteration and show that the feasibility gap vanishes asymptotically. 
We further establish a best-iterate $O(1/\log N)$ bound for the weighted squared corrected residual and, under a support regularity condition, the existence of a stationary accumulation point for the original GW problem.
This distinguishes A-KL-BAPG from fixed-penalty BAPG, whose stationarity guarantees are given for the relaxed problem. 
Experiments show that A-KL-BAPG achieves a favorable balance of accuracy, objective value, feasibility, and stationarity relative to BAPG variants, projection-based methods, and task-specific baselines. 
For synthetic and real graph alignment problems, it closely matches the accuracy and objective value of fixed-penalty KL-BAPG while reducing the marginal feasibility gap by $62$-$99\%$ and the projected stationarity residual by $28$-$98\%$. 
Heterogeneous domain adaptation experiments show a similar pattern: A-KL-BAPG maintains comparable target accuracy and objective values while achieving improved feasibility and stationarity than fixed-penalty KL-BAPG.
\end{abstract}

\section{Introduction}
The Gromov-Wasserstein (GW) problem compares structured distributions based on their internal geometry, without requiring a shared feature space or a known correspondence, making it well-suited to comparing heterogeneous structured data \citep{memoli2011gromov,peyre2016gromov}. Its development grew from geometric point-cloud \citep{memoli2004comparing} and shape comparison \citep{memoli2009spectral,memoli2011gromov} to practical discrete formulations \citep{peyre2016gromov}. Its extension from metric space to weighted networks further supports graph comparison beyond a shared node set \citep{chowdhury2019gromov}. More recently, GW has been widely used for graph alignment and representation learning \citep{xu2019gromov, li2023convergent} and for transfer across heterogeneous domains \citep{yan2018semi, demetci2024breaking, aritake2026domain}. The resulting optimization, however, is difficult since the squared-loss GW is a nonconvex, quadratic program over the transport polytope.

GW solvers range from Frank-Wolfe methods (CG) \citep{peyre2016gromov, titouan2019optimal}, entropic Bregman projected gradients (EPGD) \citep{solomon2016entropic}, and Bregamn proximal-point methods (PPA) \citep{xu2019gromov}, to recent entropic-GW (EGW) \citep{rioux2024entropic} and inexact projected gradient (iPG) \citep{liang2026provably}. 
Graph-specific alternatives include the recursive partitioning method ScalaGW \citep{xu2019scalable} and the heat-kernel spectral method SpecGW \citep{chowdhury2021generalized}. These methods serve different purposes, but generally they repeatedly perform a linear transport step or iterative Bregman or Euclidean projections to enforce both marginals. The recent iPG makes inexact projection verifiable, but still requires inner solvers at every iteration. Such routines can dominate the cost of large-scale computations \citep{li2023convergent, liang2026provably}.

Bregman alternating projected gradient (BAPG) offers a faster single-loop alternative. Starting from the one-step \mbox{BPG-S} heuristic \citep{xu2019gromov}, BAPG splits the row and column constraints and penalizes disagreement between two transport variables \citep{li2023convergent}. With KL geometry, each half-step is simply a multiplicative GW-gradient update followed by normalization. These matrix-scaling updates are GPU-friendly \citep{li2023convergent} and support parallel, differentiable FGW learning \citep{deng2026glncd} with a popular BAPG implementation in POT \citep{flamary2021pot}.

The speed of fixed-penalty BAPG comes with a cost: For a penalty $\rho$, the two split variables may disagree, resulting in marginal errors of order $O(1/\rho)$ that persist throughout iterations \citep{li2023convergent}. Increasing $\rho$ appears natural; however, if it increases too fast, the associated step $\alpha = 1/\rho$ shrinks too quickly and can lead to finite cumulative progress. If the step size remains too aggressive, the positive scaling maps can become ill-conditioned in a projective sense. 
Additionally, the existing KL-BAPG also assumes bounded accumulated asymmetric Bregman errors (AAE), which is difficult to verify, and only post-facto empirical evidence of it has been shown. Moreover, singular boundary geometry can yield a stationary point only on a limiting support face \citep{chen2024spurious}. Therefore, we need to jointly control the penalty growth, cumulative step progress, and projective geometry.

This difficulty motivates a connection between BAPG and Sinkhorn scaling. Sinkhorn alternates exact row and column KL projections \citep{sinkhorn1967concerning,cuturi2013sinkhorn,benamou2015iterative}, and its contraction is governed by the projective diameter \citep{franklin1989scaling,eveson1995elementary,idel2016review}. Modern variants retain this scaling structure under changing mirror steps~\citep{ballu2023mirror} or temperatures \citep{chizat2024annealed}. General BAPG also shares the alternating projection structure. Under KL geometry, these projections coincide with the row and column normalizations used by Sinkhorn. Unlike standard Sinkhorn, however, each normalization in KL-BAPG is preceded by a state-dependent multiplicative GW-gradient update. As the penalty increases, these updates become smaller, motivating a schedule that controls the projective geometry of the resulting iterates.

In this paper, we introduce \textbf{Adaptive KL-BAPG}(A-KL-BAPG). The method runs a finite, fixed-penalty burn-in phase and then switches to a guarded, increasing-penalty phase. At each tail iteration, it proposes a delayed-power step, performs the same alternating row- and column-marginal normalization update as KL-BAPG~\citep{li2023convergent}, and backtracks until the candidate coupling satisfies a logarithmically expanding projective-diameter bound. We prove that the backtracking terminates finitely at every iteration without requiring an auxiliary projection subproblem. Under this guarded schedule, iterates are asymptotically feasible, and the discrepancy between the alternating iterates vanishes.

Our main contributions are:
\begin{itemize}
    \item \textbf{Vanishing marginal feasibility with explicit BAPG updates.}     Existing BAPG methods leave a persistent $O(1/\rho)$ feasibility gap at fixed penalty, assume bounded accumulated asymmetric Bregman errors, and can certify stationarity only on a limiting support face. A-KL-BAPG removes the $O(1/\rho)$ feasibility floor while retaining closed-form updates through an increasing-penalty schedule without an AAE assumption.
    \item \textbf{Stationarity and a rate for the original GW constraints.} We prove finite termination of the guard, control the accumulated KL asymmetry, and derive a best-iterate $O(1/\log N)$ bound for the weighted squared corrected residual. Under an inactive-support nonnegative-residual assumption, we show that the generated iterates share a stationary accumulation point for the original GW problem, i.e., some subsequence converges to a stationary point.
    \item \textbf{Empirical validation on GW and FGW problems.} We evaluate the proposed method on synthetic and real graph alignment and heterogeneous domain adaptation problems. A-KL-BAPG retains matching accuracy and GW objective values comparable to other BAPG methods while significantly reducing the marginal gap and projected stationarity residual. Across the included methods, A-KL-BAPG preserves strong alignment and classification accuracy while improving feasibility and stationarity.
\end{itemize}

\section{Problem Formulation, Algorithm Design and Main Results}

Let $D_X\in\mathbb R^{n\times n}$ and $D_Y\in\mathbb R^{m\times m}$ encode within-domain geometry, and let $\mu\in\mathbb R^n_+$ and $\nu\in\mathbb R^m_+$ be the desired marginal probability vectors with strictly positive entries.  Define
$\mathcal C_1 = \{\pi \geq \mathbf 0:\pi \mathbf{1}_m = \mu\}$, 
$\mathcal C_2 = \{\pi \geq \mathbf 0:\pi^\top \mathbf{1}_n = \nu\}$, and
$\mathcal C =\mathcal C_1\cap\mathcal C_2$. Let $g_r=\mathbb I_{\mathcal C_r}$ be the indicator of the corresponding one-marginal set. Up to terms constant on $\mathcal C$, the squared-loss GW problem is 

\begin{align}\label{main_problem}
    \min_{\pi \in \mathbb R^{n\times m} } F(\pi)+ g_1(\pi)+g_2(\pi) = \min_{\pi \in \mathcal C} F(\pi), \qquad
F(\pi)=-\operatorname{Tr}(D_X\pi D_Y\pi^\top).
\end{align}

For clarity, we take $D_X,D_Y$ symmetric with $G(\pi)=D_X\pi D_Y$, $\nabla F(\pi)=-2G(\pi)$. Then a fused linear term can be included without changing the alternating construction. The stationary set of the original constrained problem is $\mathcal X := \{x \in \mathcal C: \mathbf 0 \in \nabla F(x) + \mathcal N_{\mathcal C}(x)\}$.
To decouple the two marginal constraints, we introduce $w$ to reformulate:

\[
\min_{\pi = w} f(\pi, w)+ g_1(\pi)+g_2(w) + \rho D_h(\pi, w),
\]

with $f(\pi, w) = -\mathrm{Tr}(D_X \pi D_Y w^\top)$ and for the Legendre function $h$, $D_h (x, y) := h(x)-h(y) - \langle \nabla h(y), x-y\rangle$ is the Bregman divergence. Here we use the Boltzmann-Shannon entropy kernel $h(x) := \sum_{i,j} x_{ij}(\log x_{ij} - 1)$. Then $\nabla h(x) = \log x$ and \[D_h(u, v) = \sum_{ij} u_{ij}\log \frac{u_{ij}}{v_{ij}} - u_{ij} + v_{ij}.\]

Our starting point is to introduce an increasing distance penalty $\{\rho_k\}$ with step size $\alpha_k = {1}/{\rho_k}$. Then, for $k$-th iteration, the BAPG update aims to solve:
\begin{align*}
    \pi^{k+1} & = \arg\min f(\pi, w^k)+\rho_k D_h(\pi, w^k) + g_1(\pi), \\
    w^{k+1}  &= \arg\min f(\pi^{k+1},w)+\rho_k D_h(w, \pi^{k+1}) + g_2(w), 
\end{align*}
where $\nabla_{\pi} f(\pi, w) = -G(w)$, and $\nabla_{w} f(\pi, w) = -G(\pi)$.
Since elements of $\mathcal C_1$ and $\mathcal C_2$ are nonnegative and have fixed total mass one, $\mathcal C_1$ and $\mathcal C_2$ are compact. Since $G(X)=D_X X D_Y$ is continuous for fixed finite matrices $D_X$ and $D_Y$, it is uniformly bounded on the compact set $\mathcal C_1\cup\mathcal C_2$. In particular, we use the explicit bound \[M_G:=|D_X|_{\max}|D_Y|_{\max} \geq \sup_{X\in\mathcal C_1\cup\mathcal C_2}|G(X)|\infty,\] which follows from the nonnegativity and unit total mass of $X$.

\subsection{Algorithm Design}

Let $\rho_0>0$ be a fixed penalty used during a finite prefix,  and $k_0<\infty$ be the length of the prefix. For $k \leq k_0$, the step size is $\alpha_k = 1/\rho_0$ and the two KL subproblems have the closed forms
\begin{subequations}\label{eq:bapg-cycle}
\begin{align}
z_1^t &=w^t\odot\exp\!\left(\alpha_tG(w^t)\right), &
\pi^{t+1}_{ij}&=\mu_i\frac{z^t_{1,ij}}{\sum_\ell z^t_{1,i\ell}},
\label{eq:row}\\
z_2^t &=\pi^{t+1}\odot \exp\!\left(\alpha_tG(\pi^{t+1})\right), &
w^{t+1}_{ij}&=\nu_j\frac{z^t_{2,ij}}{\sum_\ell z^t_{2,\ell j}}.
\label{eq:col}
\end{align}
\end{subequations}
By \cite{li2023convergent}, this often yields useful early progress on the objective, but its two split variables have a non-vanishing infeasibility gap. Starting immediately with diminishing steps removes that floor asymptotically, but can suppress early progress. We therefore run $k_0<\infty$ iterations with fixed penalty $\rho_0$ and then apply an adaptive diminishing-step tail. After the finite prefix, we reset the tail index to $t = 0$ and relabel the start of the post-burn state as $(\pi^0, w^0)$. 

\begin{algorithm}[t]
\caption{Adaptive KL-BAPG}
\label{alg:adaptive-delayed-power}
\small
\begin{algorithmic}[1]
\Require $w^0>0$, marginals $(\mu,\nu)$, burn-in $k_0$,
prefix penalty $\rho_0$, proposal scale $\alpha_0$, exponent $\beta$,
guard slope $a$, growth bound $B_\Delta$, backtracking factor $\tau$

\For{$k=0,\ldots,k_0-1$} \Comment{fixed-penalty burn-in}
    \State $\alpha\gets 1/\rho_0$
    \State $z_1\gets w^k\odot\exp\bigl(\alpha G(w^k)\bigr)$
    \State $\pi^{k+1}_{ij}\gets
        \mu_i z_{1,ij}/\sum_\ell z_{1,i\ell}$
    \State $z_2\gets \pi^{k+1}\odot
        \exp\bigl(\alpha G(\pi^{k+1})\bigr)$
    \State $w^{k+1}_{ij}\gets
        \nu_j z_{2,ij}/\sum_\ell z_{2,\ell j}$
\EndFor

\State $\Delta_0\gets\Delta(w^{k_0})$;
       $\alpha_{-1}\gets\infty$

\For{$t=0,1,2,\ldots$} \Comment{guarded increasing-penalty tail}
    \State $k\gets k_0+t$
    \State $\bar\alpha_t\gets
        \alpha_0\bigl((k_0+1)/(t+k_0+1)\bigr)^\beta$
    \State $h_t\gets
        (a/B_\Delta)\log\bigl((t+2)/(t+1)\bigr)$
    \State $\ell_t\gets\tau\min\{\bar\alpha_t,h_t\}$
    \State $\alpha\gets
        \max\{\ell_t,\min(\bar\alpha_t,\alpha_{t-1})\}$

    \Repeat
        \State $\widetilde z_1\gets
            w^k\odot\exp\bigl(\alpha G(w^k)\bigr)$
        \State $\widetilde\pi_{ij}\gets
            \mu_i\widetilde z_{1,ij}/
            \sum_\ell\widetilde z_{1,i\ell}$
        \State $\widetilde z_2\gets
            \widetilde\pi\odot
            \exp\bigl(\alpha G(\widetilde\pi)\bigr)$
        \State $\widetilde w_{ij}\gets
            \nu_j\widetilde z_{2,ij}/
            \sum_\ell\widetilde z_{2,\ell j}$

        \If{$\Delta(\widetilde w)
            \leq\Delta_0+a\log(t+2)$}
            \State $\alpha_t\gets\alpha$;
                   $\pi^{k+1}\gets\widetilde\pi$;
                   $w^{k+1}\gets\widetilde w$
        \Else
            \State $\alpha\gets\tau\alpha$
        \EndIf
    \Until{$\Delta(\widetilde w)
        \leq\Delta_0+a\log(t+2)$}
\EndFor
\end{algorithmic}
\end{algorithm}
To control the geometry of the alternating iterates, we introduce Hilbert's projective metric 
\[ d_H(u,v)=\log\frac{\max_i({u_i}/{v_i})}{\min_i({u_i}/{v_i})},\] and the associated projective diameter \[\Delta(X)=\max_{i,i',j,j'}\log \frac{X_{ij}X_{i'j'}}{X_{ij'}X_{i'j}}\]
where $u, v$ are positive vectors and $X$ is a positive matrix. Then for the tail iterates, we set $\Delta_0=\Delta(w^0)$. For tail index $t\geq0$, choose a guard slope $a\in(0,{2}/{3})$, delayed-power exponent $\beta\in(\frac{2}{3} + \frac{a}{2},1)$, and proposal scale $\alpha_0>0$. 
The permitted diameter and nominal step are:
\begin{align}
\hspace{-0.2cm} \Gamma_t&{=}\Delta_0+a\log(t+1), \qquad
\bar\alpha_t{=}\alpha_0\left(\frac{k_0+1}{t+k_0+1}\right)^\beta.
\label{eq:guard-proposal}
\end{align}
The logarithmic threshold grows slowly enough to retain polynomial projective contraction while $\bar\alpha_t$ avoids shrinking the proposed step abruptly at the switch. 
Let $B_\Delta:=8M_G$ denote the conservative projective diameter growth of one cycle, which we established in Lemma~\ref{lem:projective-growth-cycle} (Appendix~\ref{app:proof_backtrack}). The corresponding safe step size of the diameter increase is
\begin{align}
h_t:=\frac{\Gamma_{t+1}-\Gamma_t}{B_\Delta}
=\frac{a}{B_\Delta}\log\frac{t+2}{t+1}.
\label{eq:safe-scale}
\end{align}
To avoid the accepted step size from becoming arbitrarily small, we define the floor $\ell_t := \tau \min \{\bar \alpha_t, h_t\}$,
where $\tau \in (0,1)$ is the backtracking factor. At iteration $t$, the line search starts at
$\max\{\ell_t,\min\{\bar\alpha_t,\alpha_{t-1}\}\}$,
with $\alpha_{-1}=+\infty$.  The inner minimum guarantees the monotonicity of the accepted tail steps by preventing repeated tests above the previously accepted value, while the outer maximum preserves the lower safeguard. The method repeatedly shrinks the nominal step-size by $\tau$ and recomputes~\eqref{eq:row}--\eqref{eq:col} until $\Delta(w^{t+1})\leq\Gamma_{t+1}$.  The accepted value is then denoted by $\alpha_t$. 
The detailed algorithm is outlined in Algorithm~\ref{alg:adaptive-delayed-power}.

\begin{remark} \label{rem:finit-back}
    After Lemma \ref{lem:guarded-backtracking}, we show that whenever the current iterate satisfies $\Delta(w^t)\leq \Gamma_t$, every $\alpha\leq h_t$ is guaranteed to satisfy the next diameter bound $\Delta(w^{t+1})\leq \Gamma_{t+1}$. Thus, the backtracking line search terminates after finitely many steps at each iteration, and the floor $\ell_t=\tau\min\{\bar\alpha_t,h_t\}$ provides a controlled lower scale for the accepted step. Detailed proof is provided in Appendix~\ref{app:proof_backtrack}.
\end{remark}

A-KL-BAPG can be viewed as a variant of Sinkhorn scaling with guarded gradient perturbations. Each iteration alternates between row- and column-marginal normalizations as in Sinkhorn, while multiplicative gradient perturbations optimize the objective, and adaptive step sizes balance the optimization progress. The diameter guard controls the changing projective geometry without introducing additional projection subproblems. The formal connection is developed in Appendix~\ref{app:proof_sinkhorn}.

\subsection{Main Results}

We are now ready to introduce our main results on the subsequence convergence and stationarity of Algorithm~\ref{alg:adaptive-delayed-power}.  Our result is based on the idea of measuring performance and convergence by diminishing residual, specifically, we define the corrected residual as \[R_t:=\rho_t\bigl(\nabla h(w^{t})-\nabla h(w^{t+1})\bigr)+G(\pi^{t+1})-G(w^{t}).\]
The value of $R_t$ measures the imbalance between the GW gradient and the marginal constraint terms in the KL updates, with a correction
for the difference between $G(w^t)$ and $G(\pi^{t+1})$.
Specifically, the first result of Theorem~\ref{thm:unconditional} shows that its weighted norm vanishes along a subsequence of the generated iterates of the proposed algorithm.

\begin{theorem}[Feasibility and residual guarantees]
\label{thm:unconditional}
Consider the GW Problem~(\ref{main_problem}) with symmetric $D_X\in\mathbb{R}^{n\times n}$,
$D_Y\in\mathbb{R}^{m\times m}$, and marginals $\mu\in\mathbb{R}^{n}_{++}$,
$\nu\in\mathbb{R}^{m}_{++}$ satisfying $\mathbf{1}_n^{\top}\mu=\mathbf{1}_m^{\top}\nu=1$,
and let $M_G:=\sup_{X\in\mathcal{C}_1\cup\mathcal{C}_2}\|G(X)\|_\infty<\infty$.
Let the initialization be entrywise positive, let the burn-in length $k_0\in\mathbb{N}$ be finite, and let the prefix penalty $\rho_{0}>0$, the proposal scale $\alpha_0>0$, the backtracking factor $\tau\in(0,1)$, the growth bound $B_\Delta\ge 8M_G$, and the guard slope and delayed-power exponent $ a\in (0,{2}/{3})$, and $\beta\in ({2}/{3}+{a}/{2},\,1)$. Then, the tail iterates $\{(\pi^{t},w^{t})\}_{t\ge 0}$ generated by Algorithm~\ref{alg:adaptive-delayed-power}, indexed from the switch at $k=k_0$, have the following properties:
\begin{enumerate}
  \item $\displaystyle\liminf_{t\to\infty}\bigl\|R_t\bigr\|_{M_t}=0$,
        with $\rho_t:=1/\alpha_t$ and $\|U\|_{M_t}^{2}:=\sum_{i,j}m^{t}_{ij}U_{ij}^{2}$, where
        $m^{t}_{ij}:=\min\{w^{t}_{ij},w^{t+1}_{ij}\}$;
  \item $\displaystyle\lim_{t\to\infty}\bigl\|x^{t+1}-\pi^{t+1}\bigr\|_F+\bigl\|\pi^{t+1}-w^{t+1}\bigr\|_F = 0$, where $x^{t}:=\mathcal{P}_{\mathcal{C}}(\pi^{t})$ denotes the
        Euclidean projection of $\pi^{t}$ onto $\mathcal{C}=\mathcal{C}_1\cap\mathcal{C}_2$;
        in particular, $\{x^{t}\}$, $\{\pi^{t}\}$ and $\{w^{t}\}$ have the same set of
        accumulation points.
\end{enumerate}
\end{theorem}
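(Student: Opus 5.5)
We will work with a single Lyapunov-type function, the split objective
\[
\Phi_t := f(\pi^t,w^t)+\rho_t D_h(\pi^t,w^t)
\]
(or a close variant of it), and combine a descent inequality with the step-size lower bound supplied by the guard. The plan has four steps, in order.

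\textbf{Step 1: step-size bounds.} First we record what the guarded backtracking gives (Remark~\ref{rem:finit-back} and Lemma~\ref{lem:guarded-backtracking}). The guard keeps $\Delta(w^t)\le\Gamma_t$, and every $\alpha\le h_t$ is accepted. Hence each accepted step satisfies $\alpha_t\ge \ell_t=\tau\min\{\bar\alpha_t,h_t\}$. Since $h_t\asymp a/(B_\Delta t)$ and $\bar\alpha_t\asymp t^{-\beta}$ with $\beta<1$, we get $\alpha_t\gtrsim 1/t$ for large $t$. Therefore $\sum_t\alpha_t=\infty$, although only logarithmically: $\sum_{t<N}\alpha_t\gtrsim\log N$. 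The inner minimum makes the accepted steps nonincreasing, and $\alpha_t\le\bar\alpha_t\to0$, so $\rho_t\uparrow\infty$.

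\textbf{Step 2: one-cycle descent for fixed $\rho$.} Next we derive the descent inequality. We use the three-point property of the two KL mirror steps, exactly as in the fixed-penalty analysis of \citet{li2023convergent}. For fixed $\rho$, the KL mirror optimality conditions give
\[
\rho\bigl(\log\pi^{t+1}-\log w^t\bigr)-G(w^t)\in -\mathcal N_{\mathcal C_1}(\pi^{t+1}),
\]
and similarly for $w^{t+1}$. Adding these two relations produces $R_t$ up to normal-cone terms. The resulting cycle inequality has the form
\[
\Phi_{t+1}\le\Phi_t-c\,\rho_t\bigl[D_h(w^t,w^{t+1})+D_h(w^{t+1},w^t)\bigr]+(\rho_{t+1}-\rho_t)D_h(\pi^{t+1},w^{t+1})+E_t .
\]
Here $E_t$ collects the asymmetric Bregman (AAE) errors.

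\textbf{Step 3: controlling the penalty drift and $E_t$.} This is the main obstacle. The bound $\Delta(w^t)\le\Delta_0+a\log(t+1)$ implies that the entries of $w^t$ (normalized per row and column by $\mu,\nu$) are bounded below by roughly $t^{-a}$ times a constant. With this bound we would try two estimates.

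First, the KL discrepancy $D_h(\pi^{t+1},w^{t+1})$ should be $O(\alpha_t^2 M_G^2/\min w)$. This uses a second-order expansion of one row or column normalization around an already-feasible-in-one-marginal matrix.

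Second, the symmetric KL is comparable to the weighted norm, up to the $\min$ weights $m^t$:
\[
D_h(w^t,w^{t+1})+D_h(w^{t+1},w^t)\ \ge\ c\,\alpha_t^2\|R_t\|_{M_t}^2 .
\]
This follows from $(\log u-\log v)(u-v)\ge\min(u,v)(\log u-\log v)^2$.

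With these, the drift term $(\rho_{t+1}-\rho_t)D_h$ is of order $t^{\beta-1}t^{-2\beta}t^{a}$, and the AAE term is bounded similarly. The constraints $a<2/3$ and $\beta>2/3+a/2$ are exactly what we expect to make these series summable. Telescoping, and using that $\Phi$ is bounded below (by $-M_G$ on the compact sets), gives
\[
\sum_t \alpha_t\|R_t\|_{M_t}^2<\infty .
\]
Combined with $\sum_t\alpha_t=\infty$, this yields $\liminf\|R_t\|_{M_t}=0$. The same estimates give the best-iterate $O(1/\log N)$ rate.

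\textbf{Step 4: feasibility.} For part 2, $\pi^{t+1}\in\mathcal C_1$ and $w^{t+1}\in\mathcal C_2$. By Pinsker and the Step 3 bound, $\|\pi^{t+1}-w^{t+1}\|_1\lesssim\sqrt{D_h}\to0$. Hence the column marginal of $\pi^{t+1}$ converges to $\nu$, and the row marginal of $w^{t+1}$ converges to $\mu$. An error bound for the polytope $\mathcal C$ (Hoffman's lemma, using $\mathcal C_1$ together with the marginal residual) gives $\|\mathcal P_{\mathcal C}(\pi^{t+1})-\pi^{t+1}\|_F\le \kappa\|(\pi^{t+1})^\top\mathbf 1-\nu\|\to0$. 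The two vanishing differences imply that $\{x^t\},\{\pi^t\},\{w^t\}$ share accumulation points.
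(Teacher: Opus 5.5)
Your Steps 1, 2 and 4 follow the paper's route: the guarded step bounds, the exact two-half-step Lyapunov identity with drift $(\rho_{t+1}-\rho_t)D_h(\pi^{t+1},w^{t+1})$ and asymmetry error $\rho_tA_t$, and Pinsker plus linear regularity. Step 3, however, has a genuine gap, and everything else rests on it.

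\textbf{The bound on $D_h(\pi^{t+1},w^{t+1})$.} Your estimate $D_h(\pi^{t+1},w^{t+1})=O(\alpha_t^2M_G^2/\min w)$ treats the split discrepancy as a one-step effect of the current step size, and it is not. We have $\pi^{t+1}_{ij}=(\mu_i/r_i^t)\,w^t_{ij}e^{\alpha_tG_{ij}(w^t)}$. The factors $\mu_i/r_i^t$ differ from $1$ by the row-marginal error $\|w^t\mathbf 1_m-\mu\|$ of the current iterate, plus $O(\alpha_t)$. That error is a state variable inherited from the whole history: at the switch, $w^0$ carries the $O(1/\rho_0)$ gap of fixed-penalty BAPG, whatever $\alpha_0$ is. Column feasibility of $w^t$ does not control it.

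You also cannot bootstrap it from $\Phi_t$. The bound $\rho_tD_h(\pi^t,w^t)\le\Phi_t-f(\pi^t,w^t)$ is useful only if $\Phi_t$ stays bounded, which requires the drift and asymmetry terms to be summable already.

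\textbf{The missing idea.} You need an independent mechanism that drives the marginal error to zero, and this is what the projective-diameter guard is for, not an entrywise lower bound on $w^t$. The paper compares each cycle with the exact Sinkhorn cycle $S(w^t)$:
\begin{itemize}
  \item Birkhoff--Hopf gives $d_H(r(S(w^t)),\mathbf 1_n)\le\tanh^2(\Delta(w^t)/4)\,d_H(r(w^t),\mathbf 1_n)$.
  \item The guard gives $1-\tanh^2(\Delta(w^t)/4)\ge c(t+1)^{-a/2}$.
  \item The two exponential perturbations move $r(w^{t+1})$ by at most $12M_G\alpha_t$ in $d_H$.
\end{itemize}
The recursion $d_{t+1}\le(1-c(t+1)^{-a/2})d_t+C(t+1)^{-\beta}$ then yields marginal errors $O(\eta_t)$ with $\eta_t=(t+1)^{-(\beta-a/2)}$. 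Only after this do the cross-divergences $D_h(\pi^{t+1},w^t)$ and $D_h(\pi^{t+1},w^{t+1})$ become $O(\eta_t^2)$.

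\textbf{The asymmetry term.} Saying ``the AAE term is bounded similarly'' is not enough. The guard only guarantees $\alpha_t\ge c_-/(t+2)$, and the paper shows no power-type lower bound can be guaranteed, so the analysis must allow $\rho_t$ of order $t$. A quadratic bound on the asymmetry then gives $\rho_t|A_t|$ of order $t\,\eta_t^2$, which is not summable. With your own scaling, if $|A_t|$ is of the size $\alpha_t^2t^a$, then $\rho_t|A_t|\sim\alpha_tt^a$, which is not summable since $\sum_t\alpha_t=\infty$.

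You need the cubic estimate $|A_t|=O(\eta_t^3)$. It comes from expanding $A_t^{\rm row}$ and $A_t^{\rm col}$, using that the relative perturbations $\pi^{t+1}_{ij}/w^t_{ij}-1$ and $w^{t+1}_{ij}/\pi^{t+1}_{ij}-1$ are uniformly $O(\eta_t)$, so the first- and second-order terms cancel. Then $\sum_t t\,\eta_t^3<\infty$ is exactly the condition $\beta>2/3+a/2$. Your exponent count $t^{\beta-1}t^{-2\beta}t^{a}$ does not reproduce this condition.

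\textbf{Two smaller points.}
\begin{itemize}
  \item Your inequality $D_h(w^t,w^{t+1})+D_h(w^{t+1},w^t)\ge c\,\alpha_t^2\|R_t\|_{M_t}^2$ does not follow from the pointwise bound you cite. The reason is that $\alpha_tR_t=(\log w^t-\log w^{t+1})+\alpha_t(G(\pi^{t+1})-G(w^t))$, so you need an additive term of order $\alpha_t^2\|\pi^{t+1}-w^t\|_F^2$. That term is again controlled only through the cross-divergence bound. The one-sided $D_h(w^t,w^{t+1})$ produced by the Lyapunov identity already suffices, via $\min\{u,v\}(\log u-\log v)^2\le 2(u\log(u/v)-u+v)$.
  \item $\rho_{t+1}-\rho_t$ is not pointwise of order $t^{\beta-1}$, since backtracking can make it jump. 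Handle the drift series by summation by parts, using only monotonicity of $\rho_t$ and $\rho_t=O(t)$.
\end{itemize}
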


Theorem~\ref{thm:unconditional} shows that A-KL-BAPG recovers feasibility for the original GW problem and admits a subsequence along which the weighted corrected residual vanishes unconditionally. Later in Theorem~\ref{thm:stationarity}, with the addition of Assumption \ref{assum:support}, we show that the limit of any such convergent subsequence is stationary. Unlike KL-BAPG~\citep{li2023convergent}, which assumes bounded accumulated asymmetric Bregman error (AAE) throughout, A-KL-BAPG guarantees absolute summability of the asymmetric errors over the entire iterate sequence, as shown in Lemma~\ref{lem:aae}.
\begin{lemma}
\label{lem:aae}
Let ${(\pi^t,w^t)}_{t\geq0}$ be the iterates generated by Algorithm~\ref{alg:adaptive-delayed-power}. Under the setting of Theorem \ref{thm:unconditional}, 
$\sum_{t=0}^{\infty}\bigl|D_h(\pi^{t+1},w^t)-D_h(w^t,\pi^{t+1})\bigr|
+
\bigl|D_h(w^{t+1},\pi^{t+1})
      -D_h(\pi^{t+1},w^{t+1})\bigr|
=O(\eta_t^3)<\infty$, i.e., AAE holds.
\end{lemma}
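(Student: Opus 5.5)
The strategy is to control each asymmetric Bregman error by a cubic power of a per-step log-ratio, and to sum that power using the step-size lower bound and the $\Delta(X)$ guard. Here $\eta_t$ is the per-step size of the log-ratio between consecutive split variables. For the KL kernel, a Taylor expansion in $\delta=\log(u/v)$ gives
\[
D_h(u,v)-D_h(v,u)=\sum_{ij} v_{ij}\bigl(e^{\delta_{ij}}(\delta_{ij}-2)+\delta_{ij}+2\bigr).
\]
The function $\phi(\delta)=e^{\delta}(\delta-2)+\delta+2$ has $\phi(0)=\phi'(0)=\phi''(0)=0$ and $\phi'''(0)=1$. Hence $|\phi(\delta)|\le C(\bar\delta)|\delta|^3$ uniformly for $|\delta|\le\bar\delta$. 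Because the entries of $v$ sum to one, each asymmetric error is at most $C\|\log u-\log v\|_\infty^3$. I would first record this elementary cubic bound, so the lemma reduces to summability of $\eta_t^3$ with $\eta_t:=\max\{\|\log\pi^{t+1}-\log w^t\|_\infty,\ \|\log w^{t+1}-\log\pi^{t+1}\|_\infty\}$.

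The second step bounds $\eta_t$ through the explicit updates. From \eqref{eq:row},
\[
\log\pi^{t+1}_{ij}-\log w^t_{ij}=\alpha_tG(w^t)_{ij}+\log\mu_i-\log\Bigl(\textstyle\sum_\ell w^t_{i\ell}e^{\alpha_tG(w^t)_{i\ell}}\Bigr).
\]
The exponential perturbation contributes at most $2\alpha_tM_G$. What remains is the residual row-marginal mismatch $|\log(\mu_i/(w^t\mathbf 1)_i)|$, since $w^t\in\mathcal C_2$ need not have row sums $\mu$. The column step is symmetric, with mismatch $|\log(\nu_j/((\pi^{t+1})^\top\mathbf 1)_j)|$. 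So $\eta_t\le 2\alpha_tM_G+\epsilon_t$, where $\epsilon_t$ is the log-marginal violation. Since $\alpha_t\le\bar\alpha_t\lesssim t^{-\beta}$ with $3\beta>2+3a/2>1$, the perturbation part contributes $\sum\alpha_t^3<\infty$.

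The main obstacle is showing $\sum_t\epsilon_t^3<\infty$, which requires a quantitative feasibility rate. For this I would use the Sinkhorn-type contraction from Appendix~\ref{app:proof_sinkhorn}. A row-then-column normalization contracts the log-marginal error in Hilbert's metric by the Birkhoff factor $\kappa_t=\tanh(\Gamma_t/4)$. The multiplicative gradient step then re-injects at most $O(\alpha_t M_G)$. This gives the recursion
\[
\epsilon_{t+1}\le\kappa_t\epsilon_t+c\,\alpha_t .
\]
Under the guard $\Delta(w^t)\le\Delta_0+a\log(t+1)$, we have $1-\kappa_t\gtrsim e^{-\Gamma_t/2}\gtrsim (t+1)^{-a/2}$. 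Unrolling the recursion with $\alpha_t\lesssim t^{-\beta}$ gives $\epsilon_t\lesssim t^{-(\beta-a/2)}$, up to a geometrically decaying initial term $\prod_s\kappa_s$, which is summable because $a/2<1$. The assumption $\beta>2/3+a/2$ is then exactly what makes $3(\beta-a/2)>2>1$, so $\sum\epsilon_t^3<\infty$.

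One technical point needs checking: the guard bounds only $\Delta(w)$, while the contraction argument also needs $\Delta(\pi^{t+1})$ and $\Delta(\widetilde z)$. I would bound these by $\Gamma_t+4\alpha_tM_G$ using Lemma~\ref{lem:projective-growth-cycle}, since row and column scalings leave $\Delta$ invariant. Finally, the bounded-$\delta$ requirement in the cubic Taylor bound follows because $\eta_t\to0$. Combining the three steps gives $\sum_t|\cdot|\le C\sum_t\eta_t^3<\infty$.
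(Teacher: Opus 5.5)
Your proposal is correct. Its feasibility half is essentially the paper's argument. You compare with the exact Sinkhorn cycle, use Birkhoff contraction with loss $\gtrsim (t+1)^{-a/2}$ under the guard, and add an $O(\alpha_t)$ re-injection from Lemma~\ref{lem:full-cycle-comparison}. This gives $d_H(r(w^t),\mathbf 1_n)=O((t+1)^{-(\beta-a/2)})$, as in Lemma~\ref{lem:polynomial-row-error}; you unroll the recursion where the paper inducts.

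You genuinely differ at the cubic step. The paper's Proposition~\ref{prop:cubic-main} writes $A_t^{\rm row}=\sum_i\zeta_i^t$ in terms of the row errors $e_i^t$, the perturbed row sums $r_i^t$ and the moments $a_i^t,b_i^t,u_i^t$. It Taylor-expands $\log(\mu_i/r_i^t)$ and $\mu_i/r_i^t$ and checks by hand that the first- and second-order terms cancel, leaving $O((|e_i^t|+\alpha_t)^3)$; the column case is declared analogous.

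Your identity $D_h(u,v)-D_h(v,u)=\sum_{ij}v_{ij}\phi(\log(u_{ij}/v_{ij}))$, with $\phi(\delta)=\delta^3/6+O(\delta^4)$, makes that cancellation a structural property of KL for any pair of positive matrices. It treats both half-steps identically and gives the explicit constant $e^{\bar\delta}(1+\bar\delta)/6$. The price is that you need sup-norm control of the log-ratios between consecutive split variables, rather than the $\ell_1$ marginal errors the paper already has from Proposition~\ref{prop:automatic-marginal-feasibility}. That control is immediate from $d_H(r(w^t),\mathbf 1_n)$, since $\min_i r_i\le 1\le\max_i r_i$.

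Tighten one step: the column mismatch of $\pi^{t+1}$ does not come from a separate symmetric recursion. Bound it using $\pi^{t+1}/\widehat\pi^{t+1}\in[e^{-2M_G\alpha_t},e^{2M_G\alpha_t}]$ and $((\widehat\pi^{t+1})^\top\mathbf 1_n)_j/\nu_j=(B_{w^t}r(w^t)^{-1})_j$. This gives at most $d_H(r(w^t),\mathbf 1_n)+2M_G\alpha_t$.

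Your worry about $\Delta(\pi^{t+1})$ also disappears if you contract the exact cycle $S(w^t)$, since $\Delta(A_{w^t})=\Delta(B_{w^t})=\Delta(w^t)$.

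Three small corrections:
\begin{itemize}
\item The step-size lower bound you invoke at the outset is not needed here; only $\alpha_t\le\bar\alpha_t$ and the guard are.
\item The transient term decays like $\exp(-ct^{1-a/2})$, not geometrically.
\item Rename your $\eta_t$, since the statement's $\eta_t$ is the paper's $(t+1)^{-(\beta-a/2)}$.
\end{itemize}
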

 While Theorem \ref{thm:unconditional} shows vanishing residuals and asymptotic convergence, to show the existence of a stationary accumulation point, we require the following assumption.
\begin{assumption}\label{assum:support}
For every $\{t_\ell\}$ such that $\|R^{t_\ell}\|_{M_{t_\ell}}\to0$ and $x^{t_\ell+1}=\mathcal P_{\mathcal C}(\pi^{t_\ell+1})\to\bar x$, define the active support of $\bar x$ as $S(\bar x):=\{(i,j):\bar x_{ij}>0\}$. Then for every $(i,j)\notin S(\bar x)$,
 $\liminf_{\ell\to\infty}R^{t_\ell}_{ij}\ge 0$.
\end{assumption}

\begin{wrapfigure}[13]{H}{0.4\textwidth}
    \centering
    \vspace{-5pt}
    \includegraphics[width=0.93\linewidth]{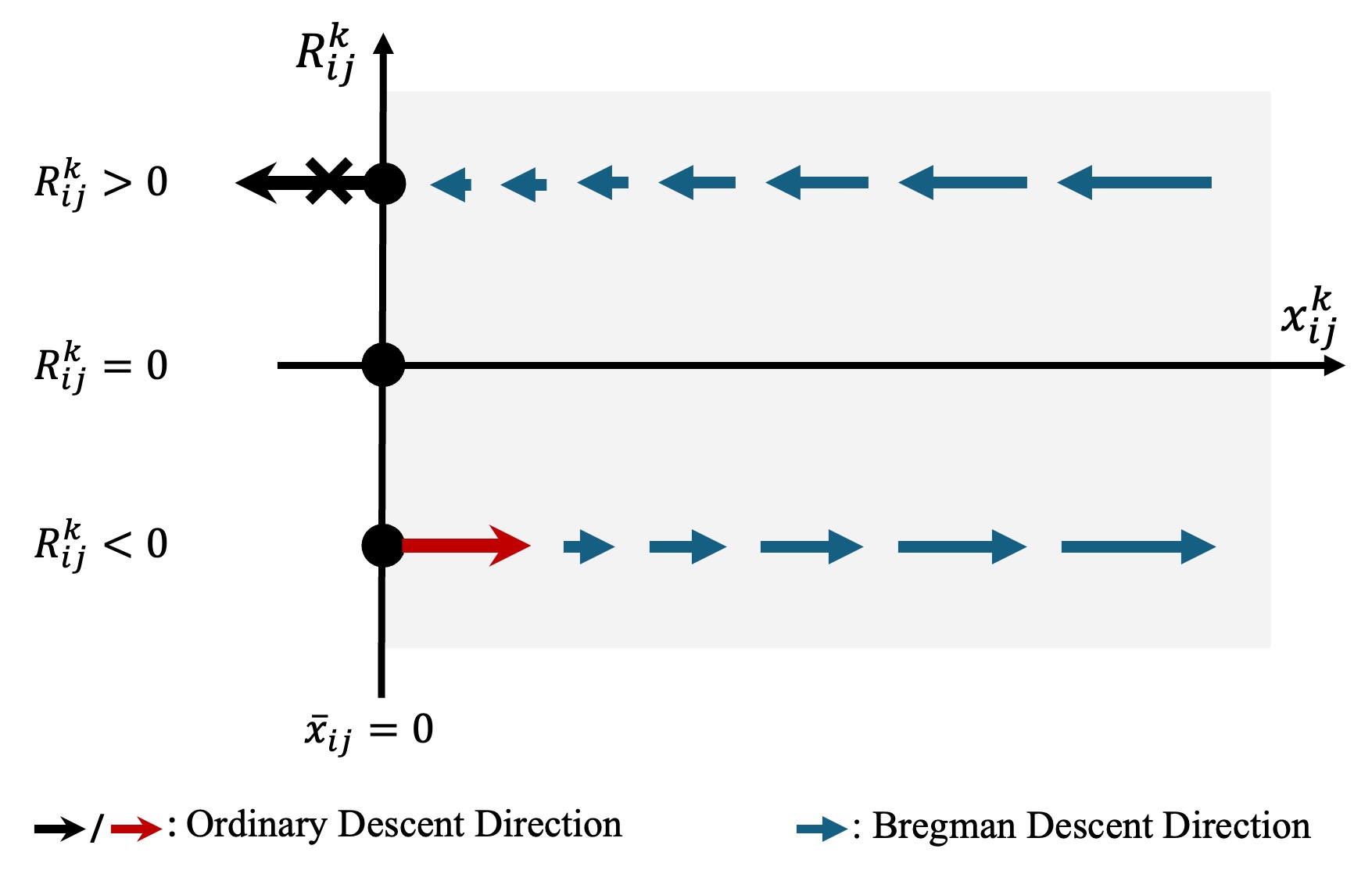}
    \vspace{-10pt}
    \caption{\small Boundary effect of the Descent. At an inactive entry, an outward descent direction is blocked (black), while an inward direction might remain feasible (red).}
    \label{fig:assum_1}
\end{wrapfigure}
%\begin{remark}
Assumption~\ref{assum:support} is needed only to pass from the weighted corrected residual to stationarity at a boundary point.  
As illustrated in Figure~\ref{fig:assum_1}, on active entries, the weights in $M_t$ stay positive and force $R^t_{ij}\to0$. On an inactive entry, the weight may vanish: $R^t_{ij}>0$ corresponds to an outward descent direction blocked by nonnegativity, while $R^t_{ij}<0$ reveals an inward feasible descent direction (similar situations discussed in \citet{chen2024spurious}).  
Related boundary regularity conditions are standard in constrained optimization, including nondegeneracy and strict complementarity in active-set analyses \citep{burke1988identification,liang2017activity}, and transversality conditions used to control limiting multipliers \citep{lewis2011identifying}. Our assumption is milder, requiring only that the limiting residuals be nonnegative at inactive entries. In particular, Assumption~\ref{assum:support} holds if entries vanishing at the limit do not increase to first order in relative terms along the subsequence, as shown in Proposition~\ref{cor:mono-suff},  which is the main building block for our second main result on the stationarity of the accumulation points.
%\end{remark}
\begin{proposition}[A posteriori sufficient condition]
\label{cor:mono-suff} 
Let $\{t_\ell\}$ be subsequence with $x^{t_\ell+1}\to\bar x$.  Under the setting of Theorem \ref{thm:unconditional}, Assumption~\ref{assum:support} holds along this subsequence if for every $(i,j)\notin S(\bar x)$,
\[\liminf_{\ell\to\infty}
\rho_{t_\ell}
\log \frac{w^{t_\ell}_{ij}}{w^{t_\ell+1}_{ij}}
\geq 0.\]
In particular, it suffices that
$w^{t_\ell+1}_{ij}\leq w^{t\ell}_{ij}(1+\varepsilon_\ell^{ij})$
with $\varepsilon_\ell^{ij}=o(\alpha_{t_\ell})$. Thus, eventual nonincrease of every inactive entry along the subsequence is sufficient.
\end{proposition}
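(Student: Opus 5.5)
We decompose $R^{t}_{ij}$ entrywise into the logarithmic term, which the hypothesis controls, and the gradient correction, which we show vanishes. With $\nabla h=\log$,
\[
R^{t}_{ij}=\rho_t\log\frac{w^{t}_{ij}}{w^{t+1}_{ij}}+\bigl(G(\pi^{t+1})-G(w^{t})\bigr)_{ij}.
\]
Fix a subsequence $\{t_\ell\}$ with $x^{t_\ell+1}\to\bar x$, and an inactive entry $(i,j)\notin S(\bar x)$. Assumption~\ref{assum:support} only concerns subsequences along which, in addition, $\|R^{t_\ell}\|_{M_{t_\ell}}\to 0$. Since a liminf over a further subsequence is at least the liminf over the original one, it suffices to prove
\[
\liminf_{\ell\to\infty}R^{t_\ell}_{ij}\ge 0
\]
along $\{t_\ell\}$ itself. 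Because $G$ is linear, hence Lipschitz in Frobenius norm with constant $\|D_X\|\,\|D_Y\|$, we have
\[
\|G(\pi^{t+1})-G(w^t)\|_\infty\le C\,\|\pi^{t+1}-w^t\|_F.
\]
The lower bound on $R^{t_\ell}_{ij}$ then reduces to the hypothesis plus the fact that $\|\pi^{t+1}-w^t\|_F\to 0$.

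To show $\|\pi^{t+1}-w^{t}\|_F\to0$, we use the triangle inequality
\[
\|\pi^{t+1}-w^t\|_F\le\|\pi^{t+1}-w^{t+1}\|_F+\|w^{t+1}-w^t\|_F.
\]
The first term vanishes by Theorem~\ref{thm:unconditional}(2). For the second term, the row update \eqref{eq:row} makes $\pi^{t+1}$ the $\mathcal C_1$-KL projection of $w^t\odot\exp(\alpha_tG(w^t))$. Since $w^t\in\mathcal C_2$ but is only asymptotically in $\mathcal C_1$, we cannot compare directly. Instead we use the explicit form: each entry is multiplied by $\exp(\alpha_t G)$ with $|\alpha_tG|\le\alpha_t M_G\to0$ (because $\alpha_t\le\bar\alpha_t\to0$), and then normalized. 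The row-normalization factor equals $\mu_i/\bigl(\sum_\ell w^t_{i\ell}\bigr)$ up to a factor $e^{O(\alpha_t)}$. Its deviation from $1$ is controlled by the row-marginal error of $w^t$, which tends to $0$ because $\|w^t-\pi^t\|_F\to0$ and $\pi^t\in\mathcal C_1$. Hence
\[
\|\pi^{t+1}-w^t\|_1\lesssim \alpha_tM_G+\|w^t\mathbf 1-\mu\|_1\to0.
\]

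For the sufficient condition: if $w^{t_\ell+1}_{ij}\le w^{t_\ell}_{ij}(1+\varepsilon_\ell^{ij})$, then
\[
\rho_{t_\ell}\log\frac{w^{t_\ell}_{ij}}{w^{t_\ell+1}_{ij}}\ge-\rho_{t_\ell}\log(1+\varepsilon^{ij}_\ell)\ge-\frac{\varepsilon^{ij}_\ell}{\alpha_{t_\ell}}\to0,
\]
using $\varepsilon_\ell^{ij}=o(\alpha_{t_\ell})$. This recovers the first condition, and eventual nonincrease is the case $\varepsilon=0$.

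The main obstacle is the second step. It requires the gradient correction to vanish in the unweighted sense, since at inactive entries the weights $m^t_{ij}$ may degenerate and give no control. The multiplicative-perturbation estimate above handles this, provided Theorem~\ref{thm:unconditional}(2) is used to transfer the row-marginal feasibility of $\pi^t$ to $w^t$.
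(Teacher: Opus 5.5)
Your proof is correct and follows the paper's argument: you split $R_{t,ij}$ into $\rho_t\log(w^t_{ij}/w^{t+1}_{ij})$ plus the gradient correction, kill the correction using linearity of $G$ and $\|\pi^{t+1}-w^t\|\to 0$, and handle the sufficient condition with $\log(1+\varepsilon)\le\varepsilon$. The only difference is that the paper gets $\|\pi^{t+1}-w^t\|_1\le\sqrt{2D_h(\pi^{t+1},w^t)}=O(\eta_t)$ directly from Proposition~\ref{prop:automatic-marginal-feasibility} and Pinsker, whereas you rederive a qualitative $o(1)$ version from the explicit update and Theorem~\ref{thm:unconditional}(2); your triangle inequality through $w^{t+1}$ is superfluous, since your multiplicative-perturbation estimate already bounds $\|\pi^{t+1}-w^t\|_1$ itself.
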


\begin{theorem}[Stationary accumulation point]\label{thm:stationarity}
Consider the GW Problem \ref{main_problem} and the A-KL-BAPG iterates ${(\pi^t,w^t)}_{t\geq0}$ generated by Algorithm~\ref{alg:adaptive-delayed-power} under the conditions of Theorem~\ref{thm:unconditional}. Let Assumption~\ref{assum:support} hold. Then every residual-vanishing subsequence has a limit $\bar x\in\mathcal C$ satisfying $0\in\nabla F(\bar x)+N_{\mathcal C}(\bar x)$. In particular, the iterates admit at least one stationary accumulation point.
\end{theorem}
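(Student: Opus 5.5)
Starting from Theorem~\ref{thm:unconditional}(1), we fix a subsequence $\{t_\ell\}$ with $\|R_{t_\ell}\|_{M_{t_\ell}}\to0$. The iterates $x^{t}$ lie in the compact set $\mathcal C$, so after refining we may assume $x^{t_\ell+1}\to\bar x\in\mathcal C$. Theorem~\ref{thm:unconditional}(2) then gives $\pi^{t_\ell+1}\to\bar x$ and $w^{t_\ell+1}\to\bar x$. Since $\pi^{t+1}-w^{t+1}\to0$ and the accepted steps satisfy $\alpha_t\to0$, each cycle moves $w$ by only $O(\alpha_t)$ multiplicatively, so $w^{t_\ell}\to\bar x$ as well. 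Hence $G(\pi^{t_\ell+1})-G(w^{t_\ell})\to0$ and $G(w^{t_\ell})\to G(\bar x)$.

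Next we read off multipliers from the closed-form updates \eqref{eq:row}--\eqref{eq:col}. Taking logarithms of the two normalizations and summing gives
\[
\log w^{t+1}_{ij}=\log w^t_{ij}+\alpha_t\bigl(G(w^t)_{ij}+G(\pi^{t+1})_{ij}\bigr)+a^t_i+b^t_j
\]
for row scalars $a^t_i$ and column scalars $b^t_j$. Multiplying by $\rho_t$ and rearranging,
\[
R_t = 2G(\pi^{t+1})+\rho_t a^t\mathbf 1^\top+\rho_t\mathbf 1 (b^t)^\top .
\]
Since $\nabla F=-2G$, this says $R_t-\nabla F(\pi^{t+1})\cdot(-1)$ lies in the span of row and column constants, and these span exactly the linear part of $N_{\mathcal C}$ coming from the equality constraints.

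On the active support $S(\bar x)$, the weights satisfy $m^{t_\ell}_{ij}\to\bar x_{ij}>0$, so $R^{t_\ell}_{ij}\to0$ there. The remaining step is to pass to the limit in $u^\ell_i+v^\ell_j=R^{t_\ell}_{ij}-2G(\pi^{t_\ell+1})_{ij}$ with $u^\ell=\rho a$ and $v^\ell=\rho b$. On $S(\bar x)$ the right-hand side converges to $-2G(\bar x)_{ij}$, but the dual pair $(u^\ell,v^\ell)$ could still be unbounded. To handle this we use the bipartite support graph of $\bar x$. Every row and every column of $\bar x$ has a support entry because $\mu,\nu>0$. On each connected component, after removing the one-dimensional gauge $u+c,\,v-c$, the values $u_i+v_j$ on the edges determine the potentials and are bounded. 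We choose the gauge component by component so that the potentials converge to some $(\bar u,\bar v)$ with $\bar u_i+\bar v_j=-2G(\bar x)_{ij}$ on $S(\bar x)$.

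Off the support, the sum $u_i+v_j$ for $i,j$ in different components also depends on the gauge shifts. This is where Assumption~\ref{assum:support} enters: $\liminf R^{t_\ell}_{ij}\ge0$ gives $\liminf(u^\ell_i+v^\ell_j+2G_{ij})\ge0$. I expect this step to be the main obstacle. The plan is to use the actual, gauge-fixed $(u^\ell,v^\ell)$ without renormalizing, so the inequality transfers. We bound them through a subsequence in the extended reals, and argue that any divergence of a component shift would force the limit inequality to hold trivially. If needed, we add a finite constant to $\bar u$ or $\bar v$, which is valid because $\sum_j$ and $\sum_i$ feasibility is preserved. This gives $\bar u_i+\bar v_j+2G(\bar x)_{ij}\ge0$ for $(i,j)\notin S$.

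Setting $\Lambda_{ij}:=\bar u_i+\bar v_j+2G(\bar x)_{ij}\ge0$, with $\Lambda=0$ on $S(\bar x)$, we get $\nabla F(\bar x)=-2G(\bar x)=\bar u\mathbf 1^\top+\mathbf 1\bar v^\top-\Lambda$. Here complementary slackness $\Lambda\odot\bar x=0$ holds. This is precisely the KKT characterization $0\in\nabla F(\bar x)+N_{\mathcal C}(\bar x)$ for the polyhedral set $\mathcal C$, so $\bar x\in\mathcal X$. The existence of a stationary accumulation point then follows from the existence of such a subsequence.
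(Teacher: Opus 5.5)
Your architecture (support graph, gauge-fixing on each component, Assumption~\ref{assum:support} to control the inter-component offsets) matches the paper's. However, two steps fail as written.

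\textbf{The multiplier identity has the wrong sign.} You computed with $\nabla h(w^{t+1})-\nabla h(w^t)$ instead of $\nabla h(w^t)-\nabla h(w^{t+1})$. From your own log-update, $\rho_t(\log w^t-\log w^{t+1})=-G(w^t)-G(\pi^{t+1})-\rho_t a^t\mathbf 1^\top-\rho_t\mathbf 1(b^t)^\top$, so $R_t=-2G(w^t)-\rho_t a^t\mathbf 1^\top-\rho_t\mathbf 1(b^t)^\top$. This is the paper's $R_t=-2G(w^t)+p^{t+1}+q^{t+1}$.

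The sign is not cosmetic. $\mathcal N_{\mathcal C}(\bar x)=\{u\mathbf 1^\top+\mathbf 1v^\top-\Lambda:\Lambda\ge 0,\ \Lambda\odot\bar x=0\}$ is a cone, not a subspace. Your final display $\nabla F(\bar x)=\bar u\mathbf 1^\top+\mathbf 1\bar v^\top-\Lambda$ therefore asserts $\nabla F(\bar x)\in\mathcal N_{\mathcal C}(\bar x)$, i.e.\ $0\in-\nabla F(\bar x)+\mathcal N_{\mathcal C}(\bar x)$. That is first-order stationarity for \emph{maximizing} $F$, not the claimed condition.

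With the corrected identity, set $p^t:=-\rho_t a^t$ and $q^t:=-\rho_t b^t$. The same limit passage then gives $2G(\bar x)=\bar p\mathbf 1^\top+\mathbf 1\bar q^\top-\bar R$, with $\bar R\ge 0$ vanishing on $S(\bar x)$. That is exactly $-\nabla F(\bar x)\in\mathcal N_{\mathcal C}(\bar x)$.

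\textbf{The inter-component step is not proved, and the proposed mechanism is wrong.} You correctly flag this step as the crux. Suppose a relative offset $c_a^\ell-c_b^\ell$ diverged to $+\infty$. The off-support sums for a row of $C_a$ and a column of $C_b$ would tend to $+\infty$, but those for a row of $C_b$ and a column of $C_a$ would tend to $-\infty$, violating Assumption~\ref{assum:support}. So divergence never makes the inequalities ``hold trivially.''

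Nor can you ``add a finite constant to $\bar u$ or $\bar v$.'' Shifting $\bar u$ alone changes every $\bar u_i+\bar v_j$ and destroys the on-support equalities. Component-wise shifts would require a proof that a consistent choice of offsets exists, which you do not give.

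The missing idea, which is how the paper proceeds, is to use the assumption in both directions:
\begin{itemize}
\item A row of $C_a$ and a column of $C_b$ (with $a\neq b$) always form an off-support pair, since an edge between them would merge the components.
\item The bounded gauge-fixed potentials together with $\liminf_\ell R^{t_\ell}_{ij}\ge 0$ then bound $c_a^\ell-c_b^\ell$ from below. Swapping the roles of $C_a$ and $C_b$ bounds it from above.
\item One global normalization $u\mapsto u+c\mathbf 1$, $v\mapsto v-c\mathbf 1$ leaves every $u_i+v_j$, hence $R_t$, unchanged. After it, the actual potentials are bounded.
\item Along a further subsequence, both the equalities on $S(\bar x)$ and the inequalities off it pass to the limit.
\end{itemize}

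\textbf{Minor point.} Within a cycle the normalization factors are controlled by the marginal errors (Proposition~\ref{prop:automatic-marginal-feasibility}), not by $\alpha_t$, so ``moves $w$ by only $O(\alpha_t)$'' is inaccurate. The conclusion $w^{t_\ell}\to\bar x$ still holds, e.g.\ via $D_h(w^t,w^{t+1})\to 0$ from Proposition~\ref{prop:lyap-main} and Pinsker's inequality, as in the paper.
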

Theorem~\ref{thm:stationarity} shows that projected subsequences and the two alternating sequences have the same accumulation points. Compared with KL-BAPG \citep{li2023convergent}, our result strengthens the stationarity conclusion. Under bounded accumulated asymmetric Bregman error, KL-BAPG shows that every accumulation point $(\pi^\star,w^\star)$ belongs to the relaxed BAPG fixed-point set, with $\|\pi^\star-w^\star\|=O(\rho^{-1})$ and \[\operatorname{dist}\left(\frac{\pi^\star+w^\star}{2},\mathcal X\right) =O(\rho^{-1}).\] Thus, KL-BAPG characterizes every accumulation point of the relaxed problem, whereas our result establishes exact subsequence convergence to a stationary point of the original GW problem.

The proof in Section~\ref{sec:conv-analysis} uses the accepted-step lower bound generated by the projective-diameter guard. One might hope to replace it by a lower bound with the same power as the delayed proposal, i.e., $\alpha_t\geq c(t+1)^{-\beta}$ for some $c>0$ and $\beta<1$. However, the next proposition shows that this stronger lower bound cannot be guaranteed from the logarithmic guard alone.
\begin{proposition}[Impossibility of a Universal Power Lower Bound]
\label{prop:no-universal-power-lower-bound}
Under the conditions of Theorem \ref{thm:unconditional}, there exists a valid $2\times2$ instance, constructed in Appendix \ref{app:proof_lower_bound}, such that
any guarded tail of A-KL-BAPG on this instance satisfies $\sum_{s=0}^{T-1}\alpha_s=O(\log T)$.
Consequently, no theorem using only the projective-diameter guard can guarantee an eventual lower bound $\alpha_t\geq c(t+1)^{-\beta}$ for any $c>0$ and any $\beta<1$.
\end{proposition}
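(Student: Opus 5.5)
The plan is to build a $2\times 2$ instance on which every accepted tail cycle raises a signed version of the projective diameter by an amount proportional to the accepted step. The guard $\Delta(w^{t+1})\le \Delta_0+a\log(t+2)$ then directly caps the cumulative step. The key tool is the log cross-ratio $L(X):=\log\frac{X_{11}X_{22}}{X_{12}X_{21}}$ of a positive $2\times2$ matrix. Taking $(i,i',j,j')$ ranging over both orderings in the definition of $\Delta$ gives $\Delta(X)=|L(X)|\ge L(X)$.

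First, I will show that $L$ is invariant under the row and column normalizations in \eqref{eq:row}--\eqref{eq:col}. Multiplying row $i$ by $\mu_i/\sum_\ell z_{1,i\ell}$, or column $j$ by $\nu_j/\sum_\ell z_{2,\ell j}$, cancels in the cross-ratio. The multiplicative step $X\mapsto X\odot\exp(\alpha G(Y))$, by contrast, shifts $L$ by exactly $\alpha\,\kappa(Y)$, where
\[
\kappa(Y):=G_{11}(Y)+G_{22}(Y)-G_{12}(Y)-G_{21}(Y)=e^\top D_X Y D_Y e,\qquad e=(1,-1)^\top .
\]
Hence one guarded cycle with accepted step $\alpha_t$ gives
\[
L(w^{t+1})=L(w^t)+\alpha_t\bigl(\kappa(w^t)+\kappa(\pi^{t+1})\bigr).
\]
This identity holds for every trial step, so it applies to whatever step the backtracking accepts.

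Next, I choose the data so that $\kappa$ is constant on $\mathcal C_1\cup\mathcal C_2$. Take $D_X=D_Y=\operatorname{diag}(1,-1)$, which is symmetric and finite, together with uniform (or any strictly positive) marginals $\mu,\nu$. Then $D_Xe=D_Ye=\mathbf 1_2$, so $\kappa(Y)=\mathbf 1^\top Y\mathbf 1=1$ for every $Y$ of unit mass. Both $w^t\in\mathcal C_2$ and $\pi^{t+1}\in\mathcal C_1$ have unit mass, so each cycle increases $L$ by exactly $2\alpha_t$. Here $M_G=1$, and every parameter choice allowed in Theorem~\ref{thm:unconditional} is admissible, so this is a valid instance.

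Summing the per-cycle identity and applying the accepted guard at step $T$ gives
\[
L(w^0)+2\sum_{s=0}^{T-1}\alpha_s=L(w^T)\le\Delta(w^T)\le\Delta_0+a\log(T+1).
\]
Therefore $\sum_{s<T}\alpha_s\le\tfrac12\bigl(\Delta_0-L(w^0)+a\log(T+1)\bigr)=O(\log T)$, with $\Delta_0=|L(w^0)|$. This holds for any burn-in output, any positive initialization and any backtracking outcome. For the consequence, suppose $\alpha_t\ge c(t+1)^{-\beta}$ for all $t\ge t_1$, with $c>0$ and $\beta<1$. Then $\sum_{s<T}\alpha_s\gtrsim \frac{c}{1-\beta}T^{1-\beta}$, which grows faster than $\log T$, a contradiction.

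The main obstacle is the instance design: making the per-cycle growth of $L$ independent of the state. The identity $\kappa(Y)=e^\top D_XYD_Ye$ with $D_Xe=D_Ye=\mathbf 1$ reduces the growth to the total mass, which is fixed on both marginal sets. The remaining checks, namely the invariance of $L$ under scaling and $\Delta=|L|$ for $2\times2$ matrices, are routine.
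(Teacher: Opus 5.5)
Your proof is correct, and it uses a genuinely different instance from the paper's.

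\textbf{The paper's route.} It takes $\mu=\nu=(\tfrac{1}{2},\tfrac{1}{2})^\top$ and $D_X=D_Y=J=\begin{pmatrix}0&1\\1&0\end{pmatrix}$, and restricts to the symmetric family $W(u)$, $u\in(\tfrac{1}{4},\tfrac{1}{2})$. It checks that the cycle maps this family into itself with $u_t$ nondecreasing, and then bounds the per-cycle growth of $\Delta$ from below by $2\alpha_t(4u_t-1)\ge c_0\alpha_t$.

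\textbf{Your route.} You choose $D_Xe=D_Ye=\mathbf 1$ with $e=(1,-1)^\top$. The cross-ratio drift $e^\top G(Y)e=(D_Xe)^\top Y(D_Ye)$ is then the total mass of $Y$, hence identically $1$ on $\mathcal C_1\cup\mathcal C_2$. This gives the exact identity $L(w^{t+1})=L(w^t)+2\alpha_t$. Comparing the signed quantity $L\le|L|=\Delta$ against the guard removes any need for an invariant family, a monotonicity argument, or a sign condition on the starting cross-ratio.

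\textbf{What each buys.} Your bound holds for every positive initialization, every burn-in length, and all positive marginals. The paper's argument, by contrast, needs the tail to start at some $W(u_0)$ with $u_0>\tfrac{1}{4}$. At $u_0=\tfrac{1}{4}$, i.e.\ the product coupling $\mu\nu^\top$, the paper's instance has zero drift, the guard never binds, and $\alpha_t=\bar\alpha_t$. In the paper's favour, its instance is a natural nonnegative graph input, while $\operatorname{diag}(1,-1)$ has a negative entry. This is allowed, since Theorem~\ref{thm:unconditional} only requires symmetry. Moreover, adding $c\mathbf 1\mathbf 1^\top$ preserves $De=\mathbf 1$, so for example $D_X=D_Y=\begin{pmatrix}2&1\\1&0\end{pmatrix}$ is a nonnegative version of your construction.

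(One minor edge case: if $k_0=0$ and $w^0\notin\mathcal C_2$, only the first increment changes, to $\alpha_0(1+\mathbf 1^\top w^0\mathbf 1)$. This does not affect the $O(\log T)$ conclusion.)
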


\section{Convergence Analysis} \label{sec:conv-analysis}
We now establish Theorem~\ref{thm:stationarity}. We first introduce some notation and then present auxiliary propositions that will help us complete the proof.
First, define  $\eta_t:=(t+1)^{-(\beta-a/2)}$.
We note that the initial point in the following proof is $(\pi^{k_0},w^{k_0})$, not the original initialization. 
For $W\in\mathcal{C}_2$, let $r(W)=\frac{1}{\mu}(W\mathbf 1_m)$ so $r(W)=\mathbf 1_n$ when $W\in\mathcal C$. 

\begin{proposition}\label{prop:automatic-marginal-feasibility}
Let $\{(\pi^t,w^t)\}_{t\geq0}$ be the tail iterates generated by Algorithm~\ref{alg:adaptive-delayed-power}. Under the conditions of Theorem \ref{thm:unconditional},  $\|w^t\mathbf 1_m-\mu\|_1 = O(\eta_t)$ and $\|(\pi^{t+1})^\top\mathbf 1_n-\nu\|_1=O(\eta_t)$.
Moreover, for all sufficiently large~$t$,
$D_h(\pi^{t+1},w^t) = O(\eta_t^2)$
and $D_h(\pi^{t+1},w^{t+1}) = O(\eta_t^2)$.
\end{proposition}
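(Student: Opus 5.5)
The approach is to control the one-cycle perturbation of the marginals in terms of the accepted step $\alpha_t$ and the current projective diameter $\Delta(w^t)\le\Gamma_t$. The contraction of Sinkhorn-type normalizations then gives the $O(\eta_t)$ rate.

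\textbf{Step 1: the row marginal of $w^t$.} Fix $t$ and write $w^t$, which lies in $\mathcal C_2$. The update $\pi^{t+1}=\operatorname{diag}(\mu/(\widetilde z_1\mathbf 1))\,\widetilde z_1$ with $\widetilde z_1=w^t\odot e^{\alpha_tG(w^t)}$ differs from the exact row projection of $w^t$ only through the factor $e^{\alpha_tG}$. Since $\|G\|_\infty\le M_G$, this factor lies in $[e^{-\alpha_tM_G},e^{\alpha_tM_G}]$ entrywise.

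Then $w^{t+1}$ is a column normalization of $\pi^{t+1}\odot e^{\alpha_tG(\pi^{t+1})}$. I would therefore view $w^{t+1}$ as one Sinkhorn half-cycle applied to $w^t$, with a multiplicative perturbation of size $e^{\pm 2\alpha_tM_G}$.

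Next I would track the row-scaling error $r(w^t)$ in Hilbert's metric against $\mathbf 1_n$. By Birkhoff--Hopf (Lemma on projective contraction in Appendix~\ref{app:proof_sinkhorn}), a full row-then-column normalization contracts $d_H(r(\cdot),\mathbf 1)$ by the factor $\lambda_t=\tanh(\Delta(w^t)/4)^2$. This gives
\[
d_H\bigl(r(w^{t+1}),\mathbf 1\bigr)\le \lambda_t\, d_H\bigl(r(w^t),\mathbf 1\bigr)+C\alpha_t M_G .
\]

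\textbf{Step 2: unrolling the recursion.} Using $\Delta(w^t)\le\Gamma_t=\Delta_0+a\log(t+1)$, one gets $1-\lambda_t\ge c\,e^{-\Gamma_t/2}\ge c'(t+1)^{-a/2}$. The accepted steps satisfy $\alpha_t\le\bar\alpha_t\lesssim (t+1)^{-\beta}$.

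Unrolling the recursion with contraction gap $\gamma_t\asymp(t+1)^{-a/2}$ and forcing $\alpha_t$ gives a quasi-stationary bound $d_H(r(w^t),\mathbf 1)\lesssim \alpha_t/\gamma_t\asymp (t+1)^{-(\beta-a/2)}=\eta_t$. This is a standard discrete Gronwall argument: since $\sum\gamma_t=\infty$ and the ratio $\alpha_t/\gamma_t$ is slowly varying, the initial error decays faster than $\eta_t$. Converting from $d_H$ to $\ell_1$ via $\|\mu\odot(r-\mathbf 1)\|_1\le e^{d_H}-1$ (mass is normalized) yields $\|w^t\mathbf 1-\mu\|_1=O(\eta_t)$. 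The column bound for $\pi^{t+1}$ follows symmetrically, by one more half-step with perturbation $O(\alpha_t)$.

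\textbf{Step 3: the divergence bounds.} $\pi^{t+1}$ equals $w^t$ rescaled row-wise by the factor $\mu/(w^t\mathbf 1)$ and perturbed by $e^{O(\alpha_t)}$. Hence $\log(\pi^{t+1}_{ij}/w^t_{ij})=O(\eta_t+\alpha_t)=O(\eta_t)$ uniformly. Since $D_h(u,v)\le \sum u\,\phi(\log(u/v))$ with $\phi(s)=O(s^2)$ for small $s$, and both matrices have mass $O(1)$, we get $D_h(\pi^{t+1},w^t)=O(\eta_t^2)$. The same argument applied to the column rescaling gives $D_h(\pi^{t+1},w^{t+1})=O(\eta_t^2)$.

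\textbf{Main obstacle.} The hard step is Step 1. The perturbation is state-dependent, and the contraction coefficient degrades with $t$, so the growth of $\Gamma_t$ and the decay of $\alpha_t$ must be balanced carefully. This is exactly why the condition $\beta>a/2$ (indeed $\beta>2/3+a/2$) appears. I would first try to prove the perturbed Birkhoff inequality cleanly on $\log r$, using $\|\log e^{\alpha G}\|_\infty\le\alpha M_G$ and the fact that Hilbert's metric is invariant under positive diagonal scaling.
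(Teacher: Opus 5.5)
Your proposal is correct. Steps 1--2 follow the paper's argument. The paper compares $w^{t+1}$ with the exact cycle $S(w^t)$ (Lemma~\ref{lem:full-cycle-comparison}, constant $12M_G$). It then applies the squared Birkhoff factor of Lemma~\ref{lem:sinkhorn-projective-contraction} with the guard-induced gap $c_\Delta(t+1)^{-p}$, and closes the recursion by an explicit induction on $K(t+1)^{-(\beta-p)}$ (Lemma~\ref{lem:polynomial-row-error}), before the same $e^{d_H}-1$ conversion to $\ell_1$. Two points to make explicit when you write out your Gronwall step:
\begin{itemize}
\item Use the envelope $c_+(t+1)^{-\beta}$ from Lemma~\ref{lem:guarded-backtracking} as the forcing term. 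The backtracked $\alpha_t$ is only nonincreasing and can drop abruptly, so it is not itself slowly varying.
\item The induction closes because $p=a/2<1$, so the gap $(t+1)^{-p}$ dominates the relative decrement $\approx(\beta-p)/(t+1)$ of $\eta_t$.
\end{itemize}

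Step 3 is where you genuinely differ. The paper expands $D_h(\pi^{t+1},w^t)$ by the three-point identity around $z_1^t$. It evaluates $D_h(\pi^{t+1},z_1^t)$ exactly via Lemma~\ref{lem:exact-projection-identities} and Taylor-bounds it by $O(\|z_1^t\mathbf 1_m-\mu\|_1^2)$. It bounds $D_h(z_1^t,w^t)=O(\alpha_t^2)$ and the cross term by $O(\alpha_t\eta_t)$, then repeats the argument around $z_2^t$.

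You instead use the sup-log control the Hilbert metric already gives. Since $\min_i r_i(w^t)\le 1\le\max_i r_i(w^t)$, you get $|\log(\pi^{t+1}_{ij}/w^t_{ij})|\le d_H(r(w^t),\mathbf 1_n)+2M_G\alpha_t$. The exact formula $D_h(u,v)=\sum_{ij}u_{ij}\,\phi(\log(u_{ij}/v_{ij}))$ with $\phi(s)=s-1+e^{-s}=O(s^2)$, together with unit mass, then finishes. The column version needs the column ratio of $\pi^{t+1}$ in the same sup-log form. This follows from its $\ell_1$ bound and $\nu_{\min}>0$ for large $t$, as in the paper's bound on $\log(u_j^t/\nu_j)$.

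Your route is shorter and yields more:
\begin{itemize}
\item The same log-ratio bound gives $\|\pi^{t+1}-w^t\|_1=O(\eta_t)$, and hence the column marginal of $\pi^{t+1}$, directly.
\item It makes Proposition~\ref{prop:cubic-main} nearly immediate. With $s_{ij}=\log(u_{ij}/v_{ij})$ one has $D_h(u,v)-D_h(v,u)=\sum_{ij}v_{ij}\bigl(s_{ij}e^{s_{ij}}-2e^{s_{ij}}+s_{ij}+2\bigr)$, and the bracket is $s_{ij}^3/6+O(s_{ij}^4)$.
\end{itemize}
The paper's decomposition, in exchange, works from $\ell_1$ marginal errors alone (plus $\mu_{\min},\nu_{\min}>0$). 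It also keeps the projection and gradient contributions separate with explicit constants.
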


Since $D_h$ is not symmetric, the descent analysis generates the asymmetry terms
$A_t^{\rm row} =D_h(\pi^{t+1},w^t)-D_h(w^t,\pi^{t+1})$ and
$A_t^{\rm col}=D_h(w^{t+1},\pi^{t+1})-D_h(\pi^{t+1},w^{t+1})$ with the total KL asymmetry $A_t=A_t^{\rm row}+A_t^{\rm col}$. 

\begin{proposition}\label{prop:cubic-main}
Let $\{(\pi^t,w^t)\}_{t\geq0}$ be the tail iterates generated by Algorithm~\ref{alg:adaptive-delayed-power}.  Under the conditions of Theorem \ref{thm:unconditional}, for all sufficiently large $t$,
$|A_t^{\rm row}|+|A_t^{\rm col}|=O(\eta_t^3)$
and $|A_t|=O(\eta_t^3)$.
\end{proposition}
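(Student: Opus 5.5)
The plan is to exploit the fact that the KL asymmetry of two nearby positive matrices is a third-order quantity in their log-ratio, and to feed in the second-order closeness from Proposition~\ref{prop:automatic-marginal-feasibility}. Write $u=\pi^{t+1}$, $v=w^t$ and $\delta_{ij}:=\log(u_{ij}/v_{ij})$. A direct computation gives
\[
D_h(u,v)-D_h(v,u)=\sum_{ij} v_{ij}\bigl[(e^{\delta_{ij}}+1)\delta_{ij}-2(e^{\delta_{ij}}-1)\bigr].
\]
The scalar function $\phi(\delta)=(e^\delta+1)\delta-2(e^\delta-1)$ satisfies $\phi(0)=\phi'(0)=\phi''(0)=0$ and $\phi'''(0)=1$. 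Hence $|\phi(\delta)|\le C|\delta|^3$ for $|\delta|\le 1$, while $\psi(\delta):=\delta e^\delta-e^\delta+1\ge c\,\delta^2$ on the same range. This yields
\[
|A_t^{\rm row}|\le C\sum_{ij}v_{ij}|\delta_{ij}|^3\le C\max_{ij}|\delta_{ij}|\sum_{ij}v_{ij}\delta_{ij}^2\le C'\max_{ij}|\delta_{ij}|\;D_h(u,v),
\]
since $D_h(u,v)=\sum_{ij}v_{ij}\psi(\delta_{ij})$. The column term $A_t^{\rm col}$ is handled in exactly the same way with $u=\pi^{t+1}$ and $v=w^{t+1}$.

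The argument therefore needs two inputs. The first is $D_h(\pi^{t+1},w^t)=O(\eta_t^2)$ and $D_h(\pi^{t+1},w^{t+1})=O(\eta_t^2)$, which Proposition~\ref{prop:automatic-marginal-feasibility} supplies for large $t$. The second is a uniform sup-norm bound $\max_{ij}|\delta_{ij}|=O(\eta_t)$, which would give the cubic rate $O(\eta_t^2)\cdot O(\eta_t)=O(\eta_t^3)$.

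For the log-ratio I would use the closed forms \eqref{eq:row}--\eqref{eq:col}. From \eqref{eq:row},
\[
\log\frac{\pi^{t+1}_{ij}}{w^t_{ij}}=\alpha_tG(w^t)_{ij}+\log\mu_i-\log\Bigl(\sum_\ell w^t_{i\ell}e^{\alpha_tG(w^t)_{i\ell}}\Bigr).
\]
The first term is at most $\alpha_tM_G$. The normalizer is within a factor $e^{\pm\alpha_tM_G}$ of $(w^t\mathbf 1_m)_i$, so the last two terms together equal $-\log r(w^t)_i+O(\alpha_t)$. Proposition~\ref{prop:automatic-marginal-feasibility} gives $\|w^t\mathbf 1_m-\mu\|_1=O(\eta_t)$, and $\mu$ is strictly positive, so $|\log r(w^t)_i|=O(\eta_t)$. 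The step satisfies $\alpha_t\le\bar\alpha_t=O((t+1)^{-\beta})=O(\eta_t)$, because $\beta>\beta-a/2$. Together these give $\max|\delta|=O(\eta_t)$, and in particular $|\delta|\le 1$ eventually, so the Taylor bounds above apply.

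The column step is symmetric, using \eqref{eq:col} and the bound $\|(\pi^{t+1})^\top\mathbf 1_n-\nu\|_1=O(\eta_t)$. Summing the row and column bounds gives $|A_t^{\rm row}|+|A_t^{\rm col}|=O(\eta_t^3)$, and $|A_t|\le|A_t^{\rm row}|+|A_t^{\rm col}|$ follows by the triangle inequality.

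The main obstacle is the uniform, entrywise control of $\delta_{ij}$. This holds only because the ratio structure cancels the entry size $w^t_{ij}$: however small the entries become, and the diameter guard allows them to degenerate like $\Delta_0+a\log(t+1)$, the ratio stays controlled. I must check that Proposition~\ref{prop:automatic-marginal-feasibility} gives marginal errors for every row $i$ relative to $\mu_i$, which the $\ell_1$ bound and $\min_i\mu_i>0$ provide. If instead only $D_h$ bounds were available, I would fall back on weighted bounds, and these would lose the factor $\max|\delta|$. So the explicit multiplicative form of the updates is essential.
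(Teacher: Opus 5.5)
Your proof is correct, and it reaches the cubic order by a different mechanism than the paper.

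The paper sums the closed form $\log(\pi^{t+1}_{ij}/w^t_{ij})=\log(\mu_i/r_i^t)+\alpha_tG_{ij}(w^t)$ over $j$ and writes $A_t^{\rm row}=\sum_i\zeta_i^t$ in terms of the row errors $e_i^t$, the row moments $a_i^t,b_i^t,u_i^t$, and $\alpha_t$. It then Taylor-expands $\log(\mu_i/r_i^t)$, $\mu_i/r_i^t$ and $b_i^t$, and checks by hand that the first- and second-order terms cancel, obtaining $\zeta_i^t=O((|e_i^t|+\alpha_t)^3)$. The column case is only asserted to be analogous.

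You instead note that the entrywise asymmetry kernel $\phi(\delta)=(e^\delta+1)\delta-2(e^\delta-1)$ satisfies $\phi(0)=\phi'(0)=\phi''(0)=0$. The cancellation is therefore structural rather than computed, and the row and column cases become literally the same argument once $\max_{ij}|\delta_{ij}|=O(\eta_t)$ is known. You derive that bound from exactly the inputs the paper uses: the $O(\eta_t)$ marginal errors of Proposition~\ref{prop:automatic-marginal-feasibility} and $\alpha_t\le\bar\alpha_t=O(\eta_t)$.

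You can shorten the argument further. The detour through $\psi\ge c\,\delta^2$ and the cross-divergence bounds is unnecessary, because $v\in\{w^t,w^{t+1}\}\subset\mathcal C_2$ has unit total mass. Hence $\sum_{ij}v_{ij}|\delta_{ij}|^3\le\max_{ij}|\delta_{ij}|^3=O(\eta_t^3)$ directly.

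The paper's expansion has the merit of staying at the level of row sums, which is the form in which the marginal estimates are stated, but it requires careful bookkeeping of cancellations. Your route is shorter and less error-prone. It also isolates a reusable estimate, $|D_h(u,v)-D_h(v,u)|\le C\|\log(u/v)\|_\infty^3\sum_{ij}v_{ij}$, valid whenever $\|\log(u/v)\|_\infty\le 1$.
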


\begin{proposition}\label{prop:lyap-main}
For $\{(\pi^t,w^t)\}_{t\geq0}$ generated by Algorithm~\ref{alg:adaptive-delayed-power}, under the conditions of Theorem \ref{thm:unconditional},
\[\sum_{t=1}^{\infty}\rho_t\left[D_h(\pi^t,\pi^{t+1})+D_h(w^t,w^{t+1})\right]<\infty.\]
\end{proposition}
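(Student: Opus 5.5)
We prove Proposition~\ref{prop:lyap-main} with a sufficient-decrease (Lyapunov) argument for a penalized merit function, and then telescope. The natural merit function is
\[
\Phi_t:=f(\pi^t,w^t)+\rho_{t}D_h(\pi^t,w^t),
\]
or its symmetric variant using $F$ evaluated at the split pair. Here $f$ is bounded below on the compact sets $\mathcal C_1\times\mathcal C_2$ by $-M_G$. We take $f$ rather than $F$ because each half-step is the exact minimizer of a KL-regularized linearization.

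\textbf{Step 1: three-point inequalities.} Start from the row update. It is the exact minimizer of $\langle -G(w^t),\pi\rangle+\rho_t D_h(\pi,w^t)$ over $\mathcal C_1$, and $f$ is linear in $\pi$ for fixed $w$. The Bregman three-point inequality, taken with comparison point $\pi^t\in\mathcal C_1$, then gives
\[
f(\pi^{t+1},w^t)+\rho_t D_h(\pi^{t+1},w^t)+\rho_t D_h(\pi^t,\pi^{t+1})\le f(\pi^t,w^t)+\rho_t D_h(\pi^t,w^t).
\]
The column update gives the analogous inequality with comparison point $w^t\in\mathcal C_2$:
\[
f(\pi^{t+1},w^{t+1})+\rho_t D_h(w^{t+1},\pi^{t+1})+\rho_t D_h(w^t,w^{t+1})\le f(\pi^{t+1},w^t)+\rho_tD_h(w^t,\pi^{t+1}).
\]

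\textbf{Step 2: combine and telescope.} Add the two inequalities. The penalty terms do not match exactly because of the argument order. The mismatch is precisely the asymmetry: $\rho_t D_h(\pi^{t+1},w^t)$ on the left must be traded for $\rho_tD_h(w^t,\pi^{t+1})$ on the right, and $\rho_tD_h(w^{t+1},\pi^{t+1})$ for $\rho_tD_h(\pi^{t+1},w^{t+1})$. This yields
\[
\Phi'_{t+1}+\rho_t\bigl[D_h(\pi^t,\pi^{t+1})+D_h(w^t,w^{t+1})\bigr]\le \Phi'_t+\rho_t|A_t|,
\]
where $\Phi'_t=f(\pi^t,w^t)+\rho_tD_h(\pi^t,w^t)$. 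Passing from $\rho_t$ to $\rho_{t+1}$ in $\Phi'_{t+1}$ costs the term $(\rho_{t+1}-\rho_t)D_h(\pi^{t+1},w^{t+1})$. This term is nonnegative, since the accepted steps are monotone by construction of the line search and so $\rho_{t+1}\ge\rho_t$; it therefore goes in the unfavourable direction. Summing from $1$ to $T$ and using that $f$ is bounded below gives
\[
\sum_t\rho_t[\cdots]\le C+\sum_t\rho_t|A_t|+\sum_t(\rho_{t+1}-\rho_t)D_h(\pi^{t+1},w^{t+1}).
\]

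\textbf{Step 3: summability of the error terms (main obstacle).} We need both error sums to be finite, and for this we must control $\rho_t$ from above. The floor $\alpha_t\ge\ell_t=\tau\min\{\bar\alpha_t,h_t\}$, together with $h_t\asymp a/(B_\Delta(t+1))$, gives $\rho_t=O(t+1)$.
\begin{itemize}
\item Proposition~\ref{prop:cubic-main} gives $\rho_t|A_t|=O\bigl((t+1)\,\eta_t^3\bigr)=O\bigl((t+1)^{1-3(\beta-a/2)}\bigr)$. This is summable exactly because $\beta>2/3+a/2$, which explains that condition on $\beta$.
\item For the drift term, Proposition~\ref{prop:automatic-marginal-feasibility} gives $D_h(\pi^{t+1},w^{t+1})=O(\eta_t^2)$. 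Summation by parts turns $\sum(\rho_{t+1}-\rho_t)O(\eta_t^2)$ into boundary terms $\rho_T\eta_T^2=O(T^{1-2\beta+a})\to0$, because $2\beta-a>4/3>1$. The remaining term is $\sum\rho_t(\eta_{t-1}^2-\eta_t^2)=O\bigl(\sum t\cdot t^{-2(\beta-a/2)-1}\bigr)$, which is summable for the same reason.
\end{itemize}
If the direction of the monotonicity of $\rho_t$ is inconvenient, we instead absorb the drift term using the nonincreasing weighting. The only delicate points are that the $O(\cdot)$ bounds hold only for sufficiently large $t$, and that the first finitely many terms are finite. Both are handled by splitting off a finite prefix.
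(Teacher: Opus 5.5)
Your proposal is correct and follows essentially the same route as the paper. It uses the Lyapunov function $f(\pi^t,w^t)+\rho_tD_h(\pi^t,w^t)$, bounds the asymmetry $A_t$ via Proposition~\ref{prop:cubic-main} together with $\rho_t=O(t)$, and handles the penalty-drift term $(\rho_{t+1}-\rho_t)D_h(\pi^{t+1},w^{t+1})$ by summation by parts using Proposition~\ref{prop:automatic-marginal-feasibility}. The only difference is minor: you apply the Bregman three-point inequality to each half-step, whereas the paper derives the corresponding exact identities from the row- and column-constant multipliers $p^{t+1},q^{t+1}$, which are orthogonal to $\pi^{t+1}-\pi^t$ and to $w^t-w^{t+1}$; your inequality suffices for this proposition.
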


\begin{proposition}\label{prop:residual}
Let $\{R_t\}_{t\geq0}$ be the corrected residuals associated with the tail iterates generated by Algorithm~\ref{alg:adaptive-delayed-power} and suppose the conditions of Theorem \ref{thm:unconditional} hold. Then there exists an integer $K_R$ such that
$\sum_{t = K_R}^\infty \alpha_t \|R_t\|_{M_t}^2 < \infty$.
Consequently, for every $N \geq K_R + 1$,
\[\min_{0\leq i \leq N-1}\|R_i\|_{M_i}^2 = O\left(\frac{1}{{\log N}}\right).\]
\end{proposition}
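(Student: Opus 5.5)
The plan is to extract a per-iteration inequality that bounds $\alpha_t\|R_t\|_{M_t}^2$ by a multiple of the Bregman movement $\rho_t[D_h(\pi^t,\pi^{t+1})+D_h(w^t,w^{t+1})]$ plus lower-order terms, and then combine Proposition~\ref{prop:lyap-main} with the accepted-step floor to get summability and the best-iterate rate.

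\textbf{Step 1: rewrite the residual.} From the closed-form updates \eqref{eq:row}--\eqref{eq:col} I would write
\[
\log w^{t+1}=\log \pi^{t+1}+\alpha_t G(\pi^{t+1})-\log c^{t},
\qquad
\log \pi^{t+1}=\log w^{t}+\alpha_t G(w^{t})-\log r^{t}\mathbf 1^\top ,
\]
where $r^t$ and $c^t$ are the row and column normalizers. Adding these gives
\[
\rho_t(\log w^t-\log w^{t+1})+G(\pi^{t+1})-G(w^t)=\rho_t(\log r^t\mathbf 1^\top+\mathbf 1(\log c^t)^\top)-2G(w^t)+\dots
\]
The exact bookkeeping does not matter for the estimate. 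The point is that $R_t=\rho_t(\nabla h(w^t)-\nabla h(w^{t+1}))+G(\pi^{t+1})-G(w^t)$, so
\[
\alpha_t^2\|R_t\|_{M_t}^2\le 2\|\log w^t-\log w^{t+1}\|_{M_t}^2+2\alpha_t^2\|G(\pi^{t+1})-G(w^t)\|_{M_t}^2 .
\]

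\textbf{Step 2: compare with $D_h$.} For positive scalars $u,v$, the mean-value form of the KL gives
\[
D_h(u,v)+D_h(v,u)=(u-v)(\log u-\log v)\ge \min\{u,v\}(\log u-\log v)^2 .
\]
Hence $\|\log w^t-\log w^{t+1}\|_{M_t}^2\le D_h(w^t,w^{t+1})+D_h(w^{t+1},w^t)$. I would control the reversed divergence $D_h(w^{t+1},w^t)$ by $D_h(w^t,w^{t+1})$ plus asymmetry. Along the tail these two differ by a cubic term, by the same Taylor argument as in Proposition~\ref{prop:cubic-main}; that argument needs the log-ratios to be small, which holds because $\alpha_t\to0$ and the guard keeps $\Delta$ bounded on a logarithmic scale. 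Alternatively, Pinsker/Taylor gives a bound by $2D_h(w^t,w^{t+1})$ once $\|\log w^t-\log w^{t+1}\|_\infty\le 1$.

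For the gradient term, $G$ is linear and $\sum m_{ij}\le 1$, so
\[
\|G(\pi^{t+1})-G(w^t)\|_{M_t}^2\le M_G'\,\|\pi^{t+1}-w^t\|_1^2\lesssim D_h(\pi^{t+1},w^t)=O(\eta_t^2)
\]
by Pinsker and Proposition~\ref{prop:automatic-marginal-feasibility}. Dividing by $\alpha_t$ then yields, for $t\ge K_R$,
\[
\alpha_t\|R_t\|_{M_t}^2\le C\rho_t\bigl[D_h(w^t,w^{t+1})+|\text{asym}|\bigr]+C\alpha_t\eta_t^2 .
\]

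\textbf{Step 3: summability.} The first term is summable by Proposition~\ref{prop:lyap-main}. The asymmetry is $O(\eta_t^3)$, but it is multiplied by $\rho_t$, so I must check that $\rho_t\eta_t^3$ is summable.

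This is the main obstacle. The lower bound $\alpha_t\ge \ell_t\gtrsim \min\{t^{-\beta},(t+1)^{-1}\}\sim (t+1)^{-1}$ gives $\rho_t\lesssim t$, so $\rho_t\eta_t^3\lesssim t^{1-3(\beta-a/2)}$. This is summable exactly when $\beta-a/2>2/3$, which is precisely the hypothesis $\beta>2/3+a/2$. This explains the parameter constraint, and is where I would verify carefully that the cubic asymmetry enters with only one factor of $\rho_t$. The remaining term $\alpha_t\eta_t^2\le \bar\alpha_t\eta_t^2\lesssim t^{-\beta-2\beta+a}$ is summable because $3\beta-a>1$.

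\textbf{Step 4: best-iterate rate.} Since $\sum_{t<N}\alpha_t\|R_t\|^2\le S<\infty$, we get
\[
\min_{i<N}\|R_i\|_{M_i}^2\le S\Big/\sum_{t=K_R}^{N-1}\alpha_t .
\]
The floor gives $\alpha_t\ge \tau\min\{\bar\alpha_t,h_t\}$ with $h_t=(a/B_\Delta)\log\frac{t+2}{t+1}$, and $h_t\le\bar\alpha_t$ eventually since $\beta<1$. Telescoping the logarithm then gives $\sum_{t=K_R}^{N-1}\alpha_t\ge (\tau a/B_\Delta)\log\frac{N+1}{K_R+1}$, which yields the $O(1/\log N)$ bound.
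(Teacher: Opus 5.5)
Your skeleton is the paper's: Lemma~\ref{lem:corr-res-inequal} followed by summation. It uses the same split of $\alpha_tR_t$, Pinsker plus Proposition~\ref{prop:automatic-marginal-feasibility} for the $G$-difference, Proposition~\ref{prop:lyap-main} for the movement term, summability of $\alpha_t\eta_t^2$, and the floor $\alpha_t\ge\tau h_t$ for the logarithmic rate (the paper writes this floor as $c_-/(t+2)$).

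You differ only in Step 2, where the argument as written has a justification gap. Because you bound $\|\log w^t-\log w^{t+1}\|_{M_t}^2$ by the symmetrized divergence, you must control $D_h(w^{t+1},w^t)$.

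\begin{itemize}
\item That asymmetry is not the one in Proposition~\ref{prop:cubic-main}, which pairs $\pi^{t+1}$ with $w^t$ and $w^{t+1}$ with $\pi^{t+1}$.
\item Smallness of $\log(w^{t+1}_{ij}/w^t_{ij})$ does not follow from $\alpha_t\to0$ and the logarithmic guard. From the closed forms, $\log w^{t+1}_{ij}-\log w^t_{ij}=\alpha_t\bigl(G(w^t)+G(\pi^{t+1})\bigr)_{ij}+\log(\mu_i/r_i^t)+\log(\nu_j/c_j^t)$. The normalizer terms vanish only because of asymptotic feasibility: Proposition~\ref{prop:automatic-marginal-feasibility} gives $r_i^t/\mu_i=1+O(\eta_t)$ and $c_j^t/\nu_j=1+O(\eta_t)$.
\end{itemize}

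Once you supply this, both of your variants close. The cubic one works because the log-ratios are then $O(\eta_t)$, so the asymmetry is $O(\eta_t^3)$. The other works even more simply: entrywise, $|D_h(b,a)-D_h(a,b)|\le C|\log(a/b)|\,D_h(a,b)$ whenever $|\log(a/b)|\le1$. The reversed divergence is therefore absorbed into $\rho_tD_h(w^t,w^{t+1})$, and you do not need $\rho_t\lesssim t$ here.

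The paper avoids the issue entirely with the one-sided bound $\min\{a,b\}(\log a-\log b)^2\le2\bigl(a\log\frac{a}{b}-a+b\bigr)$, valid for all $a,b>0$. To check it, set $a=be^x$:
\begin{itemize}
\item for $x\ge0$, compare derivatives, $2x\le2xe^x$;
\item for $x<0$, the function $2e^x(x-1)+2-e^xx^2$ vanishes at $0$ and has derivative $-e^xx^2\le0$.
\end{itemize}
This gives $\|\log w^t-\log w^{t+1}\|_{M_t}^2\le2D_h(w^t,w^{t+1})$ with no side conditions. In that route the constraint $\beta>2/3+a/2$ enters only upstream, inside Proposition~\ref{prop:lyap-main}. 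The present proposition itself needs only $3\beta-a>1$.
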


\textbf{Proof Sketch for Theorems \ref{thm:unconditional} and \ref{thm:stationarity} .}
Proposition~\ref{prop:automatic-marginal-feasibility} establishes vanishing marginal infeasibility through projective contraction and converts this estimate into quadratic KL cross-divergence, while Proposition~\ref{prop:cubic-main} shows that the resulting KL asymmetry is cubic. Proposition~\ref{prop:lyap-main} then incorporates these bounds into the Lyapunov analysis, which in turn yields the residual summability in Proposition~\ref{prop:residual}. Together with the guarded step-size bounds, this summability yields the vanishing residual, while Proposition~\ref{prop:automatic-marginal-feasibility}, Pinsker's inequality, and linear regularity show that the two iterates converge to their feasible projections, so all three sequences share the same accumulation points, as established in Theorem~\ref{thm:unconditional}. Moreover,  Assumption~\ref{assum:support} guarantees that every residual-vanishing subsequence limit satisfies the KKT conditions and is therefore stationary, as stated in Theorem~\ref{thm:stationarity}. Complete proofs are provided in Appendix~\ref{app:proofs}.

\subsection{Discussion on Step-Size Schedules}

\textbf{$1/k$ schedule:} A natural alternative to the proposed algorithm is the deterministic harmonic schedule $1/k$, i.e.,
$\alpha_k^\prime = (k + 1)^{-1}$ and $\rho_k^\prime =(\alpha_k^\prime)^{-1}$.
 Here, we show that our method has a less restrictive convergence guarantee.
\begin{corollary}\label{coro:1/k}
For the $1/k$ schedule initialized with $w^0>0$ and strictly
positive marginals, if $M_G<1/12$, the feasibility and residual
conclusions of Section~\ref{sec:conv-analysis} also hold.
\end{corollary}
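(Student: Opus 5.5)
We treat the $1/k$ schedule as a special case of the guarded analysis in which the guard never activates. Concretely, we will show that under $M_G<1/12$ the deterministic steps $\alpha'_k=(k+1)^{-1}$ keep the projective diameter of $w^k$ within a logarithmically growing envelope of the form $\Delta(w^k)\le\Delta(w^0)+a'\log(k+1)$, with an effective slope $a'<2/3$. We will also show that they satisfy the two step-size facts the proofs of Propositions~\ref{prop:automatic-marginal-feasibility}--\ref{prop:residual} use: $\sum_k\alpha'_k=\infty$ with $\sum_{k<N}\alpha'_k\asymp\log N$, and a decay compatible with the exponent $\beta$. Once these hold, the whole chain of Section~\ref{sec:conv-analysis} applies verbatim with $\alpha_t$ replaced by $\alpha'_t$.

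\textbf{Step 1 (diameter envelope).} By Lemma~\ref{lem:projective-growth-cycle}, one row/column cycle with step $\alpha$ increases $\Delta$ by at most $B_\Delta\alpha=8M_G\alpha$. The normalizations are diagonal scalings and leave $\Delta$ invariant. Summing this bound over the cycles gives $\Delta(w^k)\le\Delta(w^0)+8M_G\sum_{s<k}(s+1)^{-1}\le \Delta(w^0)+8M_G(1+\log k)$. Since $M_G<1/12$, the effective slope is $a':=8M_G<2/3$, so the envelope sits in exactly the admissible range $a\in(0,2/3)$ of Theorem~\ref{thm:unconditional}. Equivalently, the safe-step condition $\alpha'_k\le h_k=(a/B_\Delta)\log((k+2)/(k+1))$ holds up to a constant factor, because $\log(1+1/(k+1))\ge 1/(k+2)$. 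That constant can be absorbed into $\Delta_0$.

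\textbf{Step 2 (exponent bookkeeping).} The guarded analysis uses $\eta_t=(t+1)^{-(\beta-a/2)}$ with $\beta>2/3+a/2$. Here we take $\beta=1$, which is the boundary case excluded in the theorem only because the guarded proposal needs $\beta<1$. With $\beta=1$ the requirement reduces to $1-a'/2>2/3$, i.e.\ $a'<2/3$, i.e.\ $M_G<1/12$. This is the source of the hypothesis. We then recheck the places that used $\beta<1$. These are where $\rho_t\eta_t^2$ and $\rho_t\eta_t^3$ must be summable in Propositions~\ref{prop:cubic-main} and~\ref{prop:lyap-main}, which needs $3(1-a'/2)-1>1$, again $a'<2/3$. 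Wherever the guarded proofs invoke the floor $\ell_t$, we replace it by the exact identity $\alpha'_t=(t+1)^{-1}$. The harmonic sum then gives the $O(1/\log N)$ rate of Proposition~\ref{prop:residual} directly, and it is in fact cleaner than the guarded case.

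\textbf{Step 3 (conclusions) and the main obstacle.} Projective contraction along the envelope gives the marginal error $O(\eta_t)$ (Proposition~\ref{prop:automatic-marginal-feasibility}) and the cubic asymmetry bound. The Lyapunov summation follows as before. The residual and feasibility statements of Theorem~\ref{thm:unconditional} then follow by the same Pinsker and linear-regularity argument.

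The delicate step is Step 2. The marginal-error estimate comes from the Birkhoff contraction coefficient $\tanh(\Delta/4)$, and its deficit $1-\tanh(\Delta/4)$ behaves like $e^{-\Delta/2}\gtrsim (k+1)^{-a'/2}$. Products of these factors must decay fast enough against the perturbations of size $\alpha'_k$. I would first redo the recursion $e_{k+1}\le(1-c(k+1)^{-a'/2})e_k+C(k+1)^{-1}$ to confirm that it gives $e_k=O((k+1)^{-(1-a'/2)})$. This requires $a'/2<1$, which holds, and it is where the precise constant $12$ has to be tracked against $B_\Delta=8M_G$.
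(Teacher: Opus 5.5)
Your proposal is correct and follows the paper's proof essentially step by step. The paper likewise sums the one-cycle growth bound to get slope $8M_G$, obtains the contraction deficit $(k+1)^{-\gamma}$ with $\gamma=4M_G$ (your $a'/2$), runs the recursion of Lemma~\ref{lem:polynomial-row-error} with $\beta=1$ and $p=\gamma$, and gets the threshold $\gamma<1/3$, i.e.\ $M_G<1/12$, from summability of $\rho'_k|A_k|=O((k+1)^{-2+3\gamma})$.

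There are two small bookkeeping slips. First, the Lyapunov step needs $(\rho'_{k+1}-\rho'_k)D_h(\pi^{k+1},w^{k+1})=O((\eta'_k)^2)$ to be summable, which holds once $\gamma<1/2$; it does not need $\rho'_k(\eta'_k)^2$ summable, and that series in fact diverges for every $\gamma\ge 0$. Second, the constant $12$ in the hypothesis comes from this cubic-asymmetry term via $8M_G<2/3$. It does not come from the recursion, which only needs $\gamma<1$ and in which the $12M_G\alpha_t$ of Lemma~\ref{lem:full-cycle-comparison} affects only the additive constant.
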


The proof is deferred in Appendix \ref{app:1/k}. $M_G < {1}/{12}$ is a conservative sufficient condition
and there is no independent guard-slope or power parameter that can be adjusted when this sufficient condition is violated.
For the scaled harmonic schedule
$\alpha_k'=c/(k+1)$, the same argument gives the sufficient
condition $cM_G<1/12$ where a smaller $c$ reduces the initial step size.
On the other hand, in our proposed method, the projective-diameter slope $a$ and the delayed-power exponent $\beta$ are design parameters linked by the admissibility condition $\beta> 2/3+a/2$, which jointly balances the geometric control and step-size decay while providing a convergence guarantee.

\textbf{Constant Penalty Schedule:} \label{sec:dis-const-rho}
The proposed method can be viewed as an extension of the fixed-penalty
BAPG method introduced by \citet{li2023convergent} where the penalty  $\rho$ and the corresponding step size $\alpha$ are held constant across iterations. Under their convergence assumption, every accumulation point
$(\pi^\star,w^\star)$ belongs to the fixed-point set of the relaxed BAPG problem. By \citep[ Proposition~3.3]{li2023convergent}, 
    \[\operatorname{dist}\left(
        \frac{\pi^\star+w^\star}{2}, \mathcal{X}
    \right)=
    O({\rho}^{-1}).\]
For every fixed $\rho$, it converges to the stationary set of the relaxed split problem rather than the original problem. In particular, the infeasibility gap is controlled by $O(\rho^{-1})$, which only vanishes when $\rho \to \infty$. 
Our method strengthens the asymptotic guarantee by introducing an increasing penalty in the tail phase, thereby removing the permanent approximation floor associated with the fixed-penalty $\rho$ schedule. The adaptive safeguard does introduce a backtracking procedure within each tail iteration. 
Nevertheless, as shown in Remark~\ref{rem:finit-back}, this backtracking loop is well defined and terminates after finitely many reductions at each tail iteration.

\textbf{Pure Adaptive Power Schedule:} A pure adaptive power schedule with a nominal step $\alpha_k^{\prime\prime}=\alpha_0 (k+1)^{-\beta}$, $\rho_k^{\prime\prime}={\alpha_k^{\prime \prime}}^{-1}$ and  the same guard as A-KL-BAPG,
still converges since the same convergence analysis in Section~\ref{sec:conv-analysis} applies when $k_0=0$. However, it can enter a small-step regime before the coarse optimization phase is complete. 
For every $(i,j)$, 
\[\exp(\alpha_k^{\prime\prime}G_{ij}(w^k))
=1+\alpha_k^{\prime\prime}G_{ij}(w^k)
+O((\alpha_k^{\prime\prime})^2),\]
i.e., the objective-driven perturbation is in $O(\alpha_k^{\prime\prime})$. 
When the iterates are still far from this region, the primary task is to obtain substantial objective reduction and approach the stationary set of the relaxed split problem. 
If the guard is imposed at initialization, these transient changes in the projective diameter may cause repeated backtracking, thereby forcing the accepted step toward the conservative safe scale $h_t$, despite the need for larger objective-driven steps. 
Therefore, the finite fixed-penalty prefix helps handle the initial transient and avoid premature step-size decay. 

%%%%%%%%%%%%%%%%%%%%%%%%%%%%%%%%%%%%%%%%%%%%%%
%%%%%%%%%%%%%%%%%%%%%%%%%%%%%%%%%%%%%%%%%%%%%%
%%%%%%%%%%%%%%%%%%%%%%%%%%%%%%%%%%%%%%%%%%%%%%
%%%%%%%%%%%%%%%%%%%%%%%%%%%%%%%%%%%%%%%%%%%%%%
\section{Numerical Analysis}
%%%%%%%%%%%%%%%%%%%%%%%%%%%%%%%%%%%%%%%%%%%%%%
%%%%%%%%%%%%%%%%%%%%%%%%%%%%%%%%%%%%%%%%%%%%%%
%%%%%%%%%%%%%%%%%%%%%%%%%%%%%%%%%%%%%%%%%%%%%%
%%%%%%%%%%%%%%%%%%%%%%%%%%%%%%%%%%%%%%%%%%%%%%

We evaluate A-KL-BAPG on two important applications of GW: graph alignment \citep{xu2019gromov,li2023convergent} and heterogeneous domain adaptation \citep{yan2018semi,aritake2026domain}. We compare against major GW/FGW solver families, including conditional-gradient methods \citep{peyre2016gromov,titouan2019optimal}, entropic projected-gradient methods \citep{solomon2016entropic}, proximal and Bregman methods \citep{xu2019gromov,li2023convergent}, and recent entropic-GW and inexact projected-gradient algorithms \citep{rioux2024entropic,liang2026provably}. We also include graph-specific and domain adaptation baselines \citep{vogelstein2015fast,xu2019scalable,chowdhury2021generalized,courty2016optimal,courty2017joint,aritake2022unsupervised}. 

\subsection{GW Objective, Feasibility, and Stationarity Residual}
Given the source and target structure matrices $D_X$ and $D_Y$, the GW objective evaluated at a transport plan $X$ is $F(X) = -\operatorname{Tr}\left(D_X X D_Y X^\top\right)$. For the fused Gromov-Wasserstein problem involving an additional cross-domain feature cost $M$, we report the fused objective
    \[F_{\mathrm{FGW}}(X) = -2\lambda\langle D_X X D_Y,X\rangle +(1-\lambda)\langle M,X\rangle, \qquad \lambda \in (0,1),\]
which is the nonconstant part of squared-loss FGW. Lower values indicate better minimization of the corresponding GW or FGW criterion. We report prediction accuracy separately to measure downstream performance.

Denote $W^{t}$ as the second-block iterate returned after iteration $k$. We report its marginal feasibility gap
\[r_{\mathrm{feas}}(W^{t+1}) := \left\|W^{t+1}\mathbf{1}-\mu\right\|_1 + \left\|(W^{t+1})^\top\mathbf{1}-\nu\right\|_1.\] We evaluate whether the output matrix is close to a stationary point of the constrained GW problem. 
Since the split iterates produced by BAPG may not exactly lie in $\mathcal C$, we first construct a feasible shadow coupling $\widehat W_t\in \mathcal C$ by rescaling the rows and columns of the reported matrix. We then measure stationarity at $\widehat W_t$ by defining the projected-gradient stationarity residual
\[
r_{\mathrm{stat}}(\widehat W_t) = \frac{1}{\eta}
\big\| \widehat W_t - \mathcal{P}_{\mathcal C}
\big(\widehat W_t-\eta \nabla F(\widehat W_t)
\big) \big\|_F.
\]
Note that for $F(W) = - \mathrm{Tr}(D_X WD_Y W^\top)$, $\nabla F(W)
= -\left(D_X^\top W D_Y^\top + D_X W D_Y\right)$. Then
\begin{align*}
\|\nabla F(W_1)-\nabla F(W_2)\|_F
& \le \|D_X^\top (W_1 - W_2) D_Y^\top\|_F + \|D_X(W_1 - W_2) D_Y\|_F  \\
& \le \|D_X^\top\|_2\|D_Y^\top\|_2\|W_1 - W_2\|_F + \|D_X\|_2\|D_Y\|_2\|W_1 - W_2\|_F \\
& = 2\|D_X\|_2\|D_Y\|_2\|W_1-W_2\|_F .
\end{align*}
Thus $2\|D_X\|_2\|D_Y\|_2$ is a valid Lipschitz upper bound for the gradient, and stepsize $\eta$ is defined as
$\eta := (2\|D_X\|_2\|D_Y\|_2)^{-1}$.
\begin{remark}
Note that $r_{\rm stat}(\widehat W_t) = 0$ is equivalent to $\widehat W_t = \mathcal P_{\mathcal C} \big( \widehat W_t-\eta \nabla F(\widehat W_t) \big)$, i.e., $0\in\nabla F(\widehat W_t)+\mathcal N_{\mathcal C}(\widehat W_t)$, which is the KKT first-order stationarity condition for the GW problem over $\mathcal C$. A similar quantity $\big\| \pi-\operatorname{Proj}_{\mathcal C} \big(\pi+D_X\pi D_Y\big) \big\|_F$ is used by \citet[Proposition~3.1]{li2023convergent} to establish a Luo--Tseng error bound for GW.  Their quantity is the projected-gradient displacement with $\eta=1/2$, whereas ours uses a Lipschitz-based step size. 
When the projected-gradient residual is sufficiently small, the Luo--Tseng error bound \citep{li2023convergent} bounds the distance from $\widehat W_t$ to the GW stationary set by a constant multiple of $r_{\mathrm{stat}}(\widehat W_t)$.
Thus, a smaller residual indicates less first-order improvement remained and a closer proximity to the stationary set. 
\end{remark}
\subsection{Graph Alignment} \label{sec:syn_ga}
\paragraph{Synthetic Graph Alignment}
Following \citet{li2023convergent}, we draw a 250-node Gaussian random partition graph and corrupt it by adding random missing edges and 75 noisy target nodes. We use adjacency matrices as $D_X$ and $D_Y$, uniform marginals, and average over 30 seeds.
 We compare A-KL-BAPG with fixed KL-BAPG \citep{li2023convergent}, KL-BAPG with post-hoc Sinkhorn rounding,  $1/k$ schedule, pure-power schedule, and several GW solvers, including conditional-gradient GW (POT CG) \citep{titouan2019optimal}, proximal-point GW (POT PPA) \citep{xu2019gromov}, entropic projected-gradient GW (POT EPGD) \citep{peyre2016gromov}, and POT's implementation of BAPG.
We additionally include the recent inexact projected-gradient (iPG) method of \citet{liang2026provably}. For A-KL-BAPG, we use $K=4000$ total iterations, $k_0=1000$, $\rho_0=0.1$, and tail parameters $\alpha_0=1$, $\beta=0.75$, $a=0.1$, and $\tau=0.5$. For the pure power schedule, $\beta = 0.75$. The hyperparameters for each baseline are set according to the settings in their respective papers, except that POT BAPG uses $\varepsilon=0.4$ to match the penalty  $\rho=0.1$ of KL-BAPG.  
Following \citet{li2023convergent}, we use top-1 matching accuracy to evaluate the quality of graph alignment. 
We also report the mean GW objective, feasibility gap, stationarity residual, and runtime, where higher top-1 accuracy and lower objective, feasibility gap, stationarity residual, and runtime are better. Specific graph construction details, baseline information, and accuracy definition are provided in Appendix \ref{app:syn_ga}.

Table~\ref{tab:syn_ga} and Figure~\ref{fig:syn_ga_iteration} confirm the strength of BAPG for graph alignment: KL-BAPG and POT BAPG obtain high accuracy and low GW objectives, but the returned couplings are far from feasible and stationary. Post-hoc Sinkhorn rounding preserves KL-BAPG's objective and accuracy while reducing the feasibility gap, yet still yields a higher stationarity residual, indicating that marginal correction alone cannot restore stationarity. A-KL-BAPG retains the same accuracy and objective while reducing the feasibility gap by more than $99\%$ and the stationarity residual by $98\%$ relative to KL-BAPG. $1/k$ and pure-power schedules are nearly feasible but fall short on accuracy and stationarity. POT CG attains a smaller feasibility gap and stationarity residual, yet with substantially worse accuracy and GW objective. POT PPA and iPG achieve high performance across all metrics but require significantly longer runtimes due to inner iterations.

\begin{table*}[ht]
\centering
\small
\begin{tabular}{lccccc}
\toprule
Method & Accuracy & GW objective & Feasibility & Stationarity Residual & Runtime (s) \\
\midrule
KL-BAPG & $0.966$ & $-0.034$ & $0.098$ & $0.470$ & $278.981$ \\
KL-BAPG + Sinkhorn & $0.966$ & $-0.034$ & $0.005$ & $0.140$ & $279.987$ \\
\textbf{A-KL-BAPG} & $\mathbf{0.966}$ & $\mathbf{-0.034}$ & $\mathbf{5.524\times10^{-6}}$ & $\mathbf{0.011}$ & $\mathbf{225.242}$ \\
$1/k$ schedule & $0.022$ & $-0.003$ & $1.430\times10^{-7}$  & $0.204$ & $54.171$ \\
Pure power & $0.022$ & $-0.003$ & $1.110\times10^{-6}$  & $0.205$ & $51.926$ \\
POT CG & $0.015$ & $-0.017$ & $2.390\times10^{-17}$ & $7.730\times10^{-4}$ & $5.390$ \\
POT PPA & $0.966$ & $-0.034$ & $1.450\times10^{-5}$  & $0.002$ & $2112.932$ \\
POT EPGD & $0.065$ & $-0.003$ & $4.780\times10^{-12}$ & $0.233$ & $0.871$ \\
POT BAPG           & $0.966$ & $-0.034$ & $0.098$ & $0.470$ & $298.960$ \\
Degree Hungarian   & $0.003$ & $-0.003$ & $1.310\times10^{-15}$ & $0.750$ & $0.009$ \\
SciPy FAQ & $0.059$ & $-0.015$ & $1.230\times10^{-15}$ & $0.333$ & $11.250$ \\
iPG & $1.000$ & $-0.034$ & $3.100\times10^{-10}$ & $3.090\times10^{-4}$ & $1927.503$ \\
\bottomrule
\end{tabular}
\caption{Synthetic Graph Alignment Results.}\label{tab:syn_ga}
\end{table*}

\begin{figure}[ht]
\centering
\includegraphics[width=\linewidth]{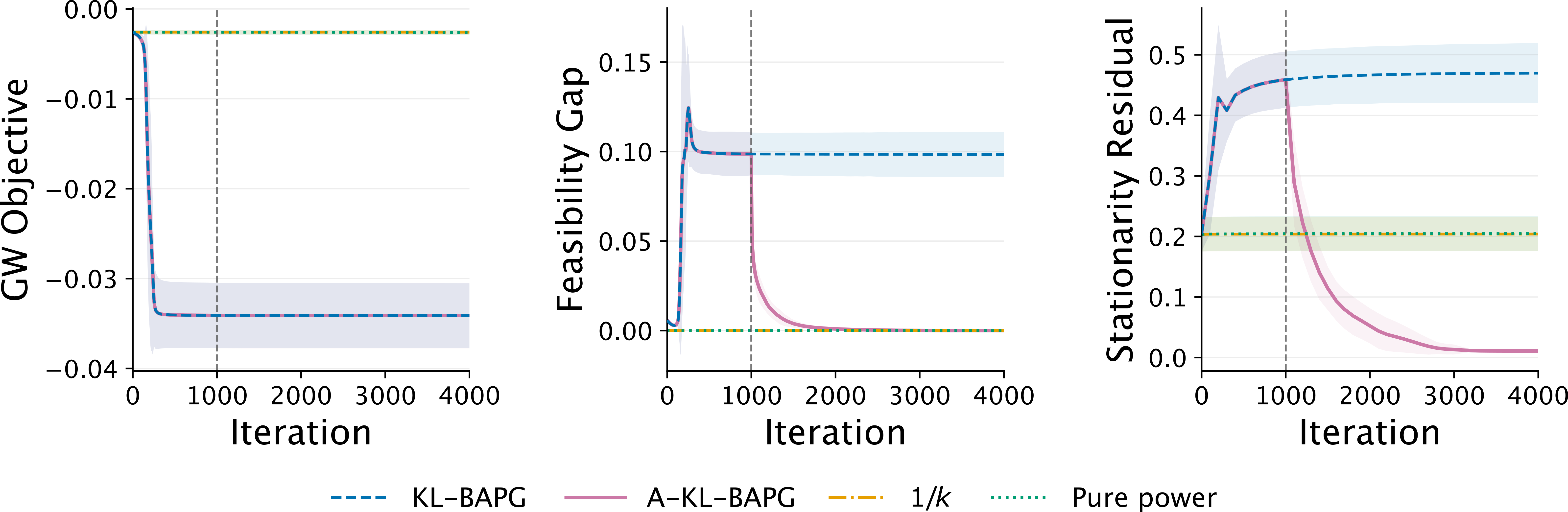}
\vspace{-15pt}
\captionof{figure}{Iteration convergence comparison.}
\label{fig:syn_ga_iteration}
\end{figure}

\paragraph{Real Graph Alignment.}
We evaluate A-KL-BAPG on the PROTEINS and ENZYMES datasets from TUDataset \citep{Borgwardt2005Protein,dobson2003distinguishing,morris2020tudataset}.
Following \citet{li2023convergent}, we construct each target graph by adding $10\%$ missing edges and $10\%$ noisy nodes to its source graph. We use unweighted adjacency matrices as the GW structure matrices and uniform node measures. Results are averaged over 30 independent corruptions per graph. Dataset and corruption details are provided in Appendix~\ref{app:real_ga}.
We use the same baselines as in Section \ref{sec:syn_ga} and additionally include Degree Hungarian \citep{kuhn1955hungarian}, FAQ with ten random restarts \citep{vogelstein2015fast}, ScalaGW \citep{xu2019scalable}, specGW \citep{chowdhury2021generalized}, and BPG and BPG-S \citep{xu2019gromov}. Baseline hyperparameters follow their respective papers, and BAPG schedules use the same hyperparameters as in Section \ref{sec:syn_ga}. In Section \ref{sec:syn_ga}, we report top-1 accuracy for direct comparison. Here, however, several baselines, such as FAQ, directly return discrete assignments, whereas top-1 decoding can map multiple nodes to the same target node. Therefore, we apply the Hungarian decoding for a global one-to-one alignment and report the resulting matching accuracy
\citep{kuhn1955hungarian}. GW objective, feasibility gap, and stationarity residual are also reported if applicable. The decoding and metric definitions are provided in Appendix~\ref{app:real_ga}.
\begin{table*}[t]
\vspace{-0.17cm}
\centering
\scriptsize
\setlength{\tabcolsep}{1pt}
\caption{ENZYMES and PROTEINS graph alignment performance.}
\label{tab:real_ga}

\begin{NiceTabular}{l ccccc ccccc}
\toprule
& \multicolumn{5}{c}{ENZYMES}
& \multicolumn{5}{c}{PROTEINS} \\
\cmidrule(lr){2-6}
\cmidrule(lr){7-11}
Method
& Accuracy & Objective & Feasibility & Stationarity & Time (s)
& Accuracy & Objective & Feasibility & Stationarity & Time (s) \\
\midrule

Degree Hungarian
& $0.282$ & $-0.041$ & $9.91\times10^{-17}$ & $0.406$ & $0.000$
& $0.303$ & $-0.077$ & $1.01\times10^{-16}$ & $0.390$ & $0.000$ \\

SciPy FAQ
& $0.503$ & $-0.093$ & $1.12\times10^{-16}$ & $0.117$ & $0.040$
& $0.469$ & $-0.131$ & $1.15\times10^{-16}$ & $0.100$ & $0.117$ \\

ScalaGW
& $0.137$ & $-0.032$ & $1.38\times10^{-16}$ & -- & $0.283$
& $0.185$ & $-0.058$ & $1.51\times10^{-16}$ & -- & $0.249$ \\

SpecGW
& $0.199$ & $-0.067$ & $2.82\times10^{-16}$ & -- & $0.052$
& $0.204$ & $-0.104$ & $2.86\times10^{-16}$ & -- & $0.029$ \\

BPG
& $0.706$ & $-0.091$ & $6.89\times10^{-6}$ & $0.020$ & $0.175$
& $0.645$ & $-0.119$ & $5.59\times10^{-6}$ & $0.034$ & $0.151$ \\

BPG-S
& $0.709$ & $-0.091$ & $5.37\times10^{-4}$ & $0.020$ & $0.032$
& $0.649$ & $-0.119$ & $6.59\times10^{-4}$ & $0.035$ & $0.038$ \\

iPG
& $0.694$ & $-0.091$ & $2.02\times10^{-5}$ & $0.002$ & $2.681$
& $0.673$ & $-0.122$ & $1.95\times10^{-5}$ & $0.002$ & $4.060$ \\

POT CG
& $0.204$ & $-0.085$ & $3.40\times10^{-16}$ & $0.004$ & $0.074$
& $0.211$ & $-0.122$ & $3.26\times10^{-16}$ & $0.008$ & $0.087$ \\

POT PPA
& $0.731$ & $-0.092$ & $2.21\times10^{-4}$ & $0.006$ & $52.194$
& $0.693$ & $-0.121$ & $2.20\times10^{-4}$ & $0.007$ & $69.666$ \\

POT EPGD
& $0.684$ & $-0.087$ & $7.17\times10^{-6}$ & $0.103$ & $77.281$
& $0.632$ & $-0.115$ & $1.03\times10^{-5}$ & $0.103$ & $69.272$ \\

POT BAPG
& $0.200$ & $-0.058$ & $0.782$ & $0.440$ & $0.131$
& $0.263$ & $-0.081$ & $0.754$ & $0.436$ & $0.473$ \\

\midrule

$1/k$ schedule
& $0.326$ & $-0.018$ & $1.07\times10^{-6}$ & $0.128$ & $0.380$
& $0.336$ & $-0.040$ & $1.63\times10^{-6}$ & $0.147$ & $0.915$ \\

Pure power
& $0.358$ & $-0.021$ & $7.08\times10^{-6}$ & $0.148$ & $0.389$
& $0.369$ & $-0.052$ & $9.86\times10^{-6}$ & $0.160$ & $0.917$ \\

KL-BAPG
& $0.748$ & $-0.094$ & $0.113$ & $0.178$ & $1.483$
& $0.716$ & $-0.125$ & $0.108$ & $0.184$ & $3.551$ \\

KL-BAPG + Sinkhorn
& $0.752$ & $-0.093$ & $0.092$ & $0.417$ & $1.689$
& $0.719$ & $-0.123$ & $0.079$ & $0.414$ & $3.845$ \\

\textbf{A-KL-BAPG}
& $\mathbf{0.747}$ & $\mathbf{-0.091}$ & $\mathbf{0.043}$ & $\mathbf{0.130}$ & $\mathbf{2.445}$
& $\mathbf{0.718}$ & $\mathbf{-0.120}$ & $\mathbf{0.041}$ & $\mathbf{0.132}$ & $\mathbf{4.056}$ \\

\bottomrule

\CodeAfter
\tikz \draw[line width=0.5pt] (1-|7) -- (19-|7);
\end{NiceTabular}
\end{table*}

Table~\ref{tab:real_ga} confirms the strength of A-KL-BAPG on real graph alignment. It achieves comparable accuracy with fixed KL-BAPG on both ENZYMES and PROTEINS. At the same time, A-KL-BAPG reduces the feasibility gap by approximately $62\%$ on both datasets and the stationarity residual by $27\%$ on ENZYMES and $28\%$ on PROTEINS. Consistent with the synthetic results, Sinkhorn rounding improves KL--BAPG's feasibility but worsens its stationarity, and is worse than A-KL-BAPG on both metrics. $1/k$ and pure-power schedules approach feasibility but yield substantially lower matching accuracy. Projection-based methods attain smaller feasibility and stationarity residuals, but generally at the cost of lower matching accuracy.  Thus, A-KL-BAPG achieves the most favorable balance among the BAPG schedules in terms of alignment accuracy, feasibility, and stationarity. 
 
\subsection{Domain Adaptation} \label{sec:syn_da}
\paragraph{Synthetic Domain Adaptation}
We test FGW in synthetic domain adaptation with four binary tasks (Blobs, Two Circles, Two Moons, and Two Spirals), where the source has one common feature, and the target additionally has a target-only feature. The source geometry is label-aware, and the target geometry is a $4$-nearest-neighbor (4-nn) graph, following \citet{aritake2026domain}. We use 400 source and 200 target observations, and repeat each experiment with the same 30 random seeds. Following \citet{aritake2026domain}, we set $\lambda = 0.01$. Specific experiment settings are included in Appendix \ref{app:syn_da}.
We compare KL--BAPG ($\rho_k=0.5$), KL BAPG + Sinkhorn rounding, $1/k$, pure power ($\beta = 0.75$), and our A-KL-BAPG, with $k_0 = 500$, $\alpha_0=1$, $\beta=0.75$, $a=0.1$, and $\tau=0.5$. Other baselines include OT-based domain adaptation \citep{courty2016optimal}, naive OT cluster \citep{ng2001spectral}, GW \citep{peyre2016gromov}, FGW \citep{titouan2019optimal}, and the LA-FGW, GB-FGW, and LAGB-FGW \citep{aritake2026domain}. OT, GW, and FGW baselines are implemented using POT \citep{flamary2021pot}. Hyperparameters for baselines are set according to their respective papers, with details provided in Appendix \ref{app:syn_da}. 
Specific accuracy calculation is provided in Appendix \ref{app:syn_da}. 

\begin{figure}[ht]
    \centering

    \begin{subfigure}{0.22\linewidth}
        \centering
        \includegraphics[width=\linewidth]{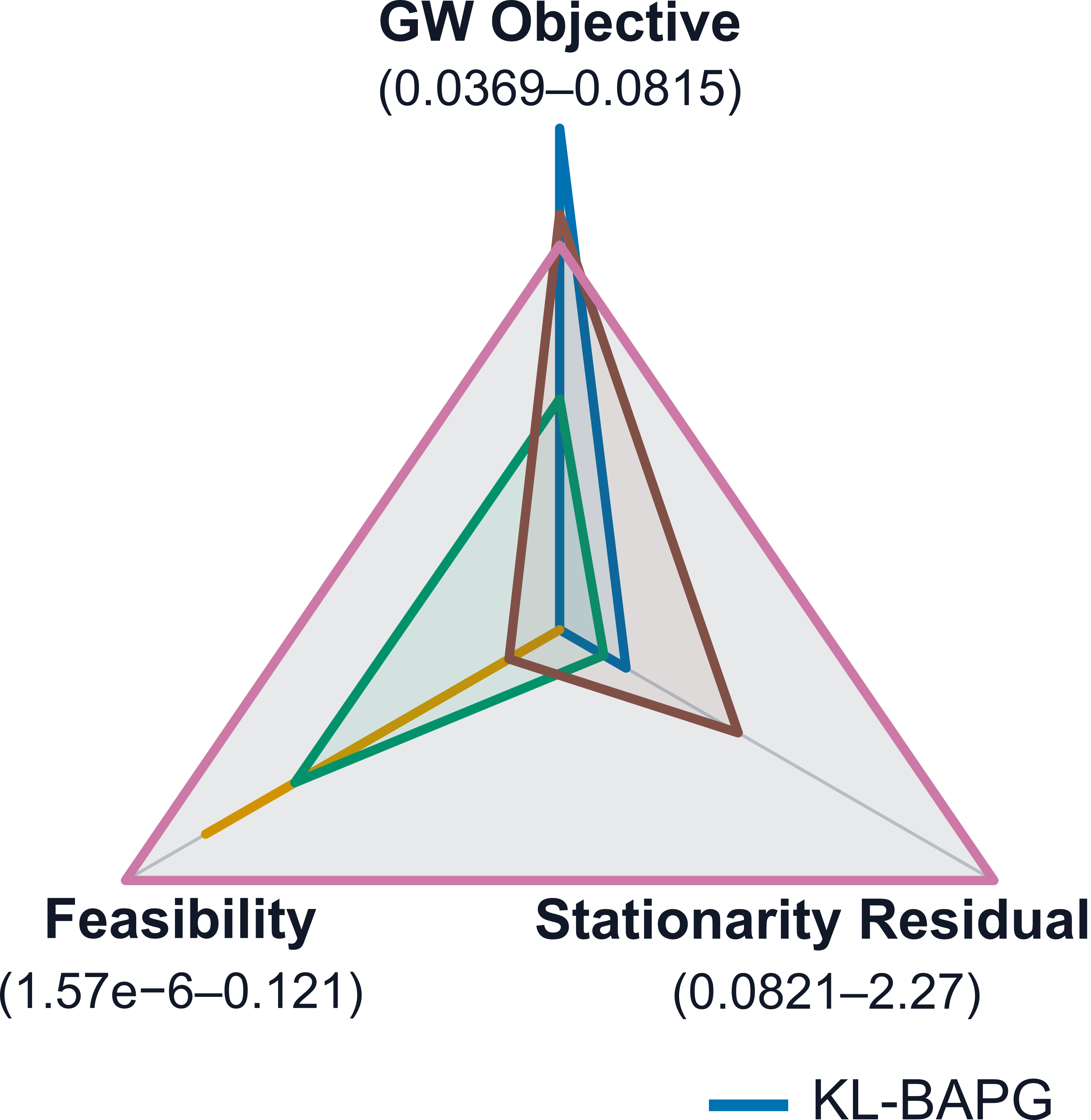}
        \caption{Blobs}
        \label{fig:blobs}
    \end{subfigure}
    \begin{subfigure}{0.22\linewidth}
        \centering
    \includegraphics[width=\linewidth]{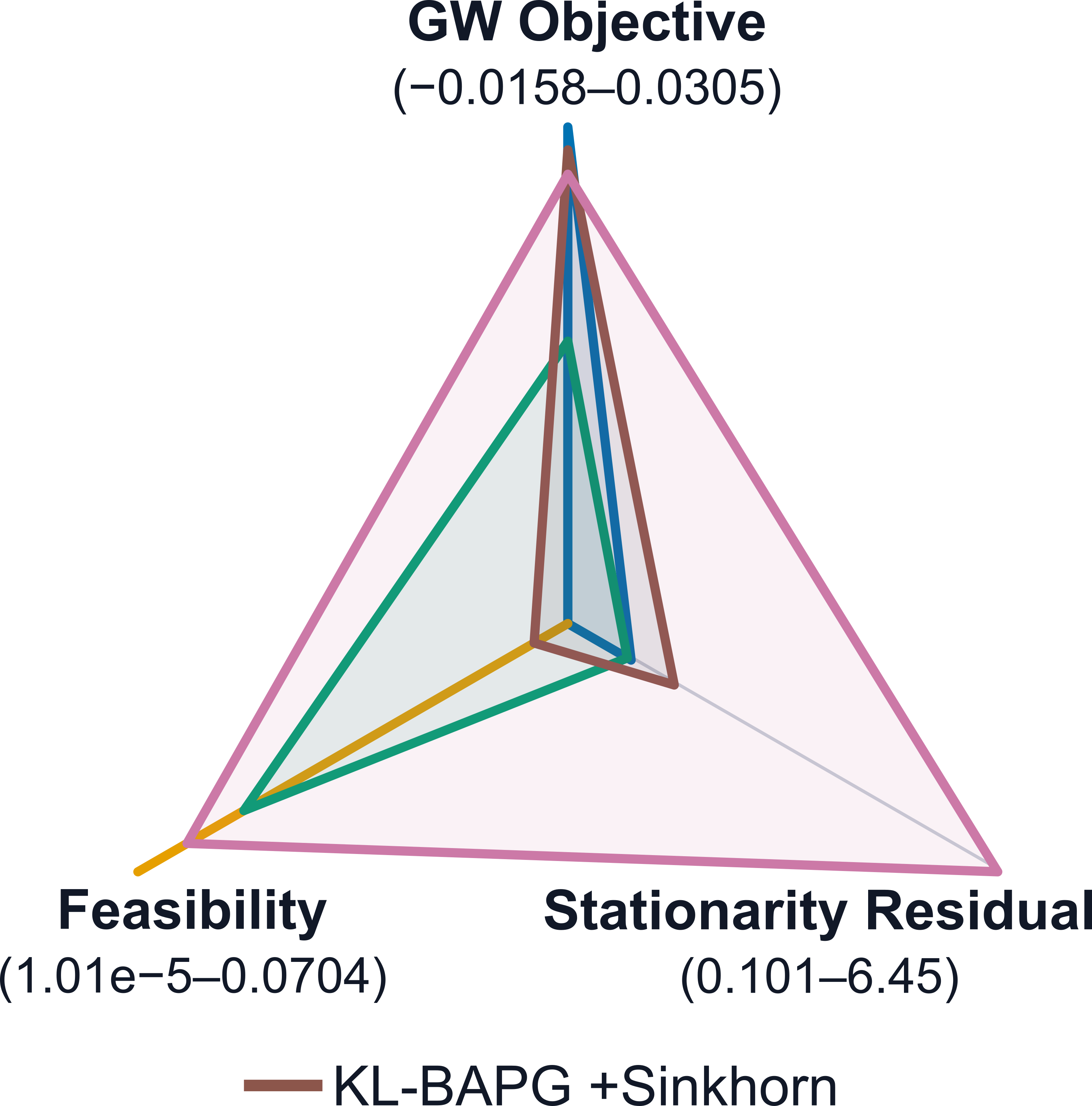}
        \caption{Two Circles}
        \label{fig:two_circles}
    \end{subfigure}
    \begin{subfigure}{0.22\linewidth}
        \centering
    \includegraphics[width=\linewidth]{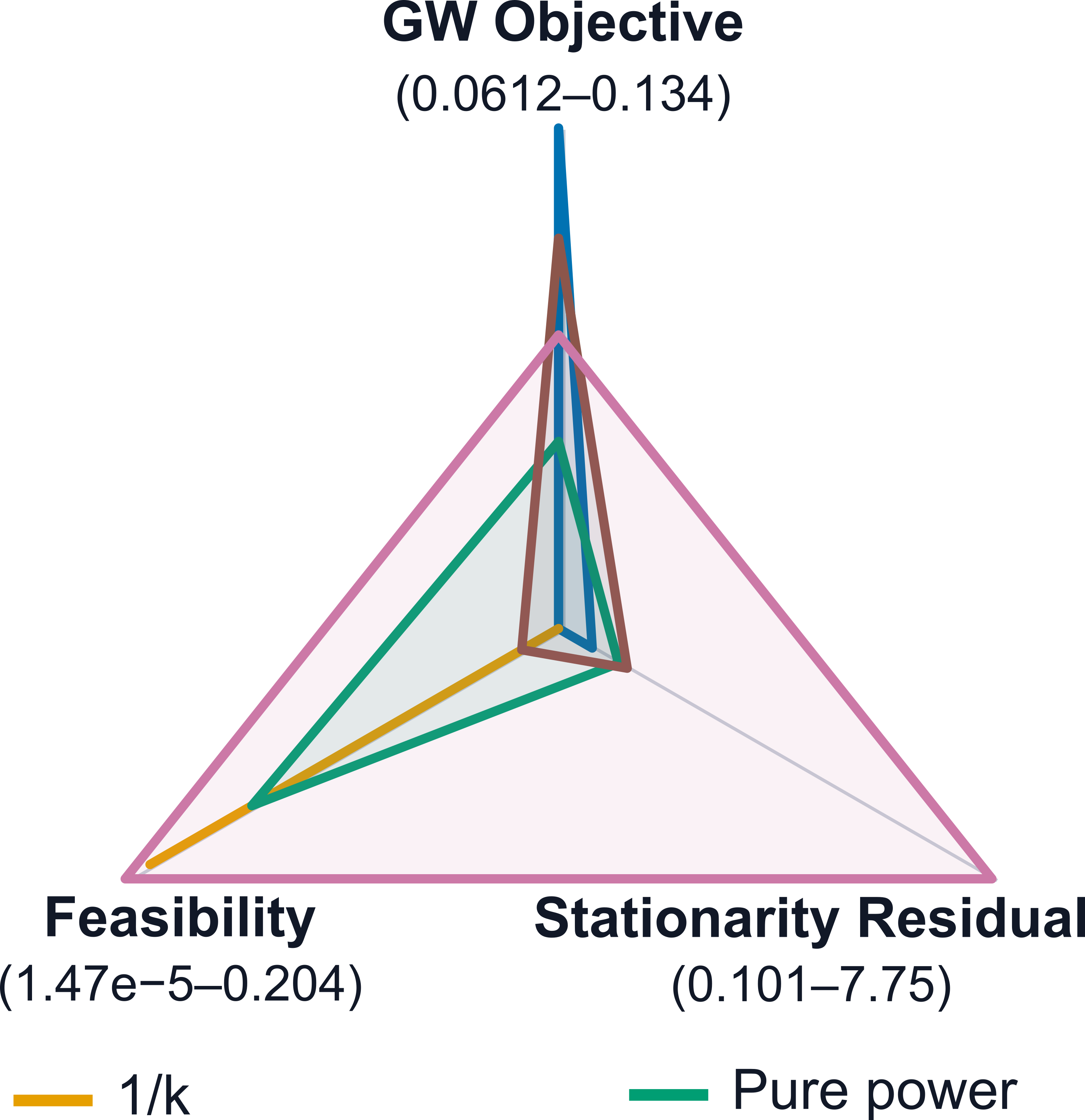}
        \caption{Two Moons}
        \label{fig:two_moons}
    \end{subfigure}
    \begin{subfigure}{0.22\linewidth}
        \centering
    \includegraphics[width=\linewidth]{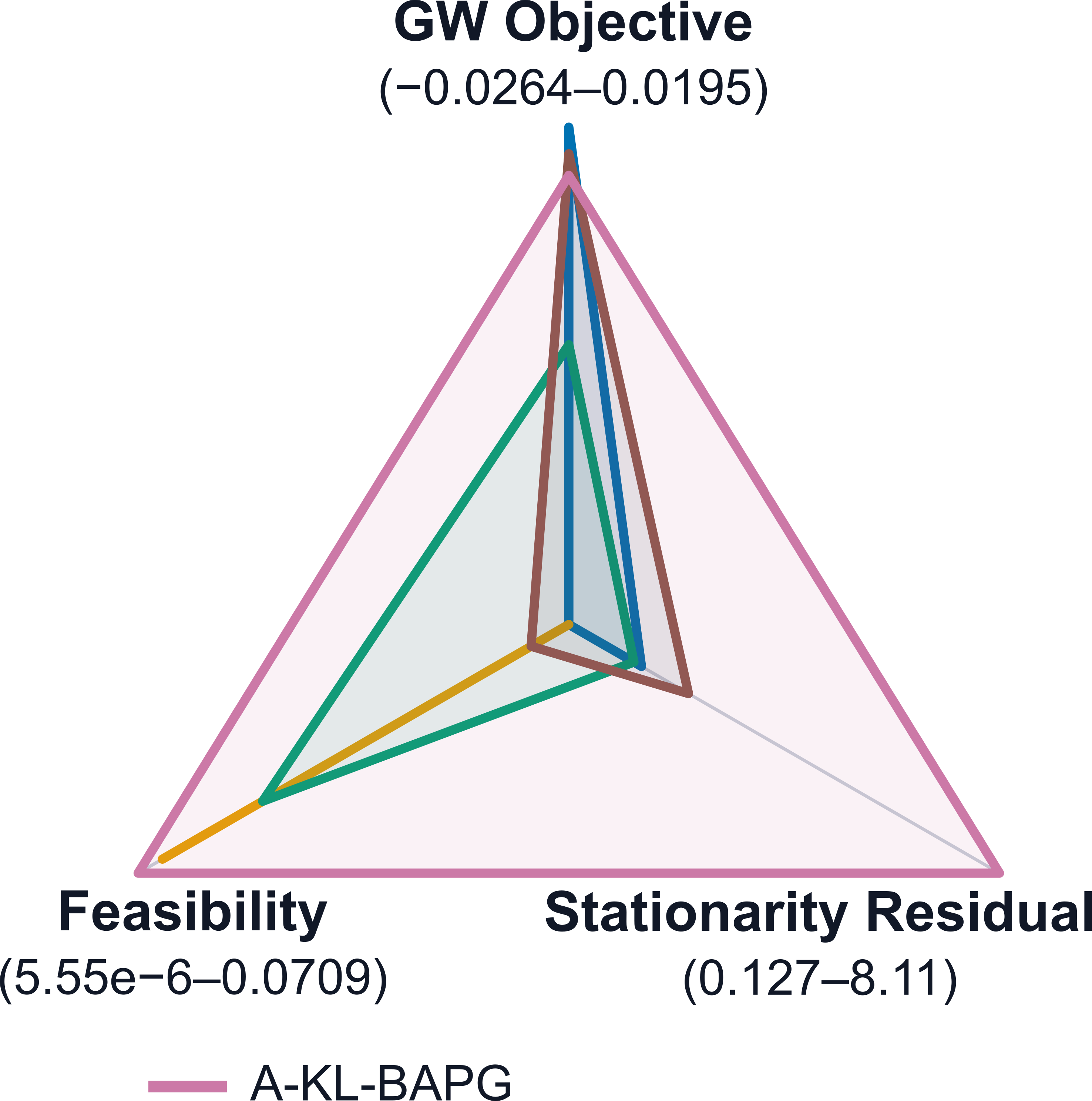}
        \caption{Two Spirals}
        \label{fig:two_spirals}
    \end{subfigure}
    \caption{Optimization diagnostics of synthetic domain adaptation experiment. Outward is better, i.e., lower in values. \vspace{-0.3cm}}
    \label{fig:da-diag}
\end{figure}

Figure \ref{fig:da-diag} shows that fixed-penalty KL--BAPG makes the greatest raw objective progress at the cost of infeasibility split, while Sinkhorn rounding improves its feasibility but remains substantially less stationary than A-KL-BAPG.
A-KL-BAPG retains most of the objective gain and achieves the smallest stationarity residuals, whereas the $1/k$ and pure-power schedules improve feasibility at the cost of objective progress.
Table \ref{tab:da-results}(a) further confirms the strength of A-KL-BAPG, where it exhibits the highest mean accuracy across the four data sets among all methods.  It is therefore the best overall method, while POT LAGB--FGW and naive OT clustering remain competitive on individual data sets. Specific reconstruction visualizations are provided in Figure \ref{fig:da_syn_vis} in Appendix \ref{app:syn_da}.

\begin{table*}[ht]
\vspace{-0.1cm}
\centering
\scriptsize
\setlength{\tabcolsep}{2.5pt}
\renewcommand{\arraystretch}{1.12}
\caption{Synthetic target accuracy and HAR70+ results.
$(^*)$ Included for completeness and not directly comparable to the
other methods because they use additional task-specific knowledge and
assumptions \citep{aritake2022unsupervised}. }
\label{tab:da-results}

\begin{NiceTabular*}{\textwidth}
{@{\extracolsep{\fill}}lcccc lcccc@{}}
\toprule
\multicolumn{5}{c}{\textbf{(a) Synthetic target accuracy}}
&
\multicolumn{5}{c}{\textbf{(b) HAR70+ performance}} \\
\cmidrule(r){1-5}
\cmidrule(l){6-10}

Method & Blobs & Circles & Moons & Spirals
& Method & Accuracy & \shortstack{Objective}
& Feasibility & \shortstack{Stationarity} \\
\midrule

KL--BAPG & $0.955$ & $0.822$ & $0.917$ & $0.905$
& KL--BAPG & $0.880$ & $-0.012$ & $0.034$ & $1.559$ \\

KL--BAPG + Sinkhorn & $0.960$ & $0.821$ & $0.934$ & $0.907$
& KL--BAPG + Sinkhorn & $0.880$ & $-0.011$ & $0.010$ & $1.067$ \\

$1/k$ schedule & $0.981$ & $0.667$ & $0.801$ & $0.529$
& $1/k$ schedule & $0.662$ & $0.017$ & $5.04\times 10^{-6}$ & $4.029$ \\

Pure power & $0.981$ & $0.696$ & $0.848$ & $0.570$
& Pure power & $0.741$ & $2.97\times 10^{-4}$ & $4.25\times 10^{-5}$ & $2.311$ \\

\textbf{A-KL-BAPG} & $\mathbf{0.984}$ & $\mathbf{0.833}$ & $\mathbf{0.963}$ & $\mathbf{0.906}$
& \textbf{A-KL-BAPG} & $\mathbf{0.880}$ & $\mathbf{-0.010}$
& $\mathbf{5.07\times10^{-4}}$ & $\mathbf{0.207}$ \\

\cmidrule(r){1-5}
\cmidrule(l){6-10}

POT OT & $0.913$ & $0.685$ & $0.786$ & $0.597$
& POT OT & $0.827$ & -- & -- & $0.113$ \\

POT FGW & $0.912$ & $0.689$ & $0.786$ & $0.600$
& POT FGW & $0.827$ & -- & -- & $0.112$ \\

POT LA--FGW & $0.932$ & $0.708$ & $0.636$ & $0.729$
& POT LA--FGW & $0.829$ & -- & -- & $0.112$ \\

POT GB--FGW & $0.913$ & $0.684$ & $0.782$ & $0.596$
& POT GB--FGW & $0.854$ & -- & -- & $0.088$ \\

POT LAGB--FGW & $0.940$ & $0.826$ & $0.975$ & $0.859$
& POT LAGB--FGW$^*$ & $0.879$ & -- & -- & $0.086$ \\

\cmidrule(r){1-5}
\cmidrule(l){6-10}

POT GW & $0.502$ & $0.696$ & $0.530$ & $0.489$
& JDOT & $0.805$ & -- & -- & $1.857$ \\

Naive OT & $0.965$ & $0.605$ & $1.000$ & $0.947$
& DAEVS$^*$ & $0.895$ & -- & -- & -- \\

\bottomrule

\CodeAfter
\tikz \draw[line width=0.5pt]
    (1-|6) -- (15-|6);

\end{NiceTabular*}
\end{table*}

\paragraph{Real-data Domain Adaptation} We evaluate on HAR70+, a human-activity-recognition dataset containing motion-sensor recordings from 18 older adults performing four activities: walking, standing, sitting, and lying with accelerometers placed on the back and thigh to record motion along three spatial axes at 50\,Hz \citep{har70, aritake2026domain}.
The target graph is constructed by treating windows as nodes and connecting similar windows through a $4$-nn graph.
We evaluate all 18 subjects across 30 random splits, yielding 540 trials. Preprocessing, feature construction, and split details are provided in Appendix~\ref{app:real_da}.
Hyperparameters are set according to Section \ref{sec:syn_da}. 
We additionally include DAEVS \citep{aritake2022unsupervised} and JDOT \citep{courty2017joint}, with hyperparameters set according to the corresponding papers.
We report target classification accuracy, i.e., the proportion of target samples whose predicted labels match ground-truth labels, for all methods and stationarity for all methods solving an FGW problem. We also include raw GW objective and marginal feasibility gap for KL--BAPG (+ Sinkhorn rounding), $1/k$, pure power, and A-KL-BAPG. Other baselines are not optimizing over the same objective.

Table~\ref{tab:da-results} shows that A-KL-BAPG achieves the strongest overall performance among the BAPG variants. It matches KL--BAPG in target accuracy with a comparable GW objective and reduces the feasibility gap by $99\%$ and the stationarity residual by $85\%$. Sinkhorn rounding improves KL--BAPG's feasibility at the cost of stationarity, while A-KL-BAPG performs better on both metrics.
Among all baselines, A-KL-BAPG achieves one of the highest accuracies, second only to DAEVS, which is tailored to domain adaptation. DAEVS alternates between OT and hard pseudo-labeling and trains an RBF-SVM on the full set of target features, thereby denoising transported labels and exploiting additional sensor information \citep{aritake2022unsupervised}.
This advantage relies on informative initial pseudo-labels, a suitable target classifier, and conditional invariance of the additional features. In contrast, A-KL-BAPG requires neither classifier training nor pseudo-label feedback, yet achieves comparable accuracy. POT FGW baselines achieve competitive accuracy at much lower runtime.
With $\lambda=0.01$, the objective places much greater weight on the linear-feature cost, which may favor methods that linearize the structural term.
However, this speed--accuracy tradeoff is task-dependent: POT CG, SpecGW, and ScalaGW also use linearized structural updates, yet achieve substantially lower accuracy in graph alignment, as shown in Table~\ref{tab:syn_ga}-\ref{tab:real_ga}. A-KL-BAPG remains competitive across both applications, though sometimes at a higher computational cost.
The full comparison against all baselines is reported in  Table~\ref{tab:har70-all-methods} in Appendix \ref{app:real_da}. 

\section{Conclusion}
We proposed Adaptive KL-BAPG to remove the persistent feasibility gap of fixed-penalty BAPG while retaining its inexpensive alternating updates. Combining a finite fixed-penalty burn-in with an increasing-penalty phase and a projective-diameter safeguard to control the evolving coupling geometry, we achieve vanishing marginal infeasibility and absolute summability of the accumulated asymmetric Bregman errors.
Under a support regularity condition, the iterates admit a stationary accumulation point of the original constrained GW problem. Synthetic and real experiments in graph alignment and domain adaptation show improved feasibility and stationarity compared to KL-BAPG while maintaining competitive objective values and predictive performance. 
Future work will focus on strengthening the current subsequence guarantee to full-sequence convergence, weakening the support regularity condition, and reducing the computational cost of the safeguards.

\bibliography{refs}

@inproceedings{
li2023convergent,
title={A Convergent Single-Loop Algorithm for Relaxation of {G}romov-{W}asserstein in Graph Data },
author={Jiajin Li and Jianheng Tang and Lemin Kong and Huikang Liu and Jia Li and Anthony Man-Cho So and Jose Blanchet},
booktitle={The Eleventh International Conference on Learning Representations },
year={2023},
url={https://openreview.net/forum?id=0jxPyVWmiiF}
}

@article{chen2024spurious,
  title={Spurious Stationarity and Hardness Results for Bregman Proximal-Type Algorithms},
  author={Chen, He and Li, Jiajin and So, Anthony Man-Cho},
  journal={arXiv preprint arXiv:2404.08073},
  year={2024}
}

@article{aritake2026domain,
  author  = {Aritake, Toshimitsu and Hino, Hideitsu},
  title   = {Domain Adaptation With Additional Features via Label-Aware and Graph-Based Fused {G}romov-{W}asserstein Optimal Transport},
  journal = {Neural Computation},
  year    = {2026},
  volume  = {38},
  number  = {7},
  pages   = {1203--1232},
  doi     = {10.1162/NECO.a.1525}
}

@inproceedings{ballu2023mirror,
  title={Mirror Sinkhorn: fast online optimization on transport polytopes},
  author={Ballu, Marin and Berthet, Quentin},
  booktitle={International Conference on Machine Learning},
  pages={1595--1613},
  year={2023},
  organization={PMLR}
}

@article{benamou2015iterative,
  title={Iterative Bregman projections for regularized transportation problems},
  author={Benamou, Jean-David and Carlier, Guillaume and Cuturi, Marco and Nenna, Luca and Peyr{\'e}, Gabriel},
  journal={SIAM Journal on Scientific Computing},
  volume={37},
  number={2},
  pages={A1111--A1138},
  year={2015},
  publisher={SIAM}
}

@article{chizat2024annealed,
  title={Annealed sinkhorn for optimal transport: convergence, regularization path and debiasing},
  author={Chizat, L{\'e}na{\"\i}c},
  journal={arXiv preprint arXiv:2408.11620},
  year={2024}
}

@article{cuturi2013sinkhorn,
  title={Sinkhorn distances: Lightspeed computation of optimal transport},
  author={Cuturi, Marco},
  journal={Advances in neural information processing systems},
  volume={26},
  year={2013}
}

@article{courty2016optimal,
  title={Optimal transport for domain adaptation},
  author={Courty, Nicolas and Flamary, R{\'e}mi and Tuia, Devis and Rakotomamonjy, Alain},
  journal={IEEE transactions on pattern analysis and machine intelligence},
  volume={39},
  number={9},
  pages={1853-1865},
  year={2016},
  publisher={IEEE}
}

@inproceedings{demetci2024breaking,
  title={Breaking isometric ties and introducing priors in {G}romov-{W}asserstein distances},
  author={Demetci, Pinar and Tran, Quang Huy and Redko, Ievgen and Singh, Ritambhara},
  booktitle={International Conference on Artificial Intelligence and Statistics},
  pages={298-306},
  year={2024},
  organization={PMLR}
}

@article{deng2026glncd,
  title={{GLNCD}: Graph-Level Novel Category Discovery},
  author={Deng, Bowen and Fu, Lele and Huang, Sheng and Liao, Tianchi and Chen, Jialong and Tao, Zhang and Chen, Chuan},
  journal={Advances in Neural Information Processing Systems},
  volume={38},
  pages={171058-171087},
  year={2026}
}

@inproceedings{eveson1995elementary,
  title={An elementary proof of the Birkhoff-Hopf theorem},
  author={Eveson, Simon P and Nussbaum, Roger D},
  booktitle={Mathematical Proceedings of the Cambridge Philosophical Society},
  volume={117},
  pages={31-55},
  year={1995},
  organization={Cambridge University Press}
}

@article{franklin1989scaling,
  title={On the scaling of multidimensional matrices},
  author={Franklin, Joel and Lorenz, Jens},
  journal={Linear Algebra and its applications},
  volume={114},
  pages={717-735},
  year={1989},
  publisher={Elsevier}
}

@article{idel2016review,
  title={A review of matrix scaling and Sinkhorn's normal form for matrices and positive maps},
  author={Idel, Martin},
  journal={arXiv preprint arXiv:1609.06349},
  year={2016}
}

@article{liang2026provably,
  title={A Provably Convergent and Practical Algorithm for {G}romov--{W}asserstein Optimal Transport},
  author={Liang, Ling and Yang, Lei},
  journal={arXiv preprint arXiv:2605.04175},
  year={2026}
}

@article{memoli2011gromov,
  title={{G}romov-{W}asserstein distances and the metric approach to object matching},
  author={M{\'e}moli, Facundo},
  journal={Foundations of computational mathematics},
  volume={11},
  number={4},
  pages={417-487},
  year={2011},
  publisher={Springer}
}

@inproceedings{peyre2016gromov,
  title={{G}romov-{W}asserstein averaging of kernel and distance matrices},
  author={Peyr{\'e}, Gabriel and Cuturi, Marco and Solomon, Justin},
  booktitle={International conference on machine learning},
  pages={2664-2672},
  year={2016},
  organization={PMLR}
}

@article{rioux2024entropic,
  title={Entropic {G}romov-{W}asserstein distances: Stability and algorithms},
  author={Rioux, Gabriel and Goldfeld, Ziv and Kato, Kengo},
  journal={Journal of Machine Learning Research},
  volume={25},
  number={363},
  pages={1-52},
  year={2024}
}

@article{sinkhorn1967concerning,
  title={Concerning nonnegative matrices and doubly stochastic matrices},
  author={Sinkhorn, Richard and Knopp, Paul},
  journal={Pacific Journal of Mathematics},
  volume={21},
  number={2},
  pages={343-348},
  year={1967},
  publisher={Mathematical Sciences Publishers}
}

@article{solomon2016entropic,
  title={Entropic metric alignment for correspondence problems},
  author={Solomon, Justin and Peyr{\'e}, Gabriel and Kim, Vladimir G and Sra, Suvrit},
  journal={ACM Transactions on Graphics (ToG)},
  volume={35},
  number={4},
  pages={1-13},
  year={2016},
  publisher={ACM New York, NY, USA}
}

@inproceedings{xu2019gromov,
  title={{G}romov-{W}asserstein learning for graph matching and node embedding},
  author={Xu, Hongteng and Luo, Dixin and Zha, Hongyuan and Duke, Lawrence Carin},
  booktitle={International conference on machine learning},
  pages={6932-6941},
  year={2019},
  organization={PMLR}
}

@inproceedings{yan2018semi,
  title={Semi-Supervised Optimal Transport for Heterogeneous Domain Adaptation.},
  author={Yan, Yuguang and Li, Wen and Wu, Hanrui and Min, Huaqing and Tan, Mingkui and Wu, Qingyao},
  booktitle={IJCAI},
  volume={7},
  pages={2969-2975},
  year={2018}
}

@article{aritake2022unsupervised,
  title={Unsupervised domain adaptation for extra features in the target domain using optimal transport},
  author={Aritake, Toshimitsu and Hino, Hideitsu},
  journal={Neural Computation},
  volume={34},
  number={12},
  pages={2432-2466},
  year={2022},
  publisher={MIT Press One Rogers Street, Cambridge, MA 02142-1209, USA journals-info~…}
}

@inproceedings{titouan2019optimal,
  title={Optimal transport for structured data with application on graphs},
  author={Vayer, Titouan and Courty, Nicolas and Tavenard, Romain and Flamary, R{\'e}mi},
  booktitle={International Conference on Machine Learning},
  pages={6275-6284},
  year={2019},
  organization={PMLR}
}

@article{flamary2021pot,
  title={{POT}: Python optimal transport},
  author={Flamary, R{\'e}mi and Courty, Nicolas and Gramfort, Alexandre and Alaya, Mokhtar Z and Boisbunon, Aur{\'e}lie and Chambon, Stanislas and Chapel, Laetitia and Corenflos, Adrien and Fatras, Kilian and Fournier, Nemo and others},
  journal={Journal of Machine Learning Research},
  volume={22},
  number={78},
  pages={1-8},
  year={2021}
}

@article{courty2017joint,
  title={Joint distribution optimal transportation for domain adaptation},
  author={Courty, Nicolas and Flamary, R{\'e}mi and Habrard, Amaury and Rakotomamonjy, Alain},
  journal={Advances in neural information processing systems},
  volume={30},
  year={2017}
}

@article{ng2001spectral,
  title={On spectral clustering: Analysis and an algorithm},
  author={Ng, Andrew and Jordan, Michael and Weiss, Yair},
  journal={Advances in neural information processing systems},
  volume={14},
  year={2001}
}

@article{morris2020tudataset,
  title={{TUD}ataset: A collection of benchmark datasets for learning with graphs},
  author={Morris, Christopher and Kriege, Nils M and Bause, Franka and Kersting, Kristian and Mutzel, Petra and Neumann, Marion},
  journal={arXiv preprint arXiv:2007.08663},
  year={2020}
}

@article{Borgwardt2005Protein,
  author    = {Borgwardt, Karsten M. and Ong, Cheng Soon and Sch{\"o}nauer, Stefan and Vishwanathan, S. V. N. and Smola, Alex J. and Kriegel, Hans-Peter},
  title     = {Protein Function Prediction via Graph Kernels},
  journal   = {Bioinformatics},
  volume    = {21},
  number    = {Suppl. 1},
  pages     = {i47-i56},
  year      = {2005},
  publisher = {Oxford University Press},
  doi       = {10.1093/bioinformatics/bti1007}
}

@article{dobson2003distinguishing,
  title={Distinguishing enzyme structures from non-enzymes without alignments},
  author={Dobson, Paul D and Doig, Andrew J},
  journal={Journal of molecular biology},
  volume={330},
  number={4},
  pages={771-783},
  year={2003},
  publisher={Elsevier}
}

@article{xu2019scalable,
  title={Scalable {G}romov-{W}asserstein learning for graph partitioning and matching},
  author={Xu, Hongteng and Luo, Dixin and Carin, Lawrence},
  journal={Advances in neural information processing systems},
  volume={32},
  year={2019}
}

@inproceedings{chowdhury2021generalized,
  title={Generalized spectral clustering via {G}romov-{W}asserstein learning},
  author={Chowdhury, Samir and Needham, Tom},
  booktitle={International Conference on Artificial Intelligence and Statistics},
  pages={712-720},
  year={2021},
  organization={PMLR}
}

@article{vogelstein2015fast,
  title={Fast approximate quadratic programming for graph matching},
  author={Vogelstein, Joshua T and Conroy, John M and Lyzinski, Vince and Podrazik, Louis J and Kratzer, Steven G and Harley, Eric T and Fishkind, Donniell E and Vogelstein, R Jacob and Priebe, Carey E},
  journal={PLOS one},
  volume={10},
  number={4},
  pages={e0121002},
  year={2015},
  publisher={Public Library of Science San Francisco, CA USA}
}

@article{kuhn1955hungarian,
  title={The Hungarian method for the assignment problem},
  author={Kuhn, Harold W},
  journal={Naval research logistics quarterly},
  volume={2},
  number={1-2},
  pages={83-97},
  year={1955},
  publisher={Wiley Online Library}
}

@inproceedings{memoli2009spectral,
  title={Spectral {G}romov-{W}asserstein distances for shape matching},
  author={M{\'e}moli, Facundo},
  booktitle={2009 IEEE 12th International Conference on Computer Vision Workshops, ICCV Workshops},
  pages={256-263},
  year={2009},
  organization={IEEE}
}

@inproceedings{memoli2004comparing,
  title={Comparing point clouds},
  author={M{\'e}moli, Facundo and Sapiro, Guillermo},
  booktitle={Proceedings of the 2004 Eurographics/ACM SIGGRAPH symposium on Geometry processing},
  pages={32-40},
  year={2004}
}

@article{chowdhury2019gromov,
  title={The {G}romov-{W}asserstein distance between networks and stable network invariants},
  author={Chowdhury, Samir and M{\'e}moli, Facundo},
  journal={Information and Inference: A Journal of the IMA},
  volume={8},
  number={4},
  pages={757-787},
  year={2019},
  publisher={Oxford University Press}
}

@article{har70,
  title={Validation of an activity type recognition model classifying daily physical behavior in older adults: the HAR70+ model},
  author={Ustad, Astrid and Logacjov, Aleksej and Trolleb{\o}, Stine {\O}verengen and Thingstad, Pernille and Vereijken, Beatrix and Bach, Kerstin and Maroni, Nina Skj{\ae}ret},
  journal={Sensors},
  volume={23},
  number={5},
  pages={2368},
  year={2023},
  publisher={MDPI}
}

@article{burke1988identification,
  title={On the identification of active constraints},
  author={Burke, James V and Mor{\'e}, Jorge J},
  journal={SIAM Journal on Numerical Analysis},
  volume={25},
  number={5},
  pages={1197--1211},
  year={1988},
  publisher={SIAM}
}

@article{lewis2011identifying,
  title={Identifying activity},
  author={Lewis, Adrian S and Wright, Stephen J},
  journal={SIAM Journal on Optimization},
  volume={21},
  number={2},
  pages={597--614},
  year={2011},
  publisher={SIAM}
}

@article{liang2017activity,
  title={Activity identification and local linear convergence of forward--backward-type methods},
  author={Liang, Jingwei and Fadili, Jalal and Peyr{\'e}, Gabriel},
  journal={SIAM Journal on Optimization},
  volume={27},
  number={1},
  pages={408--437},
  year={2017},
  publisher={SIAM}
}
\bibliographystyle{plainnat}

\appendix
\section{Proofs}
\label{app:proofs}

This appendix records the main technical steps omitted from the main text to establish Theorem~\ref{thm:stationarity}. The argument first relates the guarded BAPG cycle to exact Sinkhorn contraction, then derives feasibility, Lyapunov, residual, and stationarity statements. All asymptotic statements focus on the tail iteration and a finite prefix changes only constants.

\subsection{Projective contraction and Exact Sinkhorn}\label{app:proof_sinkhorn}
\begin{definition}[Hilbert's projective metric]
For $u,v\in \mathbb{R}_{++}^r$, define
\[d_H(u,v) := \log \frac{\max_i (u_i/v_i)}{\min_i (u_i/v_i)}.\]
\end{definition}

\begin{definition}[Projective diameter]
For $K\in \mathbb{R}_{++}^{r\times s}$, define
\[
\Delta(K) := \max_{i,i',j,j'} \log \frac{K_{ij}K_{i'j'}}{K_{ij'}K_{i'j}}.
\]
\end{definition}

\begin{theorem}[Birkhoff contraction theorem \cite{eveson1995elementary}] \label{theo:birk-contract}
For $K\in \mathbb{R}_{++}^{r\times s}$ and for all $u,v\in \mathbb{R}_{++}^s$,
\[
d_H(Ku,Kv) \leq \tanh\left(\frac{\Delta(K)}{4}\right)d_H(u,v).
\]
\end{theorem}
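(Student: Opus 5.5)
The plan is to prove the estimate directly, one pair of rows at a time. The proof reduces everything to a one-dimensional integral of logistic functions, so no general cone-theoretic machinery is needed.

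\emph{Reduction.} Fix $u,v\in\mathbb R^s_{++}$ and set $x_j:=u_j/v_j$, $m:=\min_j x_j$, $M:=\max_j x_j$, $d:=d_H(u,v)=\log(M/m)$. If $d=0$ then $u\propto v$, hence $Ku\propto Kv$, and both sides vanish. Otherwise, since
\[
d_H(Ku,Kv)=\max_{i,i'}\log\frac{(Ku)_i\,(Kv)_{i'}}{(Kv)_i\,(Ku)_{i'}},
\]
it suffices to bound the quantity inside the maximum for a fixed pair $i,i'$. Write
\[
\frac{(Ku)_i}{(Kv)_i}=\sum_j p_jx_j,\qquad p_j:=\frac{K_{ij}v_j}{(Kv)_i},
\]
and define $q_j$ analogously with row $i'$. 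Then $p$ and $q$ are positive probability vectors, and we must bound $\log\bigl(\langle p,x\rangle/\langle q,x\rangle\bigr)$ uniformly over $x\in[m,M]^s$.

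\emph{Extremal structure.} The map $x\mapsto \langle p,x\rangle/\langle q,x\rangle$ is linear-fractional, so it is quasi-convex and its maximum over the box is attained at a vertex. At a vertex $x=m\mathbf 1+(M-m)\mathbf 1_S$ for some $S\subseteq\{1,\dots,s\}$. Setting $a=p(S)$, $b=q(S)$ and $\lambda=e^d$, the ratio becomes
\[
\frac{1+(\lambda-1)a}{1+(\lambda-1)b}.
\]
The positivity information on $K$ enters through the likelihood ratio
\[
\frac{p_j}{q_j}=\frac{K_{ij}}{K_{i'j}}\cdot\frac{(Kv)_{i'}}{(Kv)_i}.
\]
By the definition of $\Delta(K)$, this ratio varies over $j$ by at most a factor $e^{\Delta}$, where $\Delta:=\Delta(K)$. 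Consequently $a/b\le c\,e^{\Delta}$ and $(1-b)/(1-a)\le c^{-1}$ for $c:=\min_j p_j/q_j$, which yields the odds-ratio bound
\[
\frac{a(1-b)}{b(1-a)}\le e^{\Delta}.
\]
The degenerate cases $S=\emptyset$ and $S=\{1,\dots,s\}$ give ratio $1$ and can be discarded.

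\emph{Key computation (the expected obstacle).} The remaining task is the two-parameter optimisation, and the trick I plan to use is the logistic parametrisation. Write $a=\sigma(\alpha)$ and $b=\sigma(\beta)$ with $\sigma(z)=1/(1+e^{-z})$, so that the odds-ratio constraint reads $\alpha-\beta\le\Delta$. The identity
\[
\log\bigl(1+(e^d-1)\sigma(\alpha)\bigr)=\log\frac{1+e^{\alpha+d}}{1+e^{\alpha}}=\int_0^d\sigma(\alpha+s)\,ds
\]
gives
\[
\log\frac{1+(\lambda-1)a}{1+(\lambda-1)b}=\int_0^d\bigl[\sigma(\alpha+s)-\sigma(\beta+s)\bigr]\,ds .
\]
Since $\sigma$ is increasing, we may assume $\alpha-\beta=\Delta$. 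The integrand is then $\sigma(c+\Delta/2)-\sigma(c-\Delta/2)$ with $c=s+(\alpha+\beta)/2$. By symmetry and the unimodality of $\sigma'$, this difference is maximised at $c=0$, where it equals $2\sigma(\Delta/2)-1=\tanh(\Delta/4)$. Integrating over $s\in[0,d]$ gives the bound $\tanh(\Delta/4)\,d$ for every pair $(i,i')$, and taking the maximum over pairs yields the theorem.

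The step needing the most care is the vertex reduction combined with the odds-ratio constraint, that is, checking that the constraint on $(a,b)$ really is $a(1-b)\le e^{\Delta}\,b(1-a)$. Once that is in place, the integral identity makes the sharp constant $\tanh(\Delta/4)$ fall out without any further calculus.
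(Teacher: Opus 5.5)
Your proof is correct and complete. It also necessarily differs from the paper, which gives no argument for this statement. The paper imports the result from Eveson and Nussbaum, whose proof is phrased for linear maps preserving general cones, with $\Delta$ defined as the Hilbert-metric diameter of the image cone.

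Your argument works directly with matrix entries and effectively reduces to the $2\times 2$ case. Fixing a pair of rows $(i,i')$ and passing to a vertex $x=m\mathbf 1+(M-m)\mathbf 1_S$ leaves the $2\times 2$ positive matrix with rows $(a,1-a)$ and $(b,1-b)$ acting on $(\lambda,1)^\top$. Its log cross-ratio $\log\frac{a(1-b)}{b(1-a)}$ you correctly bound by $\Delta(K)$, using the likelihood-ratio spread $\max_j(p_j/q_j)\le e^{\Delta}\min_j(p_j/q_j)$.

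The logistic identity $\log\frac{1+e^{\alpha+d}}{1+e^{\alpha}}=\int_0^d\sigma(\alpha+s)\,ds$ then replaces the usual two-variable optimisation by a pointwise bound, $\sigma(c+\Delta/2)-\sigma(c-\Delta/2)\le 2\sigma(\Delta/2)-1=\tanh(\Delta/4)$. This bound holds because the difference equals $\int_{c-\Delta/2}^{c+\Delta/2}\sigma'$, with $\sigma'$ even and decreasing in $|z|$. The rank-one case $\Delta=0$ needs no separate treatment, since the integrand is then nonpositive.

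Your route buys self-containedness and direct use of the paper's cross-ratio definition of $\Delta(K)$. You therefore never need the standard but unstated identification of that formula with the cone diameter $\sup_{u,v}d_H(Ku,Kv)$ in which the cited theorem is phrased. It also covers exactly what the paper uses, namely strictly positive matrices such as $A_X$ and $B_X$ in Lemma~\ref{lem:sinkhorn-projective-contraction}.

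The citation buys generality: general cones, and maps that are not strictly positive but have finite projective diameter. It also gives the fact that $\tanh(\Delta/4)$ is the exact contraction ratio. The paper needs neither.
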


\begin{lemma}\label{lem:inv-hilb-proj}
For positive vectors $u,v\in \mathbb{R}_{++}^r$, $d_H(u^{-1},v^{-1})=d_H(u,v)$, where inversion is entrywise. 
Moreover, if $D_1$ and $D_2$ are positive diagonal matrices and $X$ is strictly positive, then $\Delta(D_1XD_2)=\Delta(X)$.
\end{lemma}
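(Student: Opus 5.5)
Both claims of the lemma reduce to direct computation from the definitions; the only tool needed is the cancellation of diagonal scalings inside ratios.

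\emph{First claim.} Write $r_i:=u_i/v_i>0$. The ratio vector for the inverted pair is
\[
\frac{u_i^{-1}}{v_i^{-1}}=\frac{v_i}{u_i}=r_i^{-1}.
\]
Since $x\mapsto x^{-1}$ is strictly decreasing on $(0,\infty)$, we have
\[
\max_i r_i^{-1}=\bigl(\min_i r_i\bigr)^{-1},\qquad \min_i r_i^{-1}=\bigl(\max_i r_i\bigr)^{-1}.
\]
Hence
\[
d_H(u^{-1},v^{-1})=\log\frac{(\min_i r_i)^{-1}}{(\max_i r_i)^{-1}}=\log\frac{\max_i r_i}{\min_i r_i}=d_H(u,v).
\]

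\emph{Second claim.} Let $D_1=\mathrm{diag}(p)$ and $D_2=\mathrm{diag}(q)$ with $p,q$ entrywise positive. Then $Y:=D_1XD_2$ has entries $Y_{ij}=p_iX_{ij}q_j>0$, so $Y$ is strictly positive and $\Delta(Y)$ is defined. For every quadruple $(i,i',j,j')$,
\[
\frac{Y_{ij}Y_{i'j'}}{Y_{ij'}Y_{i'j}}
=\frac{p_iq_j\,p_{i'}q_{j'}}{p_iq_{j'}\,p_{i'}q_j}\cdot\frac{X_{ij}X_{i'j'}}{X_{ij'}X_{i'j}}
=\frac{X_{ij}X_{i'j'}}{X_{ij'}X_{i'j}}.
\]
Each index appears exactly once in the numerator and once in the denominator, so the scaling factors cancel. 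The two cross-ratio families therefore coincide term by term over the same finite index set. Taking the logarithm and the maximum over $(i,i',j,j')$ gives $\Delta(D_1XD_2)=\Delta(X)$.

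Neither step poses a real obstacle. The only point to state explicitly is that positivity is preserved, so that both sides are defined. This invariance is what later allows the row and column normalizations in~\eqref{eq:row}--\eqref{eq:col} to be ignored when tracking $\Delta$; only the multiplicative factors $\exp(\alpha G)$ can change the diameter.
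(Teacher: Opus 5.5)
Your proposal is correct and follows the paper's proof essentially verbatim: the first claim via $\max_i v_i/u_i = 1/\min_i(u_i/v_i)$ and its counterpart for the minimum, and the second via cancellation of the diagonal factors in each cross-ratio before taking the maximum. Your explicit remark that $D_1XD_2$ stays strictly positive, so that $\Delta(D_1XD_2)$ is defined, is a small addition the paper leaves implicit.
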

\begin{proof}
    We know that 
    \[\max_{i} \frac{u_i^{-1}}{v_i^{-1}} = \max_i \frac{v_i}{u_i} = \frac{1}{\min_i \frac{u_i}{v_i}},\]
    and similarly \[\min_i \frac{u_i^{-1}}{v_i^{-1}} = \frac{1}{\max_i \frac{u_i}{v_i}}.\] Thus, the quotient inside the logarithm defining $d_H$ remains unchanged. 
    Denote $(D_1XD_2)_{ij} = a_iX_{ij}b_j$ with $a_i, b_j > 0$. For every $(i, i^\prime, j, j^\prime)$,
    \[
    \frac{(D_1XD_2)_{ij}(D_1XD_2)_{i^\prime j^\prime}}{(D_1XD_2)_{ij^\prime}(D_1XD_2)_{i^\prime j}} 
    = \frac{a_iX_{ij}b_j a_{i^\prime}X_{i^\prime j^\prime}b_{j^\prime}}{a_iX_{ij^\prime}b_{j^\prime }a_{i^\prime}X_{i^\prime j}b_j} 
    = \frac{X_{ij}X_{i^\prime j^\prime}}{X_{ij^\prime}X_{i^\prime j}}.
    \]
    Thus, taking the maxima of logarithms gives $\Delta (D_1 X D_2) = \Delta(X)$.
\end{proof}

For a strictly positive matrix $X$, we define the exact KL projections onto the row and column marginal constraints by
$R(X) := \mathcal P^{\mathrm{KL}}_{\mathcal C_1}(X)$, 
$C(X) := \mathcal P^{\mathrm{KL}}_{\mathcal C_2}(X)$, and
$S(X) := C(R(X))$.
Thus $S$ denotes one exact Sinkhorn row-column scaling cycle. When $X\in \mathcal C_2$, define its row-ratio vector
$r(X) := \frac{1}{\mu}X\mathbf 1_m \in \mathbb R^n_{++}$ and $r_i(X)=\frac{1}{\mu_i}\sum_j X_{ij}$.
Then $r(X)=\mathbf 1_n$ if and only if $X\in \mathcal C_1$. We also define two row-stochastic matrices associated with $X$:
\[
(A_X)_{ij}:=\frac{X_{ij}}{\sum_{\ell}X_{i\ell}},
\qquad
(B_X)_{ji}:=\frac{X_{ij}}{\nu_j}.
\]
Since $X\in \mathcal C_2$, each column of $X$ sums to $\nu_j$ and $B_X$ is
row-stochastic.

\begin{lemma}
\label{lem:exact-row-ratio-dynamics}
Let $X\in \mathcal C_2$ be strictly positive. Then
$r(S(X)) = A_X\left(B_X(r(X)^{-1})\right)^{-1}$,
where inversion is entrywise.
\end{lemma}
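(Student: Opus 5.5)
The plan is a direct computation: write the closed forms of the two exact KL projections, push the row-ratio vector $r(X)$ through them, and read off the stated identity. First I will record that for a strictly positive matrix $Y$, the KL projection onto $\mathcal C_1$ is the row normalization $R(Y)_{ij}=\mu_i Y_{ij}/\sum_\ell Y_{i\ell}$. The projection onto $\mathcal C_2$ is the column normalization $C(Y)_{ij}=\nu_j Y_{ij}/\sum_\ell Y_{\ell j}$. Both follow from the Lagrangian of $\min D_h(\cdot,Y)$ under the corresponding single-marginal constraint, exactly as in the closed forms \eqref{eq:row}--\eqref{eq:col} with zero gradient step. Strict positivity of $X$ guarantees that every normalizer below is positive and every entrywise inverse is well defined.

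\textbf{Step 1 (row projection).} Write $r=r(X)$, so that $\sum_\ell X_{i\ell}=\mu_i r_i$ by definition. Then
\[
R(X)_{ij}=\frac{\mu_i X_{ij}}{\mu_i r_i}=\frac{X_{ij}}{r_i},
\]
so the row step simply divides row $i$ by $r_i$.

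\textbf{Step 2 (column sums after the row step).} Since $X\in\mathcal C_2$ and $(B_X)_{ji}=X_{ij}/\nu_j$, the $j$-th column sum of $R(X)$ is
\[
c_j=\sum_i \frac{X_{ij}}{r_i}=\nu_j\sum_i (B_X)_{ji}\,r_i^{-1}=\nu_j\,q_j,
\qquad q:=B_X(r^{-1}).
\]
The column projection therefore gives
\[
S(X)_{ij}=\frac{\nu_j R(X)_{ij}}{c_j}=\frac{X_{ij}}{r_i\,q_j}.
\]

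\textbf{Step 3 (new row ratios).} Using $(A_X)_{ij}=X_{ij}/\sum_\ell X_{i\ell}=X_{ij}/(\mu_i r_i)$,
\[
r_i(S(X))=\frac1{\mu_i}\sum_j \frac{X_{ij}}{r_i q_j}=\sum_j (A_X)_{ij}\,q_j^{-1}.
\]
In vector form this is $r(S(X))=A_X\bigl(B_X(r(X)^{-1})\bigr)^{-1}$, which is the claim. I also note that $S(X)\in\mathcal C_2$, so $r(S(X))$ is well defined.

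There is no real obstacle here. The only point needing care is the bookkeeping that rewrites the normalizers in terms of $r$ and $\nu$: one must use $X\in\mathcal C_2$ so that $B_X$ is row-stochastic and the column sums are exactly $\nu_j$, and use $\sum_\ell X_{i\ell}=\mu_i r_i$ so that $A_X$ appears with the right scaling. I would double-check that the indexing convention $(B_X)_{ji}$, which makes $B_X$ an $m\times n$ matrix acting on $r^{-1}\in\mathbb R^n_{++}$, is consistent throughout.
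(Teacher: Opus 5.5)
Your proposal is correct and follows the paper's proof essentially line by line. The row step divides row $i$ by $r_i$, the resulting column sums are $\nu_j\bigl(B_X(r^{-1})\bigr)_j$, and substituting $X_{ij}=\mu_i r_i (A_X)_{ij}$ into the new row sums yields $r(S(X))=A_X\bigl(B_X(r(X)^{-1})\bigr)^{-1}$.
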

\begin{proof}
    Recall that \[R(X)_{ij} = \frac{\mu_i}{\sum_{l}X_{il}}X_{ij} = X_{ij}/r_i.\] Denote $t_j = \sum_i X_{ij}/r_i$. \[\frac{t_j}{\nu_j} = \sum_{i} \frac{X_{ij}}{\nu_j} r_i^{-1} = \left(B_X(r^{-1})\right)_j.\] Then $S(X)_{ij} = \frac{\nu_j}{t_j} \frac{X_{ij}}{r_{i}}$. Since $X_{ij} = \mu_i r_i(A_X)_{ij}$,  
    \[
    \frac{(S(X)\mathbf 1_m)_i}{\mu_i} = \frac{1}{\mu_i r_i} \sum_j X_{ij}\frac{\nu_j}{t_j} = \sum_j (A_X)_{ij}\frac{\nu_j}{t_j} = \sum_j (A_X)_{ij}\left(B_X(r^{-1})\right)^{-1}_{j}.
    \]
    
\end{proof}

 From Lemma \ref{lem:exact-row-ratio-dynamics}, we can view one exact Sinkhorn cycle as two positive linear maps applied to the row-ratio vector. Hence, Theorem \ref{theo:birk-contract} gives the following estimate.

\begin{lemma}[Projective contraction of exact Sinkhorn]
\label{lem:sinkhorn-projective-contraction}
Let $X\in \mathcal C_2$ be strictly positive. Then
\[
d_H(r(S(X)),\mathbf 1_n)
\leq
\tanh^2\!\left(\frac{\Delta(X)}{4}\right)
d_H(r(X),\mathbf 1_n).
\]
\end{lemma}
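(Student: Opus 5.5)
The plan is to combine Lemma~\ref{lem:exact-row-ratio-dynamics} with two applications of Theorem~\ref{theo:birk-contract}, using the inversion invariance of Lemma~\ref{lem:inv-hilb-proj} in between. Write $r=r(X)$. By Lemma~\ref{lem:exact-row-ratio-dynamics},
\[
r(S(X)) = A_X\bigl(B_X(r^{-1})\bigr)^{-1}.
\]
The key observation is that $\mathbf 1_n$ has the same form. Since $A_X$ and $B_X$ are row-stochastic, we have $B_X\mathbf 1_n=\mathbf 1_m$ and $A_X\mathbf 1_m=\mathbf 1_n$. Hence
\[
\mathbf 1_n = A_X\bigl(B_X(\mathbf 1_n^{-1})\bigr)^{-1}.
\]
So both vectors are images of the same composite map, applied to $r$ and to $\mathbf 1_n$ respectively. (We may also regard $\mathbf 1_n$ as $r(S(X'))$ for a feasible $X'$, but this is unnecessary.)

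Next, chain the estimates. First apply Theorem~\ref{theo:birk-contract} to $A_X$, a positive $n\times m$ matrix, with inputs $u=(B_X r^{-1})^{-1}$ and $v=\mathbf 1_m$:
\[
d_H\bigl(r(S(X)),\mathbf 1_n\bigr)\le \tanh\!\bigl(\Delta(A_X)/4\bigr)\, d_H\bigl((B_Xr^{-1})^{-1},\mathbf 1_m\bigr).
\]
By Lemma~\ref{lem:inv-hilb-proj}, the last factor equals $d_H(B_Xr^{-1},\mathbf 1_m)=d_H(B_Xr^{-1},B_X\mathbf 1_n)$. Applying Theorem~\ref{theo:birk-contract} again, now to $B_X$, bounds this by
\[
\tanh\!\bigl(\Delta(B_X)/4\bigr)\,d_H(r^{-1},\mathbf 1_n),
\]
and Lemma~\ref{lem:inv-hilb-proj} gives $d_H(r^{-1},\mathbf 1_n)=d_H(r,\mathbf 1_n)$.

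It remains to identify the diameters. We have $A_X=\operatorname{diag}(1/(X\mathbf 1_m))\,X$, so Lemma~\ref{lem:inv-hilb-proj} gives $\Delta(A_X)=\Delta(X)$. Likewise $B_X=(X\operatorname{diag}(1/\nu))^\top$. The diameter is invariant under transposition, since the defining cross-ratio $\frac{X_{ij}X_{i'j'}}{X_{ij'}X_{i'j}}$ is symmetric under swapping the roles of rows and columns. Therefore $\Delta(B_X)=\Delta(X)$, and combining the two contraction factors gives $\tanh^2(\Delta(X)/4)$.

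The main point requiring care is applying Birkhoff's theorem with the correct orientation. $d_H$ is defined via ratios of vectors, and Theorem~\ref{theo:birk-contract} applies to $K$ acting on column vectors, so we should check that $A_X$ acts on $m$-vectors and $B_X$ on $n$-vectors, consistent with the dimensions in the formula. We should also confirm that all vectors involved are strictly positive, which holds because $X>0$. Otherwise the argument is a direct composition of the earlier lemmas.
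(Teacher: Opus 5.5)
Your proposal is correct and follows the paper's proof essentially step for step. Both arguments use Lemma~\ref{lem:exact-row-ratio-dynamics}, apply Birkhoff contraction for $A_X$ against $A_X\mathbf 1_m=\mathbf 1_n$, use inversion invariance, apply Birkhoff again for $B_X$ against $B_X\mathbf 1_n=\mathbf 1_m$, and identify both diameters with $\Delta(X)$. Your explicit remark that $\Delta$ is invariant under transposition, needed because $B_X=\operatorname{diag}(1/\nu)X^\top$, fills in a small step that the paper attributes only to Lemma~\ref{lem:inv-hilb-proj}, which covers diagonal scaling alone.
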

\begin{proof}
    Since $A_X \mathbf 1_m = \mathbf 1_n$, $d_H(r(S(X)), \mathbf 1_n) = d_H\left(A_X\left(B_X(r^{-1})^{-1}\right), A_X(\mathbf 1_m)\right)$. By Lemma \ref{lem:inv-hilb-proj}, $d_H(B_X(r^{-1})^{-1}, \mathbf{1}_m) = d_H(B_X(r^{-1}), \mathbf{1}_m)$.
    By Theorem \ref{theo:birk-contract} and Lemma \ref{lem:inv-hilb-proj}, 
    \[
    d_H(r(S(X)), \mathbf 1_n) \leq \tanh \left(\frac{\Delta (A_X)}{4}\right)d_H(B_X(r^{-1}), \mathbf{1}_m) = \tanh \left(\frac{\Delta (X)}{4}\right)d_H(B_X(r^{-1}), \mathbf{1}_m) .
    \]
    Apply Theorem \ref{theo:birk-contract} and Lemma \ref{lem:inv-hilb-proj} again, 
    \[
    d_H(B_X(r^{-1}), \mathbf{1}_m) \leq \tanh \left(\frac{\Delta(B_X)}{4}\right)d_H(r^{-1}, \mathbf 1_n) = \tanh \left(\frac{\Delta(X)}{4}\right)d_H(r, \mathbf 1_n).
    \]
    Therefore, combining all together, we obtain
    \[
d_H(r(S(X)),\mathbf 1_n)
\leq
\tanh^2\!\left(\frac{\Delta(X)}{4}\right)
d_H(r(X),\mathbf 1_n).
\]
\end{proof}

\subsection{Controlled Projective Growth and Finite Backtracking}\label{app:proof_backtrack}
BAPG differs from the exact Sinkhorn only by the exponential perturbation applied before each normalization. In the following Lemma \ref{lem:projective-growth-cycle}, we control the projective diameter growth of one cycle.

\begin{lemma}
\label{lem:projective-growth-cycle}
For any accepted A-KL-BAPG step with
step size $\alpha>0$,
    $\Delta(w^{t+1})\leq \Delta(w)+B_\Delta\alpha$,
where $B_\Delta := 8M_G$ is a conservative projective-growth constant.
\end{lemma}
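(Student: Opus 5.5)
The plan is to track how each of the four operations in one accepted cycle, $w\mapsto \widetilde z_1\mapsto\pi^{t+1}\mapsto\widetilde z_2\mapsto w^{t+1}$, changes the projective diameter $\Delta$. By Lemma~\ref{lem:inv-hilb-proj}, $\Delta$ is invariant under left and right multiplication by positive diagonal matrices. Both normalizations are of this form: the row normalization is $\pi^{t+1}=D_1\widetilde z_1$ with $D_1=\mathrm{diag}(\mu_i/\sum_\ell \widetilde z_{1,i\ell})$, and the column normalization is $w^{t+1}=\widetilde z_2D_2$. Hence $\Delta(\pi^{t+1})=\Delta(\widetilde z_1)$ and $\Delta(w^{t+1})=\Delta(\widetilde z_2)$. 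The normalizations therefore contribute nothing, and all growth comes from the two multiplicative gradient perturbations.

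The key estimate is a perturbation bound for the Hadamard product. For positive $X$ and any real matrix $E$,
\[
\log\frac{(X\odot e^{E})_{ij}(X\odot e^{E})_{i'j'}}{(X\odot e^{E})_{ij'}(X\odot e^{E})_{i'j}}
=\log\frac{X_{ij}X_{i'j'}}{X_{ij'}X_{i'j}}+(E_{ij}+E_{i'j'}-E_{ij'}-E_{i'j}).
\]
The bracketed term is at most $4\|E\|_\infty$, so taking the maximum over $(i,i',j,j')$ gives $\Delta(X\odot e^{E})\le\Delta(X)+4\|E\|_\infty$. We apply this twice:
\begin{itemize}
\item with $E=\alpha G(w)$, which gives $\Delta(\pi^{t+1})\le\Delta(w)+4\alpha\|G(w)\|_\infty$;
\item with $E=\alpha G(\pi^{t+1})$, which gives $\Delta(w^{t+1})\le\Delta(\pi^{t+1})+4\alpha\|G(\pi^{t+1})\|_\infty$.
\end{itemize}

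It remains to bound the two gradient norms by $M_G$. This is the only point needing care. We have $\pi^{t+1}\in\mathcal C_1$ by construction, so $\|G(\pi^{t+1})\|_\infty\le M_G$ directly from the definition of $M_G$.

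For $w$, the relevant point is $w^t$, the output of the previous column normalization. In the tail it is $w^{k_0+t}$, produced either by a burn-in step or by an accepted tail step, so $w\in\mathcal C_2$ and $\|G(w)\|_\infty\le M_G$. The only exceptional case is the very first iterate $w^0$ when $k_0=0$, since it need not lie in $\mathcal C_2$. There we either use the convention that the initialization is normalized, or use the crude bound $\|G(X)\|_\infty\le|D_X|_{\max}|D_Y|_{\max}\,\mathbf 1^\top X\mathbf 1$. Since the lemma concerns tail steps, which always start from a column-normalized $w$, this case does not arise.

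Summing the two increments gives $\Delta(w^{t+1})\le\Delta(w)+8M_G\alpha=\Delta(w)+B_\Delta\alpha$. The same argument applies verbatim to any trial step inside the backtracking loop, not only the accepted one. This uniformity is what Remark~\ref{rem:finit-back} needs in order to conclude that every $\alpha\le h_t$ passes the guard.
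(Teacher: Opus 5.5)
Your proof is correct and follows essentially the same argument as the paper. The diagonal row and column normalizations leave $\Delta$ unchanged, each multiplicative perturbation shifts every log cross-ratio by at most $4\alpha M_G$, and the two perturbations together give $8M_G\alpha$.

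Your check that $\pi^{t+1}\in\mathcal C_1$ and $w\in\mathcal C_2$, so that $\|G(\cdot)\|_\infty\le M_G$ applies, is something the paper leaves implicit. One correction: when $k_0=0$ the first tail step does start from the raw initialization. That case is covered by your unit-mass fallback, not by your closing claim that it cannot arise.
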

\begin{proof}
For one exponential perturbation $Z=X\odot\exp(\alpha H)$ with $\|H\|_\infty\leq M_G$, every log cross-ratio changes by $\alpha\bigl(H_{ij}+H_{i'j'}-H_{ij'}-H_{i'j}\bigr)$, whose absolute value is at most $4M_G\alpha$. Thus $\Delta(Z)\leq \Delta(X)+4M_G\alpha$. The row and column KL projections are positive diagonal scalings and therefore do not change cross-ratios.  A full A-KL-BAPG cycle contains two exponential perturbations, so
    \[\Delta(w^{t+1})\leq \Delta(w)+8M_G\alpha \leq \Delta(w)+B_\Delta\alpha.\]
\end{proof}

We record two elementary properties of the guarded tail. The first guarantees positivity throughout all cycles, and the second summarizes the resulting behavior of the backtracking rule.

\begin{lemma}
    If $\pi^0, w^0 > 0$ entriwise, then for every iterate, $\pi^t, w^t$ is entrywise positive.
\end{lemma}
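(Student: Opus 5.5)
The plan is induction on the iteration counter, covering both the burn-in updates \eqref{eq:row}--\eqref{eq:col} and every trial computation in the backtracking loop of Algorithm~\ref{alg:adaptive-delayed-power}. The base case is the hypothesis: the initialization is entrywise positive. For the tail, the relabelled $(\pi^0,w^0)=(\pi^{k_0},w^{k_0})$ is itself an output of the burn-in, so it is covered by the same induction.

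For the inductive step, suppose $w^t>0$ entrywise and $\alpha\in(0,\infty)$. This holds for the fixed step $1/\rho_0$ in the burn-in and for every trial value in the tail, since trial values are positive multiples $\tau^j\max\{\ell_t,\min(\bar\alpha_t,\alpha_{t-1})\}$ with $\ell_t>0$. The argument then has two steps.

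\textbf{Step 1 (perturbation preserves positivity).} The entries of $G(w^t)=D_Xw^tD_Y$ are finite real numbers, bounded in absolute value by $M_G$ since $w^t\in\mathcal C_2$. Hence each factor $\exp(\alpha G_{ij}(w^t))$ lies in $[e^{-\alpha M_G},e^{\alpha M_G}]\subset(0,\infty)$. It follows that $z_1=w^t\odot\exp(\alpha G(w^t))$ is entrywise positive and finite.

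\textbf{Step 2 (normalization preserves positivity).} Each row sum $\sum_\ell z_{1,i\ell}$ is a finite sum of positive numbers, so it is strictly positive. Since $\mu_i>0$, each entry $\pi^{t+1}_{ij}=\mu_i z_{1,ij}/\sum_\ell z_{1,i\ell}$ is well defined and positive. The same two steps applied to $z_2=\pi^{t+1}\odot\exp(\alpha G(\pi^{t+1}))$ with column normalization by $\nu_j>0$ give $w^{t+1}>0$. In particular every trial pair $(\widetilde\pi,\widetilde w)$ is positive, so the accepted pair is positive as well.

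The only point needing care is that the step size must be finite. This matters at the first tail iteration, where $\alpha_{-1}=\infty$. There the inner minimum $\min(\bar\alpha_t,\alpha_{t-1})=\bar\alpha_0=\alpha_0<\infty$, so the starting trial step is finite. Positivity also guarantees that $\Delta(\widetilde w)$ is finite, so the guard test in the algorithm is well defined.
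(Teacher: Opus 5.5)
Your proof is correct and follows the paper's argument: induction on $t$, where positivity of $w^t$ gives positivity of $z_1$, then of $\pi^{t+1}$ via $\mu_i>0$, and in the same way of $w^{t+1}$ via $\nu_j>0$. Your extra checks go beyond what the paper writes out: finiteness of every trial step (including the first tail step with $\alpha_{-1}=\infty$) and positivity of rejected trial iterates. Step 1 needs only finiteness of $G(w^t)$, not the $M_G$ bound, which matters because the raw initialization $w^0$ need not lie in $\mathcal C_2$.
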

\begin{proof}
    Assume $w^t > 0$ entrywise. Then $z_1^t = w^t \odot \exp(\alpha_t G(w^t))$ is positive entrywise. Since $\mu_i > 0$, \[\pi_{ij}^{t+1} = \frac{\mu_i z_{1, ij}^t}{\sum_{l}z_{1,il}^t}\] is entrywise positive, i.e., $\pi^{t+1} > \mathbf 0$. Similarly, $w^{t+1}_{ij} > 0$ and hence $w^{t+1} > \mathbf 0$. By induction, entrywise positivity holds for all iterates.
\end{proof}

\begin{lemma}
\label{lem:guarded-backtracking}
Assume that the current tail iterate satisfies $\Delta(w^t)\leq\Gamma_t$.  Then the backtracking loop in Algorithm~\ref{alg:adaptive-delayed-power} terminates finitely for each iteration.  Moreover, the accepted step satisfies $\ell_t\leq \alpha_t \leq \bar\alpha_t$ and $\alpha_t \leq \alpha_{t-1}$.
Since $\beta<1$, there exist constants $c_-,c_+>0$ and $T_0$ such that, for all $t\geq T_0$,
    \[\frac{c_-}{t+2}\leq \alpha_t
    \leq c_+(t+1)^{-\beta}.\]
\end{lemma}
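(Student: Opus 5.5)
The proof is an induction on the tail index $t$. The inductive hypothesis is $\Delta(w^t)\le\Gamma_t$ together with $\alpha_{t-1}\ge \ell_{t-1}$. The base case holds because $\Gamma_0=\Delta_0+a\log 1=\Delta(w^0)$ and $\alpha_{-1}=+\infty$. The argument rests on Lemma~\ref{lem:projective-growth-cycle}. I will also use that the acceptance test in Algorithm~\ref{alg:adaptive-delayed-power}, $\Delta(\widetilde w)\le\Delta_0+a\log(t+2)$, is exactly $\Delta(\widetilde w)\le \Gamma_{t+1}$.

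\textbf{Finite termination.} Fix a trial step $\alpha$. Lemma~\ref{lem:projective-growth-cycle} and the hypothesis give
\[
\Delta(\widetilde w)\le \Delta(w^t)+B_\Delta\alpha\le \Gamma_t+B_\Delta\alpha .
\]
Hence every $\alpha\le h_t=(\Gamma_{t+1}-\Gamma_t)/B_\Delta$ passes the test. Let $\alpha^{(0)}_t:=\max\{\ell_t,\min(\bar\alpha_t,\alpha_{t-1})\}$ be the initial trial. The trials are $\tau^j\alpha^{(0)}_t$ for $j=0,1,2,\dots$. Since $h_t>0$ and $\tau\in(0,1)$, some finite $j$ gives $\tau^j\alpha^{(0)}_t\le h_t$, so the loop stops after at most that many reductions. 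The accepted iterate satisfies $\Delta(w^{t+1})\le\Gamma_{t+1}$, which propagates the first half of the hypothesis.

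\textbf{Bounds $\ell_t\le\alpha_t\le\bar\alpha_t$.} For the upper bound, note $\ell_t=\tau\min\{\bar\alpha_t,h_t\}<\bar\alpha_t$ and $\min(\bar\alpha_t,\alpha_{t-1})\le\bar\alpha_t$. Therefore $\alpha^{(0)}_t\le\bar\alpha_t$, and backtracking only decreases the step.

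For the lower bound there are two cases.
\begin{itemize}
\item If the first trial is accepted, then $\alpha_t=\alpha^{(0)}_t\ge\ell_t$.
\item Otherwise, let $j\ge1$ be the index of the accepted trial. The previous trial $\tau^{j-1}\alpha^{(0)}_t$ was rejected, so by the safe-step property it exceeds $h_t$. Hence $\alpha_t>\tau h_t\ge\ell_t$.
\end{itemize}
This also propagates $\alpha_t\ge\ell_t$.

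\textbf{Monotonicity $\alpha_t\le\alpha_{t-1}$.} Both $\bar\alpha_t$ and $h_t$ are nonincreasing in $t$, so $\ell_t\le\ell_{t-1}\le\alpha_{t-1}$, using the inductive hypothesis. Also $\min(\bar\alpha_t,\alpha_{t-1})\le\alpha_{t-1}$. Therefore $\alpha^{(0)}_t\le\alpha_{t-1}$, and consequently $\alpha_t\le\alpha^{(0)}_t\le\alpha_{t-1}$. The case $t=0$ is trivial since $\alpha_{-1}=+\infty$.

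\textbf{Rates.} For the upper rate,
\[
\alpha_t\le\bar\alpha_t=\alpha_0(k_0+1)^\beta(t+k_0+1)^{-\beta}\le \alpha_0(k_0+1)^\beta(t+1)^{-\beta},
\]
so one may take $c_+=\alpha_0(k_0+1)^\beta$.

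For the lower rate, the inequality $\log(1+x)\ge x/(1+x)$ with $x=1/(t+1)$ gives
\[
h_t\ge \frac{a}{B_\Delta(t+2)}.
\]
Also $h_t\le a/(B_\Delta(t+1))$. Since $\beta<1$, $\bar\alpha_t\asymp t^{-\beta}$ eventually dominates this $O(1/t)$ bound on $h_t$. Hence there is $T_0$ with $\min\{\bar\alpha_t,h_t\}=h_t$ for all $t\ge T_0$. For such $t$,
\[
\alpha_t\ge\ell_t=\tau h_t\ge \frac{\tau a/B_\Delta}{t+2},
\]
so $c_-=\tau a/B_\Delta$ works.

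The only delicate point is the monotonicity step. The outer $\max$ with $\ell_t$ could in principle push the initial trial above $\alpha_{t-1}$. This is ruled out by carrying $\alpha_{t-1}\ge\ell_{t-1}$ in the induction and using the monotonicity of $\ell_t$.
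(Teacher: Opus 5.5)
Your proof is correct and follows essentially the same route as the paper's. Both use the initial trial $\max\{\ell_t,\min(\bar\alpha_t,\alpha_{t-1})\}$, the safe-step property that every $\alpha\le h_t$ passes the guard via Lemma~\ref{lem:projective-growth-cycle}, the induction $\alpha_{t-1}\ge\ell_{t-1}\ge\ell_t$ for monotonicity, and the bounds $\alpha_t\le\bar\alpha_t$ and $\log(1+1/(t+1))\ge 1/(t+2)$ for the rates. Your handling of the backtracking case (a rejected predecessor must exceed $h_t$, so $\alpha_t>\tau h_t$) and your explicit propagation of $\Delta(w^{t+1})\le\Gamma_{t+1}$ are slightly more careful than the paper, which writes as if backtracking always continues until the trial falls below $h_t$.
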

\begin{proof}
    Define $s_t:= \max\{\ell_t, \min\{\bar{\alpha}_t, \alpha_{t-1}\}\}$.
    By the construction of $h_t$ and $\bar \alpha_t$, the sequences $h_t$ and $\bar\alpha_t$ are nonincreasing. Then the sequence of $\ell_t$ is also nonincreasing and so is the sequence of $s_t$. Since $\alpha_{-1}=+\infty$, $\alpha_{-1}\geq\ell_0$ for $t = 0$. 
    
    For arbitrary $t \geq 1$, assume $\alpha_{t-1}\geq \ell_{t-1}\geq \ell_t$. Then $\ell_{t} \leq \alpha_{t-1}$ and $\min\{\bar \alpha_{t}, \alpha_{t-1}\} \leq \alpha_{t-1}$, i.e., $s_t \leq \alpha_{t-1}$. We also know that $\ell_t \leq s_t \leq \bar \alpha_t$.
    If $s_t\leq h_t$, we have shown that at this time the safeguard is satisfied, and we can directly accept $a_t = s_t$. Otherwise, when $s_t > h_t$, we backtrack until we find the first candidate $\alpha_t$ such that $\alpha_t \leq h_t$. Then by construction, $\alpha_t > \tau h_t$. Since $\ell_t \leq \tau h_t$, it holds that $\alpha_t \geq \ell_t$. 
    Moreover, given that the sequence of $s_t$ decreases, we obtain
    \[
    \ell_t\leq\alpha_t\leq s_t
    \leq \bar\alpha_t,
    \qquad
    \alpha_t\leq s_t\leq \alpha_{t-1}.
    \]
    
    For the final bound, we know that $\bar\alpha_t\asymp (t+1)^{-\beta}$ and
    \[
        h_t=\frac{a}{B_\Delta}\log\left(1+\frac{1}{t+1}\right).
    \]
    Since $\beta<1$, eventually $\bar\alpha_t\geq h_t$, so $\ell_t=\tau h_t$ for all sufficiently large $t$.  Since \[\log\left(1+\frac{1}{t+1}\right)\geq \frac{1}{t+2}\] and $\alpha_t\leq\bar\alpha_t$, there exists $c_{-} = \tau a/B_{\Delta} > 0$,  $c_+> 0$ and $T_0$ such that for all $t \geq T_0$, 
    \[
    \frac{c_{-}}{t+2} \leq \alpha_t \leq c_+(t+1)^{-\beta}.
    \]
\end{proof}

\subsection{Marginal Feasibility Estimates and Cross-Divergence Bounds}
Fix $t$ and define the exact Sinkhorn cycle launched from $w^t$ by $\widehat \pi^{t+1}:=R(w^t)$ and
$\widehat w^{t+1}:=S(w^t)=C(\widehat \pi^{t+1})$. 
We compare the actual BAPG iterates $(\pi^{t+1},w^{t+1})$ with $(\widehat \pi^{t+1},\widehat w^{t+1})$.

\begin{lemma}
\label{lem:full-cycle-comparison}
For every $t$ and every $i,j$,
\[
\exp(-2M_G\alpha_t) \leq \frac{\pi^{t+1}_{ij}}{\widehat \pi^{t+1}_{ij}} \leq \exp(2M_G\alpha_t), 
\qquad \exp(-6M_G\alpha_t) \leq
\frac{w^{t+1}_{ij}}{\widehat w^{t+1}_{ij}} \leq \exp(6M_G\alpha_t).
\]
Consequently,
$d_H\!\left(r(w^{t+1}),r(\widehat w^{t+1})\right) \leq 12M_G\alpha_t$.
\end{lemma}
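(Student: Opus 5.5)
We compare the perturbed cycle entrywise with the exact Sinkhorn cycle launched from the same $w^t$. Each normalization is a positive diagonal rescaling, and each exponential factor lies in $[e^{-M_G\alpha_t},e^{M_G\alpha_t}]$, because $\|G(X)\|_\infty\le M_G$ for $X\in\mathcal C_1\cup\mathcal C_2$ (both $w^t\in\mathcal C_2$ and $\pi^{t+1}\in\mathcal C_1$ qualify). The plan is to track how these multiplicative errors propagate through the two normalizations.

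\textbf{Row step.} Write $\pi^{t+1}_{ij}=\mu_i w^t_{ij}e^{\alpha_t G_{ij}(w^t)}/\sum_\ell w^t_{i\ell}e^{\alpha_t G_{i\ell}(w^t)}$ and $\widehat\pi^{t+1}_{ij}=\mu_i w^t_{ij}/\sum_\ell w^t_{i\ell}$. Their ratio is $e^{\alpha_t G_{ij}(w^t)}$ divided by a weighted average of $e^{\alpha_t G_{i\ell}(w^t)}$ with weights $w^t_{i\ell}/\sum_{\ell'} w^t_{i\ell'}$. The numerator lies in $[e^{-M_G\alpha_t},e^{M_G\alpha_t}]$, and so does the average, since it is a convex combination of such terms. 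Hence the ratio lies in $[e^{-2M_G\alpha_t},e^{2M_G\alpha_t}]$.

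\textbf{Column step.} We have $w^{t+1}_{ij}=\nu_j\pi^{t+1}_{ij}e^{\alpha_t G_{ij}(\pi^{t+1})}/\sum_k \pi^{t+1}_{kj}e^{\alpha_t G_{kj}(\pi^{t+1})}$ and $\widehat w^{t+1}_{ij}=\nu_j\widehat\pi^{t+1}_{ij}/\sum_k\widehat\pi^{t+1}_{kj}$. In the ratio, the numerator factor $\pi^{t+1}_{ij}e^{\alpha_t G_{ij}(\pi^{t+1})}/\widehat\pi^{t+1}_{ij}$ is within $e^{\pm 3M_G\alpha_t}$. Each denominator summand satisfies $\pi^{t+1}_{kj}e^{\alpha_tG_{kj}}\in[e^{-3M_G\alpha_t},e^{3M_G\alpha_t}]\,\widehat\pi^{t+1}_{kj}$, so the column-sum ratio is also within $e^{\pm3M_G\alpha_t}$. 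Combining the two gives the bound $e^{\pm6M_G\alpha_t}$.

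\textbf{Consequence for row ratios.} Both $w^{t+1}$ and $\widehat w^{t+1}$ lie in $\mathcal C_2$, so $r(\cdot)$ is defined for each. Since $r_i(w^{t+1})=\mu_i^{-1}\sum_j w^{t+1}_{ij}$ is a positive combination of entries each within $e^{\pm6M_G\alpha_t}$ of the corresponding entry of $\widehat w^{t+1}$, we get $r_i(w^{t+1})/r_i(\widehat w^{t+1})\in[e^{-6M_G\alpha_t},e^{6M_G\alpha_t}]$. Therefore the max of these ratios over $i$ is at most $e^{6M_G\alpha_t}$ and the min is at least $e^{-6M_G\alpha_t}$, so $d_H\le 12M_G\alpha_t$.

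There is no real obstacle in this argument. The only care needed is to justify the use of $M_G$: the gradient bound must be applied to $G(w^t)$ with $w^t\in\mathcal C_2$ and to $G(\pi^{t+1})$ with $\pi^{t+1}\in\mathcal C_1$, and both memberships hold because the normalizations place the iterates in those sets.
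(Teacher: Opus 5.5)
Your proof is correct and takes the same approach as the paper's. The row step bounds the numerator $e^{\alpha_t G_{ij}(w^t)}$ and the weighted-average denominator each by $e^{\pm M_G\alpha_t}$. The column step multiplies the $e^{\pm 3M_G\alpha_t}$ entrywise factor by the $e^{\pm 3M_G\alpha_t}$ column-sum ratio. Your last step, deriving $d_H(r(w^{t+1}),r(\widehat w^{t+1}))\le 12M_G\alpha_t$ from the entrywise bound via ratios of positive row sums, is left implicit in the paper's proof, and you handle it correctly.
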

\begin{proof} We know that 
    \[
    \frac{\pi^{t+1}_{ij}}{\widehat \pi^{t+1}_{ij}}  = \exp(\alpha_t G(w^t)_{ij})\frac{\sum_{l}w_{il}^t}{\sum_{\ell} w_{il}^t \exp(\alpha_t G(w^t)_{il})},
    \]
    with \[\exp(-M_G \alpha_t) \sum_{\ell} w_{il}^t \leq \sum_{\ell} w_{il}^t \exp(\alpha_t G(w^t)_{il}) \leq \exp(M_G \alpha_t)\sum_{\ell} w_{il}^t.\] 
    Thus \[\exp(-2M_G\alpha_t) \leq \frac{\pi^{t+1}_{ij}}{\widehat \pi^{t+1}_{ij}} \leq \exp(2M_G\alpha_t).\] 
    Recall that $z_2^t = \pi^{t+1}\odot \exp(\alpha_t G(\pi^{t+1}))$ and $\widehat w^{t+1} = \mathcal P_{\mathcal C_2}^{\mathrm{KL}}(\widehat \pi^{t+1})$, $w^{t+1} = \mathcal P_{\mathcal C_2}^{\mathrm{KL}}(z_2^t)$. Hence
    \[
    \frac{w^{t+1}_{ij}}{\widehat w^{t+1}_{ij}} = \frac{z_{2,ij}^t / \sum_\ell z_{2,\ell j}^t}{\widehat \pi^{t+1}_{ij} / \sum_\ell \widehat \pi_{\ell j}^{t+1}} = \frac{z_{2,ij}^t}{\widehat \pi^{t+1}_{ij}} \cdot \frac{\sum_\ell \widehat \pi_{\ell j}^{t+1}}{\sum_\ell z_{2,\ell j}^t},
    \quad \text{and} \quad
    \frac{z_{2,ij}^t}{\widehat \pi^{t+1}_{ij}} = \frac{\pi^{t+1}_{ij} \exp(\alpha_t G(\pi^{t+1})_{ij})}{\widehat \pi^{t+1}_{ij}} \in [e^{-3M_G\alpha_t}, e^{3M_G\alpha_t}].
    \]
    Summing over each column, we obtain
\[
\frac{\sum_\ell \pi^{t+1}_{\ell j} \exp(\alpha_t G(\pi^{t+1})_{\ell j})}{\sum_\ell \widehat \pi^{t+1}_{\ell j}} \in [e^{-3M_G\alpha_t}, e^{3M_G\alpha_t}] \quad \text{and} \quad \frac{\sum_\ell \widehat \pi^{t+1}_{\ell j}}{\sum_\ell \pi^{t+1}_{\ell j} \exp(\alpha_t G(\pi^{t+1})_{\ell j})} \in [e^{-3M_G\alpha_t}, e^{3M_G\alpha_t}].
\]
Putting these together, we obtain
\[\frac{w^{t+1}_{ij}}{\widehat w^{t+1}_{ij}} \in [e^{-3M_G\alpha_t} \cdot e^{-3M_G\alpha_t}, e^{3M_G\alpha_t} \cdot e^{3M_G\alpha_t}] = [e^{-6M_G\alpha_t}, e^{6M_G\alpha_t}].\]
\end{proof}
With Lemma \ref{lem:full-cycle-comparison}, we can derive a polynomial bound for the projective row error. Define $p:= \frac{1}{2}a$.

\begin{lemma} \label{lem:polynomial-row-error}
Assume $\beta>p$. Then there exist constants $\widetilde C_\delta>0$ and $K_\delta\geq0$ such that, for every $t\geq K_\delta$,
    $d_H(r(w^t),\mathbf 1_n)\leq \widetilde C_\delta \eta_t$.
\end{lemma}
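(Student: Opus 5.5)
We will track the row error $\delta_t:=d_H(r(w^t),\mathbf 1_n)$ through a one-step recursion and then unroll it. Since every tail iterate $w^t$ lies in $\mathcal C_2$ and is strictly positive, we may launch the exact Sinkhorn cycle $\widehat w^{t+1}=S(w^t)$. The triangle inequality for $d_H$ gives
\[
\delta_{t+1}\le d_H\bigl(r(w^{t+1}),r(\widehat w^{t+1})\bigr)+d_H\bigl(r(S(w^t)),\mathbf 1_n\bigr).
\]
Lemma~\ref{lem:full-cycle-comparison} bounds the first term by $12M_G\alpha_t$. Lemma~\ref{lem:sinkhorn-projective-contraction} bounds the second by $\tanh^2(\Delta(w^t)/4)\,\delta_t$. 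The guard ensures $\Delta(w^t)\le\Gamma_t=\Delta_0+a\log(t+1)$ for all accepted iterates, so we obtain
\[
\delta_{t+1}\le \theta_t\,\delta_t+12M_G\alpha_t,\qquad \theta_t:=\tanh^2\!\Bigl(\tfrac{\Delta_0+a\log(t+1)}{4}\Bigr).
\]

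The key step is to quantify the contraction. We use $1-\tanh^2(x/4)=\operatorname{sech}^2(x/4)\ge e^{-x/2}$. This gives
\[
1-\theta_t\ge e^{-\Delta_0/2}(t+1)^{-a/2}=c_0(t+1)^{-p}.
\]
So the recursion is $\delta_{t+1}\le(1-c_0(t+1)^{-p})\delta_t+12M_G c_+(t+1)^{-\beta}$. Here we use the upper bound $\alpha_t\le c_+(t+1)^{-\beta}$ from Lemma~\ref{lem:guarded-backtracking} for $t\ge T_0$, or simply $\alpha_t\le\bar\alpha_t$. Because $p=a/2<1/3<1$, we have $\sum_t(t+1)^{-p}=\infty$, so the contraction is non-summable. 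This is the regime of a Chung-type lemma.

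We then prove by induction that $\delta_t\le \widetilde C_\delta (t+1)^{-(\beta-p)}=\widetilde C_\delta\eta_t$ for $t\ge K_\delta$. The inductive step requires
\[
(1-c_0(t+1)^{-p})\widetilde C_\delta(t+1)^{-(\beta-p)}+C(t+1)^{-\beta}\le \widetilde C_\delta(t+2)^{-(\beta-p)}.
\]
Note that $(t+2)^{-(\beta-p)}\ge (t+1)^{-(\beta-p)}\bigl(1-(\beta-p)(t+1)^{-1}\bigr)$. Dividing through by $(t+1)^{-\beta}$, it suffices that
\[
C\le \widetilde C_\delta\bigl(c_0-(\beta-p)(t+1)^{p-1}\bigr).
\]
This holds for $t$ large, since $p<1$, once $\widetilde C_\delta\ge 2C/c_0$. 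We choose $K_\delta$ large enough for this. We then enlarge $\widetilde C_\delta$ so the base case $\delta_{K_\delta}\le\widetilde C_\delta\eta_{K_\delta}$ holds, which is possible because $\delta_{K_\delta}$ is finite. The hypothesis $\beta>p$ guarantees that $\eta_t$ is decreasing, so the target rate is meaningful.

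The main obstacle is the contraction estimate. The Birkhoff factor degrades as the guarded diameter grows logarithmically, so we must verify that $1-\theta_t$ decays only polynomially, at rate $(t+1)^{-a/2}$, and strictly more slowly than the step. The remaining steps are routine bookkeeping.

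Two further checks are needed. First, the relabeled tail start $w^0=w^{k_0}$ has finite $\delta_0$ and satisfies $\Delta(w^0)=\Delta_0\le\Gamma_0$, so the guard bound holds from $t=0$ onward. Second, the comparison cycle in Lemma~\ref{lem:full-cycle-comparison} uses the accepted $\alpha_t$, not a rejected trial step.
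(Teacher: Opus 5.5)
Your proposal is correct and follows essentially the same route as the paper. You compare $w^{t+1}$ with the exact Sinkhorn cycle $S(w^t)$ via Lemma~\ref{lem:full-cycle-comparison}, contract with Lemma~\ref{lem:sinkhorn-projective-contraction} using $\operatorname{sech}^2(x/4)\ge e^{-x/2}$ under the logarithmic guard to get $\delta_{t+1}\le(1-c(t+1)^{-p})\delta_t+C(t+1)^{-\beta}$, and close by induction on $\widetilde C_\delta(t+1)^{-(\beta-p)}$ using $p<1$. Your explicit Bernoulli bound on $(t+2)^{-(\beta-p)}$, with $K_\delta$ chosen independently of $\widetilde C_\delta$ before enlarging $\widetilde C_\delta$ for the base case, makes precise the final step that the paper only sketches.
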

\begin{proof}
 By Lemma \ref{lem:full-cycle-comparison},
    $d_H(r(w^{t+1}),r(\widehat w^{t+1}))
    \leq 12M_G\alpha_t
    \leq C_\alpha(t+1)^{-\beta}$
for some constant $C_\alpha>0$ and all sufficiently large $t$.
By the accepted diameter guard,
$\Delta(w^t)\leq\Delta_0+a\log(t+1)$ and hence
\[
    1-\tanh^2\!\left(\frac{\Delta(w^t)}{4}\right)
    =\operatorname{sech}^2\!\left(\frac{\Delta(w^t)}{4}\right) \geq \exp(-\frac{\Delta(w^t)}{2})
    \geq c_\Delta(t+1)^{-p}
\]
for some $c_\Delta>0$.
Then \[\tanh^2\left(\frac{\Delta(w^t)}{4}\right) \leq 1 - c_\Delta(t+1)^{-p}.\] By Lemma \ref{lem:sinkhorn-projective-contraction}, 
\[d_H(r(\widehat w^{t+1}), \mathbf 1_n) =  d_H(r(S(w^{t})), \mathbf 1_n)\leq \tanh \left(\frac{\Delta(w^t)}{4}\right)^2 d_H(r(w^t), \mathbf 1_n) \leq \left(1 - c_\Delta(t+1)^{-p}\right) d_H(r(w^t), \mathbf 1_n) .\] 
By triangle inequality, $d_H(r(w^{t+1}), \mathbf 1_n) \leq d_H(r(w^{t+1}), r(\widehat w^{t+1})) + d_H(r(\widehat w^{t+1}), \mathbf 1_n)$. Then 
\[d_H(r(w^{t+1}), \mathbf 1_n) \leq \left(1 - c_\Delta(t+1)^{-p}\right)d_H(r(w^t), \mathbf 1_n) + C_\alpha (t+1)^{-\beta}.\] 

Let $s:=\beta-p>0$. If $d_H(r(w^t), \mathbf 1_n)\leq K(t+1)^{-s}$, then
    $\left(1-c_\Delta(t+1)^{-p}\right)d_H(r(w^t), \mathbf 1_n)
    \leq
    K(t+1)^{-s}-c_\Delta K(t+1)^{-(p+s)}$.
Since $p+s=\beta$, the recurrence gives
\[
    d_H(r(w^{t+1}), \mathbf 1_n)
    \leq K(t+1)^{-s}-(c_\Delta K-C_\alpha)(t+1)^{-\beta}.
\]
Choose $K$ sufficiently large such that $c_\Delta K>C_\alpha$. Since $p = a/2 < 1$, $\beta=p+s<s+1$ and the correction term
$(t+1)^{-\beta}$ decays more slowly than $(t+2)^{-s-1}$, which is the size of $(t+1)^{-s} - (t+2)^{-s}$.
Thus the implication
    \[d_H(r(w^{t}), \mathbf 1_n)\leq K(t+1)^{-s}
    \Rightarrow
    d_H(r(w^{t+1}), \mathbf 1_n)\leq K(t+2)^{-s},\]
holds for all sufficiently large $t$. Enlarging $K$ to absorb the finitely many earlier indices gives the claim with $\widetilde C_\delta=K$.
\end{proof}

Define the row sums of $z_1^t$ by
$r_i^t:=(z_1^t\mathbf 1_m)_i =\sum_{j=1}^m z_{1,ij}^t$.
Since $\pi^{t+1}$ is the KL projection of $z_1^t$ onto the row-marginal constraint set $\mathcal C_1$, we have the explicit row-normalization 
$\pi_{ij}^{t+1} = (\mu_i/r_i^t)z_{1,ij}^t$. 

The second half-step is analogous except that now the projection is onto the column marginal constraint set $\mathcal C_2$.
Define the column sums of $z_2^t$ by
$c_j^t:=((z_2^t)^\top\mathbf 1_n)_j = \sum_{i=1}^n z_{2,ij}^t$.
Equivalently, $c_j^t=\sum_i z_{2,ij}^t$.
Since $w^{t+1}$ is the KL projection of $z_2^t$ onto $\mathcal C_2$,  $w_{ij}^{t+1} = (\nu_j/c_j^t)z_{2,ij}^t$.

\begin{lemma}[Exact Row/Column Projection Identities]
\label{lem:exact-projection-identities}
For every $t$, for row projection, the following identities hold: $\|\pi^{t+1}-z_1^t\|_1 = \|z_1^t\mathbf 1_m-\mu\|_1$,
\[
    D_h(\pi^{t+1},z_1^t) =
    \sum_{i=1}^n \left[
        \mu_i\log\frac{\mu_i}{r_i^t}-
        \mu_i+r_i^t \right],
\quad \text{and} \quad
    D_h(z_1^t,\pi^{t+1})=
    \sum_{i=1}^n\left[
        r_i^t\log\frac{r_i^t}{\mu_i}- r_i^t+\mu_i
    \right].
\]
Similarly, for column projection,
$\|w^{t+1}-z_2^t\|_1 = \|(z_2^t)^\top\mathbf 1_n-\nu\|_1$,
\[
    D_h(w^{t+1},z_2^t)  = \sum_{j=1}^m
    \left[
        \nu_j\log\frac{\nu_j}{c_j^t} - \nu_j + c_j^t
    \right],
    \quad    \text{and}  \quad 
    D_h(z_2^t,w^{t+1})  =
    \sum_{j=1}^m \left[
        c_j^t\log\frac{c_j^t}{\nu_j} - c_j^t + \nu_j
    \right].
\]
\end{lemma}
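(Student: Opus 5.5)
Both halves follow from direct computation with the closed-form normalizations, and I will treat the row projection in detail. The column case is the same argument with rows and columns exchanged: $c_j^t$ replaces $r_i^t$, $\nu_j$ replaces $\mu_i$, and $z_2^t$ replaces $z_1^t$.

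The starting point is the explicit formula $\pi^{t+1}_{ij}=(\mu_i/r_i^t)z^t_{1,ij}$. All entries of $z_1^t$ are positive, and $\mu_i/r_i^t-1$ has a constant sign along row $i$. Hence
\[
\sum_j|\pi^{t+1}_{ij}-z^t_{1,ij}|=\Bigl|\frac{\mu_i}{r_i^t}-1\Bigr|\sum_j z^t_{1,ij}=|\mu_i-r_i^t|.
\]
Summing over $i$ gives $\|\pi^{t+1}-z_1^t\|_1=\|z_1^t\mathbf 1_m-\mu\|_1$.

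For the two divergences, I use the fact that the log-ratio is constant on each row: $\log(\pi^{t+1}_{ij}/z^t_{1,ij})=\log(\mu_i/r_i^t)$. Then
\[
D_h(\pi^{t+1},z_1^t)=\sum_i\Bigl[\log\frac{\mu_i}{r_i^t}\sum_j\pi^{t+1}_{ij}-\sum_j\pi^{t+1}_{ij}+\sum_j z^t_{1,ij}\Bigr].
\]
Substituting the row sums $\sum_j\pi^{t+1}_{ij}=\mu_i$ and $\sum_j z^t_{1,ij}=r_i^t$ yields the first stated identity. The reverse divergence is handled the same way, now with $\log(z^t_{1,ij}/\pi^{t+1}_{ij})=\log(r_i^t/\mu_i)$, and the same row-sum substitutions give $\sum_i[r_i^t\log(r_i^t/\mu_i)-r_i^t+\mu_i]$.

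I do not expect a real obstacle. The only point needing care is the row-wise constant-sign observation that turns the entrywise $\ell_1$ sum into $|\mu_i-r_i^t|$, together with stating positivity of $z_1^t$ and $z_2^t$, which holds by the earlier positivity lemma and the exponential factor.
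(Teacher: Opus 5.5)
Your proposal is correct and follows the paper's proof: both work directly from the closed-form normalization $\pi^{t+1}_{ij}=(\mu_i/r_i^t)z^t_{1,ij}$, pulling the row-constant factor out of the $\ell_1$ sum and the KL sums, and obtain the column case by symmetry. The only cosmetic difference is that the paper factors $z^t_{1,ij}$ out of each KL summand before summing over $j$, whereas you substitute the row sums $\sum_j\pi^{t+1}_{ij}=\mu_i$ and $\sum_j z^t_{1,ij}=r_i^t$ directly.
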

\begin{proof}
    We know that $\sum_{j=1}^m |\pi^{t+1}_{ij} - z_{1, ij}^{t}| = \left|\frac{\mu_i}{r_i^t}-1\right|\sum_{j = 1}^m z_{1, ij}^{t} = |\mu_i - r_i^t|$. Summing over $i$, we obtain $\|\pi^{t+1}-z_1^t\|_1 = \|z_1^t\mathbf 1_m-\mu\|_1$. 
    \begin{multline*}
        D_h(\pi^{t+1},z_1^t) = \sum_{i,j} \left(\pi^{t+1}_{ij} \log \frac{\pi^{t+1}_{ij}}{z_{1,ij}^t} - \pi^{t+1}_{ij} + z_{1,ij}^t\right) \\
    = \sum_{ij} z_{1,ij}^t\left(\frac{\mu_i}{r_i^t}\log \frac{\mu_i}{r_i^t} - \frac{\mu_i}{r_i^t} + 1\right)
    = \sum_{i = 1}^n \left[\mu_i \log \frac{\mu_i}{r_i^t} - \mu_i + r_i^t\right].
    \end{multline*}
The derivation of formula for $D_h(z_1^t , \pi^{t+1})$ is identical. The proof for column projection identities is analogous to the above proof for row projection identities.
\end{proof}

\paragraph{Proof for Proposition \ref{prop:automatic-marginal-feasibility}}
\begin{proof}
 For each $t$, define the row ratio vector \[\theta_i^t := \frac{(w^t \mathbf 1_m)_i}{\mu_i}.\]
    Since $(w^t)^\top \mathbf 1_n = \nu$, \[\sum_{i} (w^t \mathbf 1_m)_i = \sum_{i,j}w^{t}_{ij} = \sum_j \nu_j = 1\] and $\sum_i \theta_i^t \mu_i = \sum_{i} (w^t \mathbf 1_m)_i = 1$. Since $\mu_i > 0$ and $\sum_i \mu_i = 1$, $\min_i (\theta_i^t) \leq 1 \leq \max_i (\theta_i^t)$. Then for every $i \in [n]$, $\exp(-d_H(\theta^t, \mathbf 1_n)) \leq \theta_i^t \leq \exp(d_H(\theta^t, \mathbf 1_n))$ and \[\|w^t \mathbf 1_m - \mu\|_1 = \sum_i \mu_i|\theta_i^t - 1| \leq \exp(d_H(\theta^t, \mathbf 1_n)) - 1.\] 
    Define
   $K_{\mathrm{row}}:=
    \min\left\{ t\geq K_\delta:\widetilde C_\delta\eta_t\leq 1
    \right\}$.
    Then for every $t \geq K_{\operatorname{row}}$, $0 \leq d_H(r(w^t), \mathbf 1_n) \leq \widetilde C_\delta\eta_t \leq 1$.  Define $C_{\mathrm{row}}:=e\widetilde C_\delta$. By the mean value theorem, $\exp(d_H(r(w^t), \mathbf 1_n)) - 1\leq ed_H(r(w^t), \mathbf 1_n)$ and thus
    $\|w^t \mathbf 1_m - \mu\|_1 \leq e \widetilde C_\delta \eta_t = C_{\operatorname{row}}\eta_t$.
    Define $C_z:=\exp({M_G c_+})M_Gc_+$. Since $|G_{ij}(w^t)| \leq M_G$ and $\alpha_t\leq c_+(t+1)^{-\beta}\leq c_+$ from Lemma \ref{lem:guarded-backtracking}, the mean value theorem gives 
    \[|\exp(\alpha_t G_{ij}(w^t)) - 1| \leq e^{M_Gc_+} M_G\alpha_t \leq C_z(t+1)^{-\beta}.\] 
    
    Since $z_1^t - w^t = w^t \odot (\exp(\alpha_t G(w^t)) - 1)$, $\sum_{i,j} w_{ij}^t = 1$ and $p\geq0$, $\|z_1^t - w^t\|_1 \leq C_z(t+1)^{-\beta} \leq C_z\eta_t$. 
    Define $ C_{\mathrm{fea}}:=\exp({M_G c_+})M_Gc_++e\widetilde C_\delta$. Then 
    \[\|z_1^t\mathbf 1_m - \mu\|_1 \leq \|z_1^t - w^t\|_1 + \|w^t \mathbf 1_m - \mu\|_1 \leq (C_{\operatorname{row}} + C_z)\eta_t = C_{\operatorname{fea}}\eta_t.\]
    Since $w^t \in \mathcal C_2$, by Lemma \ref{lem:exact-projection-identities}, $\|\pi^{t+1} - z^{t}_1\|_1 = \|z_1^t \mathbf 1_m - \mu\|_1$ and
    \begin{multline*}
        \|(\pi^{t+1})^\top \mathbf 1_n - \nu\|_1 \leq \|\pi^{t+1} - w^t\|_1 
        \leq \|\pi^{t+1} - z_1^t\|_1 + \|z_1^t - w^t\|_1 
        \leq \|z_1^t\mathbf 1_m - \mu\|_1 + \|z_1^t - w^t\|_1 
        \leq 2C_{\operatorname{fea}}\eta_t.
    \end{multline*}

We hence show that $d_H(r(w^t), \mathbf 1_n) = O\left(\eta_t\right)$, $\|z_1^t\mathbf 1_m - \mu\|_1 = O\left(\eta_t\right)$, and $\|(\pi^{t+1})^\top \mathbf 1_n - \nu\|_1 = O\left(\eta_t\right)$.

Now define
    $K_{\mathrm{cross},1} :=
    \min\left\{
        t\geq K_{\mathrm{row}}:
        C_{\mathrm{fea}}\eta_t \leq \frac{\mu_{\min}}{2}
    \right\}$.
Then for every $t\geq K_{\mathrm{cross},1}$,
    \[|r_i^t-\mu_i|\leq
    C_{\mathrm{fea}}\eta_t \leq
    \frac{\mu_{\min}}{2},\] where $\mu_{\min}:=\min_i \mu_i$. Hence for every $i$,
   $ r_i^t\in\left[\mu_i/2,3\mu_i/2\right]$.
    Since $z_1^t = w^t\odot \exp(\alpha_t G(w^t))$, $\log z_1^t - \log w^t = \alpha_t G(w^t)$. Since $h$ is strictly convex, by the three-point identity of Bregman divergence,  
    \begin{multline*}
    D_h(\pi^{t+1}, w^t) = D_h(\pi^{t+1}, z_1^t) + D_h(z_1^t, w^t) + \langle \nabla h(z_1^t) - \nabla h(w^t), \pi^{t+1} - z_1^t\rangle\\
    = D_h(\pi^{t+1}, z_1^t) + D_h(z_1^t, w^t) + \alpha_t \langle G(w^t), \pi^{t+1} - z_1^t\rangle.
    \end{multline*}

    By Lemma \ref{lem:exact-projection-identities}, for $t \geq K_{\operatorname{cross},1}$,
    \[|\mu_i - r_i^t| \leq \|\pi^{t+1} - z_1^t\|_1 = \|z_1^t \mathbf 1_m - \mu\|_1 \leq C_{\operatorname{fea}}\eta_t \leq \frac{\mu_{\min}}{2}.\] To satisfy this, we need $r_i^t \in \left[\frac{\mu_i}{2}, \frac{3\mu_i}{2}\right]$ for every $i \in [n]$. Define $\phi_i(s):= \mu_i\log\frac{\mu_i}{s}-\mu_i+s$. By Lemma \ref{lem:exact-projection-identities}, $D_h(\pi^{t+1}, z_1^t) =  \sum_i \phi_i(r_i^t)$. Since $\phi_i(\mu_i)=0$, $\phi_i^\prime(\mu_i)=0$, and $\phi_i^{\prime \prime}(s)=\frac{\mu_i}{s^2}$, for sufficiently large $t$, 
    \[\phi_i^{\prime \prime}(s) = \frac{\mu_i}{s^2} \le \frac{4}{\mu_i} \le \frac{4}{\mu_{\min}},\] 
    for all $s \in \left[\frac{\mu_i}{2}, \frac{3\mu_i}{2}\right]$. 
    By Taylor’s theorem, around $s = \mu_i$, $\phi_i(r_i^t) \le \frac{2}{\mu_{\min}}(r_i^t-\mu_i)^2$. Summing over $i \in [n]$, we obtain
    \[
    D_h(\pi^{t+1},z_1^t) 
    \le \frac{2}{\mu_{\min}}\|z_1^t\mathbf 1_m-\mu\|_2^2
    \le \frac{2}{\mu_{\min}} \|z_1^t\mathbf 1_m-\mu\|_1^2
    \le \frac{2C_{\rm fea}^2}{\mu_{\min}}\eta_t^2.
    \]
    Define $\xi_{ij}^t:=\alpha_t G_{ij}(w^t)$. Then 
    $z_{1,ij}^t = w_{ij}^t \exp({\xi_{ij}^t})$ and 
    \[D_h(z_1^t,w^t) = \sum_{i,j} w_{ij}^t
    \left[\exp({\xi_{ij}^t})\xi_{ij}^t - \exp({\xi_{ij}^t}) + 1 \right]
    = \sum_{i,j}w_{ij}^t\psi(\xi_{ij}^t),\] where $\psi(s):=\exp(s)(s-1)+1$. Since $\psi(0)=0$, $\psi^\prime(0)=0$, and $\psi^{\prime \prime}(s)=\exp(s)(1+s)$. By Taylor's theorem, for $|s| \leq M_Gc_+$, $\psi(s) \leq \frac{1}{2}\exp({M_Gc_+})(1+M_Gc_+)s^2$. Then 
    \[
    D_h(z_1^t, w^t) \leq \frac{1}{2}\exp(M_Gc_+)(1+M_G c_+) \sum_{ij}w_{ij}^t \left(\alpha_t G_{ij}(w^t)\right)^2 \leq \frac{1}{2}\exp(M_G c_+)(1+M_Gc_+)\alpha_t^2 M_G^2\sum_{ij} w_{ij}^t. 
    \]

    Since $w^t \in \mathcal C_2$, $\sum_{ij} w_{ij}^t = \sum_j \nu_j = 1$. Then $D_h(z_1^t, w^t) = O(\alpha_t^2)=O(\eta_t^2)$.

    Finally, by Lemma \ref{lem:exact-projection-identities},  $\|\pi^{t+1} - z_1^t\|_1 = \|z_1^t \mathbf 1_m - \mu\|_1$. Then 
    \[|\alpha_t\langle G(w^t), \pi^{t+1} - z_1^t\rangle| 
    \leq \alpha_t M_G \|z_1^t \mathbf 1_m - \mu\|_1 \leq M_G C_{\operatorname{fea}}\alpha_t\eta_t = O(\eta_t^2).\]
    Combining all three terms together, we have 
    \[
    D_h(\pi^{t+1},w^t)
    \le \frac{2C_{\rm fea}^2}{\mu_{\min}}\eta_t^2
        + O(\alpha_t^2)
        + O(\alpha_t\eta_t).
    \]
    Since $\alpha_t=O((t+1)^{-\beta})=O(\eta_t)$, there exists a constant $C_{1, \operatorname{cross}}$ such that the whole right-hand side is bounded by $C_{1, \operatorname{cross}}\eta_t^2$, i.e., $D_{h}(\pi^{t+1}, w^t) \leq C_{1, \operatorname{cross}}\eta_t^2$.

    The estimate for $D_h(\pi^{t+1},w^{t+1})$ is obtained analogously from the second half-step. Define $u_j^t:=((z_2^t)^\top \mathbf 1_n)_j$. Since $w^{t+1}=\mathcal P^{\rm KL}_{\mathcal C_2}(z_2^t)$,  $w_{ij}^{t+1} = \frac{\nu_j}{u_j^t}z_{2,ij}^t$ and $\log z_{2,ij}^t - \log w_{ij}^{t+1} = \log \frac{u_j^t}{\nu_j}$. Again by the three-point identity for Bregman divergences,
    \[D_h(\pi^{t+1},w^{t+1}) = D_h(\pi^{t+1}, z_2^{t}) + D_h(z_2^t, w^{t+1}) + \langle \log z_2^t - \log w^{t+1}, \pi^{t+1} - z_2^t \rangle.\]
    Since $z_2^t=\pi^{t+1}\odot \exp(\alpha_tG(\pi^{t+1}))$ and $|G(\pi^{t+1})|_\infty\le M_G$, the same Taylor argument used for $D_h(z_1^t,w^t)$ gives $D_h(\pi^{t+1}, z_2^{t}) = O(\alpha_t^2)=O(\eta_t^2)$. We know that \[|u^t-\nu|_1=|(z_2^t)^\top\mathbf 1_n-\nu|_1\le |(\pi^{t+1})^\top\mathbf 1_n-\nu|_1+|z_2^t-\pi^{t+1}|_1.\] Since $|(\pi^{t+1})^\top\mathbf 1_n-\nu|_1\le 2C_{\operatorname{fea}}\eta_t$, by the mean-value theorem,
   $|z_2^t-\pi^{t+1}|_1=O(\alpha_t)=O(\eta_t).$
   Therefore, there exists a constant $C_{\operatorname{col}}>0$ such that $|u^t-\nu|_1\le C_{\operatorname{col}}\eta_t$. For sufficiently large $t$, this implies $u_j^t\in [\nu_j/2,3\nu_j/2]$ for every $j$. By Lemma \ref{lem:exact-projection-identities} and Taylor's theorem, we obtain
   \[D_h(z_2^t,w^{t+1})\le \frac{1}{\nu_{\min}}|u^t-\nu|_1^2\le \frac{C_{\operatorname{col}}^2}{\nu_{\min}}\eta_t^2.\]
   Since $u_j^t\in [\nu_j/2,3\nu_j/2]$ for sufficiently large $t$, the mean-value theorem gives 
   \[\left|\log \frac{u_j^t}{\nu_j}\right|\le \frac{2|u_j^t-\nu_j|}{\nu_j}\le \frac{2|u^t-\nu|_1}{\nu_{\min}}.\] 
   Hence $|\log z_2^t-\log w^{t+1}|_\infty = O(\eta_t)$. Also, $|\pi^{t+1}-z_2^t|_1=O(\alpha_t)=O(\eta_t)$. Thus
   \[|\langle \log z_2^t-\log w^{t+1},\pi^{t+1}-z_2^t\rangle| = O(\eta_t^2).\]
   Combining the three estimates, we show that there exist an integer $K_{\operatorname{cross},2}\ge K_{\operatorname{cross},1}$ and a constant $C_{2,\operatorname{cross}}>0$ such that, for every $t\ge K_{\operatorname{cross},2}$,
   $D_h(\pi^{t+1},w^{t+1})\le C_{2,\operatorname{cross}} \eta_t^2.$ Thus  $D_{h}(\pi^{t+1}, w^t) = O(\eta_t^2)$ and $D_h(\pi^{t+1},w^{t+1}) = O(\eta_t^2)$.
\end{proof}

\paragraph{Proof for Proposition \ref{prop:cubic-main}}
\begin{proof} For notation convenience, we define the row error and first-order row moments: 
    $s_i^t := \sum_j w_{ij}^t$,
    $ e_i^t:=s_i^t-\mu_i$,
\[a_i^t:=\sum_j w^t_{ij}G_{ij}(w^t), \qquad {\text{and}} \qquad 
b_i^t:=\sum_j w^t_{ij}\exp(\alpha_t G_{ij}(w^t))G_{ij}(w^t).\] Recall that $r_i^t = \sum_j z_{1, ij}^t = \sum_j w_{ij}^t \exp(\alpha_t G_{ij}(w^t))$ and $\pi^{t+1}_{ij} = \frac{\mu_i}{r_i^t} z_{1, ij}^t$. Then for every $t$, by the definition of KL divergence, 
\begin{align*}
    A^{\mathrm{row}}_t  
     = \sum_{i,j} \left[(\pi^{t+1}_{ij} + w^{t}_{ij})\log \frac{\pi^{t+1}_{ij}}{w^t_{ij}} + 2(w^{t}_{ij} - \pi^{t+1}_{ij})\right]
     = \sum_{i,j} \left[(\pi^{t+1}_{ij} + w_{ij}^t) \left(\log \frac{\mu_i}{r_i^t} + \alpha_t G_{ij}(w^t)\right) + 2(w^t_{ij} - \pi_{ij}^{t+1})\right].
\end{align*}
Since $\sum_{j} \pi_{ij}^{t+1} = \mu_i$ and $\sum_j w_{ij}^t = s_i^t$, 
\[\sum_{j} (\pi^{t+1}_{ij} + w_{ij}^t) \log \frac{\mu_i}{r_i^t} 
= (\mu_i + s_i^t)\log \frac{\mu_i}{r_i^t},\] where $\mu_i + s_i^t = 2\mu_i + e_i^t$. 
Also by the definition above, 
\[\alpha_t \sum_j (\pi^{t+1}_{ij} + w_{ij}^t)G_{ij}(w^t)
= \alpha_t \left(\frac{\mu_i}{r_i^t}b_i^t + a_i^t\right).\] Since $\sum_j w_{ij}^t - \pi_{ij}^{t+1} = s_i^t - \mu_i = e_i^t$, summing over $i$,
\begin{equation}\label{eq:asy-row}
    A^{\mathrm{row}}_t = \sum_i
    \left[
        (2\mu_i+e_i^t)\log\frac{\mu_i}{r_i^t} +
        \alpha_t\left(\frac{\mu_i b_i^t}{r_i^t} + a_i^t
        \right)+ 2e_i^t
    \right].
\end{equation}
Define \[\zeta_i^t := (2\mu_i+e_i^t)\log\frac{\mu_i}{r_i^t} + \alpha_t\left(\frac{\mu_i b_i^t}{r_i^t} + a_i^t\right)+ 2e_i^t.\]
From \eqref{eq:asy-row}, $A_t^{\rm row} = \sum_i \zeta_i^t.$
Define $u_i^t:=\sum_j w^t_{ij}G_{ij}(w^t)^2$.
Since $\|G(w^t)\|_\infty\leq M_G$ and $\sum_{j} w_{ij}^t \leq 1$, the quantities $a_i^t$, $b_i^t$, and $u_i^t$ are uniformly bounded in
absolute value by constants depending only on $M_G$.

From Taylor expanding the exponential, \[\exp(\alpha_t G_{ij}(w^t)) = 1+ \alpha_t G_{ij}(w^t) + \frac{\alpha_t^2}{2}G_{ij}(w^t)^2 + O(\alpha_t^3).\] Then the perturbed row sum admits the expansion
\[
    r_i^t = \sum_j w^t_{ij} \exp(\alpha_t G_{ij}(w^t))
    = s_i^t+\alpha_t a_i^t+\frac{\alpha_t^2}{2}u_i^t+O(\alpha_t^3) = \mu_i + e_i^t + \alpha_t a_i^t + \frac{\alpha_t^2}{2}u_i^t+O(\alpha_t^3).
\]
Similarly, 
\[b_i^t = \sum_j w_{ij}^t\exp(\alpha_tG_{ij}(w^t))G_{ij}(w^t)= a_i^t+\alpha_t u_i^t+O(\alpha_t^2).\]
Set
    $\lambda_i^t:=r_i^t-\mu_i$.
By the boundedness of $a_i^t, u_i^t$ and diminishing $\alpha_t$,
 \[\lambda_i^t =
e_i^t+\alpha_t a_i^t+\frac{\alpha_t^2}{2}u_i^t+O(\alpha_t^3) = O(|e_i^t|+\alpha_t).\] For sufficiently large $t$, $|\lambda_i^t|$ is small enough to Taylor expand around $\mu_i$. Thus
 by Taylor expansion, 
\[
\log\frac{\mu_i}{r_i^t} = \log \frac{\mu_i}{\mu_i + \lambda_i^t }
= - \frac{\lambda_i^t }{\mu_i} + \frac{{\lambda_i^t }^2}{2\mu_i^2} + O\left({\lambda_i^t }^3\right),
\qquad \frac{\mu_i}{r_i^t} = \frac{\mu_i}{\mu_i + \lambda_i^t } = 1 - \frac{\lambda_i^t }{\mu_i} + O\left({\lambda_i^t }^2\right).\] 
Since $\lambda_i^t=O(|e_i^t|+\alpha_t)$, 
\begin{align*}
    \zeta_i^t & = (2\mu_i+e_i^t)\log\frac{\mu_i}{r_i^t} +
\alpha_t\left(\frac{\mu_i b_i^t}{r_i^t} + a_i^t
\right)+ 2e_i^t \\
& =  (2\mu_i+e_i^t)
    \left(
        -\frac{\lambda_i^t}{\mu_i}   +\frac{(\lambda_i^t)^2}{2\mu_i^2}
    \right) 
    + \alpha_t
    \left[
        \left(a_i^t+\alpha_t u_i^t\right)
        \left(  1-\frac{\lambda_i^t}{\mu_i} \right)
        +a_i^t
    \right]
    +2e_i^t +O((|e_i^t|+\alpha_t)^3)\\
    &=-2\lambda_i^t -\frac{e_i^t\lambda_i^t}{\mu_i} +\frac{(\lambda_i^t)^2}{\mu_i}
    +2\alpha_t a_i^t  +\alpha_t^2u_i^t
    -\frac{\alpha_t a_i^t\lambda_i^t}{\mu_i}
    +2e_i^t +O((|e_i^t|+\alpha_t)^3)\\
    & = \left(
        -2\lambda_i^t+2e_i^t  +2\alpha_t a_i^t +\alpha_t^2u_i^t
    \right)
    + \frac{\lambda_i^t(\lambda_i^t-e_i^t-\alpha_t a_i^t)}{\mu_i}
    +O((|e_i^t|+\alpha_t)^3),
\end{align*}
where the first bracket satisfies 
    \[-2\lambda_i^t +2e_i^t +2\alpha_t a_i^t+\alpha_t^2u_i^t
    =-2\left(
        e_i^t+\alpha_t a_i^t+\frac{\alpha_t^2}{2}u_i^t+O(\alpha_t^3)
    \right) 
    +2e_i^t +2\alpha_t a_i^t +\alpha_t^2u_i^t = 
    O(\alpha_t^3).\]
Since $\lambda_i^t-e_i^t-\alpha_t a_i^t
    = \frac{\alpha_t^2}{2}u_i^t+O(\alpha_t^3)
    = O(\alpha_t^2)$ and $\lambda_i^t=O(|e_i^t|+\alpha_t)$,
we have
\[
    \frac{\lambda_i^t(\lambda_i^t-e_i^t-\alpha_t a_i^t)}{\mu_i}
    = O((|e_i^t|+\alpha_t)\alpha_t^2)
    =  O((|e_i^t|+\alpha_t)^3).
\]
Therefore,
$\zeta_i^t =O((|e_i^t|+\alpha_t)^3)$.
By Proposition \ref{prop:automatic-marginal-feasibility},  there exist constants $C_{\mathrm{row}}>0$ and
$K_{\mathrm{row}}\geq 0$ such that, for all $t\geq K_{\mathrm{row}}$,
    $\|w^t\mathbf 1_m-\mu\|_1
    \leq
    C_{\mathrm{row}}\eta_t$.
Since $e_i^t=s_i^t-\mu_i$, this implies
    $|e_i^t|
    \leq
    C_{\mathrm{row}}\eta_t$.
Also, because $\alpha_t\leq c_+(t+1)^{-\beta}$ and $p\geq0$,
    $\alpha_t
    \leq
    c_+\eta_t$. 
Hence
    $|\zeta_i^t|
    \leq C_{\rm row}\eta_t^3$.
Summing over finitely many rows gives
$|A^{\rm row}_t| \leq C_{\rm row,asym}\eta_t^3$,
for $C_{\rm row,asym} > 0$.  

Analogously, we can repeat the preceding argument with columns, with $\nu_j$ in place of $\mu_i$, and with the column infeasibility estimate from Proposition \ref{prop:automatic-marginal-feasibility}. Thus for all sufficiently large $t$, we can show there exists some $C_{\rm col,asym} > 0$ such that 
$|A^{\mathrm{col}}_t| \leq C_{\mathrm{col,asym}}\eta_t^3$.
Summing the row and column bounds yields the 
$|A_t| \leq C_{\mathrm{asym}}\eta_t^3$,
for constant $C_{\rm asym}> 0$ and sufficiently large $t$. We thus show that $|A_t| = O\left((t+1)^{-3(\beta - p)}\right)$.
\end{proof}

\subsection{Lyapunov and Corrected Residual Estimates}
\paragraph{Proof for Proposition \ref{prop:lyap-main}}
\begin{proof}
Define row and column scaling potentials 
\[
p_i^{t+1} := -\rho_t \log \left(
\frac{\mu_i}{\sum_{\ell} w^t_{i\ell}\exp\left(G_{i\ell}(w^t)/\rho_t\right)} \right), \qquad 
q_j^{t+1} := -\rho_t \log\left(
\frac{\nu_j}{\sum_{\ell} \pi^{t+1}_{\ell j}\exp\left(G_{\ell j}(\pi^{t+1})/\rho_t\right)}
\right),
\]
with lifted matrices defined as $p_{ij}^{t+1}:=p_i^{t+1}$ and $q_{ij}^{t+1}:=q_j^{t+1}$ for $i\in [n]$ and $j \in [m]$. Then $p^{t+1}$ is row-constant and $q^{t+1}$ is column-constant. Since row-constant matrices are exactly normals to the affine row-constraint manifold and similarly for column-constant matrices, we have $p^{k+1}\in \mathcal N_{\mathcal C_1}(\pi^{t+1})$ and $q^{t+1}\in \mathcal N_{\mathcal C_2}(w^{t+1})$. Also, since 
\[
\log \pi_{ij}^{t+1} = \frac{G_{ij}(w^t)}{\rho_t} + \log \frac{\mu_i}{\sum_{l}w_{il}^t\exp(G_{il}(w^t)/\rho_t)} + \log w_{ij}^t = \frac{G_{ij}(w^t)}{\rho_t} - \frac{p_{ij}^{t+1}}{\rho_t} + \log w_{ij}^t,
\]
at iteration $t$, the exact half-step optimality conditions are:
\begin{align}\label{eq:half_step_opt}
 - G(w^t) + \rho_t(\log\pi^{t+1} - \log w^t) + p^{t+1}= \mathbf 0,\\
    -G(\pi^{t+1}) + \rho_t(\log w^{t+1} - \log \pi^{t+1}) + q^{t+1} = \mathbf 0.
\end{align}
Define $\Lambda_t = f(\pi^t, w^t) + \rho_t D_h(\pi^t, w^t)$. Since $\pi^{t+1}, \pi^t \in \mathcal C_1$ and $p^{t+1}$ is row constant, $\sum_j \pi_{ij}^{t+1} = \sum_j \pi_{ij}^t = \mu_i$ and $p^{t+1}_{ij} = p_i^{t+1}$. Thus $\langle p^{t+1}, \pi^{t+1} - \pi^t\rangle = 0$. Then 
\[
-\langle G(w^t), \pi^{t+1} - \pi^t\rangle + \rho_t \langle \log \pi^{t+1} - \log w^t, \pi^{t+1} - \pi^t\rangle = 0.
\]
Since $f(\pi, w) = -\langle \pi, G(w)\rangle$, $f(\pi^t,w^t)-f(\pi^{t+1},w^t) = \langle G(w^t),\pi^{t+1}-\pi^t\rangle$. By three-point identity for Bregman divergences, $\langle \log\pi^{t+1}-\log w^t, \pi^{t+1}-\pi^t \rangle = D_h(\pi^t,\pi^{t+1}) +D_h(\pi^{t+1},w^t) -D_h(\pi^t,w^t)$. Thus
\begin{equation}\label{eq:exc-cyc-ident-fir}
    f(\pi^{t}, w^t) - f(\pi^{t+1}, w^t) = \rho_t \left( D_h(\pi^t, \pi^{t+1}) + D_h(\pi^{t+1}, w^t) - D_h(\pi^t, w^t)\right).
\end{equation}

Similarly, since $w^t, w^{t+1} \in \mathcal C_2$ and $q^{t+1}$ is column constant, $\langle q^{t+1}, w^t - w^{t+1}\rangle = 0$ and we can obtain 
\begin{equation}\label{eq:exc-cyc-ident-sec}
    f(\pi^{t+1}, w^t) - f(\pi^{t+1}, w^{t+1}) = \rho_t \left( D_h(w^t, w^{t+1}) + D_h(w^{t+1}, \pi^{t+1}) - D_h(w^{t}, \pi^{t+1})\right).
\end{equation}

Adding \eqref{eq:exc-cyc-ident-fir} and \eqref{eq:exc-cyc-ident-sec} together, we obtain 
\begin{multline*}
f(\pi^t, w^t) - f(\pi^{t+1}, w^{t+1}) = \rho_t \big(
D_h(\pi^t, \pi^{t+1}) + D_h(w^t, w^{t+1}) \\
+ D_h(\pi^{t+1}, w^t) + D_h(w^{t+1}, \pi^{t+1}) - D_h(\pi^t, w^t) - D_h(w^{t}, \pi^{t+1})\big).
\end{multline*}

Recall that $A_t=A^{\mathrm{row}}_t+A^{\mathrm{col}}_t = D_h(\pi^{t+1},w^t)-D_h(w^t,\pi^{t+1}) + D_h(w^{t+1},\pi^{t+1})-D_h(\pi^{t+1},w^{t+1})$. Combining this, we obtain 
\begin{equation}\label{eq:exc-cyc-ident}
\Lambda_t - f(\pi^{t+1}, w^{t+1}) - \rho_t D_h(\pi^{t+1}, w^{t+1}) 
= \rho_t (D_h(\pi^t, \pi^{t+1}) + D_h(w^t, w^{t+1}) + A_t).
\end{equation}
  From \eqref{eq:exc-cyc-ident},
$\Lambda_t - \Lambda_{t+1}  = \rho_t \left(D_h(\pi^t, \pi^{t+1}) + D_h(w^t, w^{t+1}) + A_t\right) -(\rho_{t+1}-\rho_t)D_h(\pi^{t+1}, w^{t+1})$. 
By Lemma~\ref{lem:guarded-backtracking}, $\alpha_t$ is nonincreasing and $\alpha_t \geq \frac{c_-}{t+2}$ for all sufficiently large $t$. Then $\rho_t=1/\alpha_t$ is nondecreasing and $\rho_t \leq \frac{t+2}{c_-}$, i.e., $\rho_t=O(t+2)$. By Proposition \ref{prop:cubic-main},
    $\rho_t |A_t| = O\!\left((t+1)^{1-3(\beta-p)}\right)$,
which is summable when $\beta>2/3+p$. By Proposition \ref{prop:automatic-marginal-feasibility},
$D_h(\pi^{t+1},w^{t+1}) = O(\eta_t^2)=O\left((t+1)^{-2(\beta - p)}\right)$ with $r := 2(\beta-p)>4/3>1$. Since $\rho_t$ is nondecreasing and $\rho_t = O(t)$. Let $T$ be sufficiently large.
For $N>T$,
\begin{align*}
\sum_{t=T}^{N}(\rho_{t+1}-\rho_t)(t+1)^{-r}
&=\rho_{N+1}(N+1)^{-r}-\rho_T(T+1)^{-r} + \sum_{t=T+1}^{N}\rho_t\left(t^{-r}-(t+1)^{-r}\right).
\end{align*}
We know that
    $\rho_{N+1}(N+1)^{-r}=O((N+2)(N+1)^{-r})=O(N^{1-r})$,
and $r>1$. 
By the mean-value theorem, $t^{-r}-(t+1)^{-r}\leq C_r t^{-r-1}$ for a constant $C_r>0$. Then
    $\rho_t\left(t^{-r}-(t+1)^{-r}\right)
    =O(t\cdot t^{-r-1})
    =O(t^{-r})$,
and $\sum_t t^{-r}<\infty$ because $r>1$. Hence, the partial sums are uniformly bounded. Since the terms
$(\rho_{t+1}-\rho_t)(t+1)^{-r}$ are nonnegative, the series converges. Thus
\[
    \sum_{t=1}^{\infty}(\rho_{t+1}-\rho_t)D_h(\pi^{t+1},w^{t+1})<\infty.
\]
Since $\mathcal C_1\times \mathcal C_2$ is compact, $f$ is continuous and $D_h$ is nonnegative, $\Lambda_t$ is bounded below. Summing the Lyapunov identity above, we obtain
\[
\sum_{t = 1}^{\infty} \rho_t(D_h(\pi^{t}, \pi^{t+1}) + D_h(w^t, w^{t+1})) < \infty.
\]
\end{proof}
\begin{lemma} \label{lem:corr-res-inequal}
There exist an integer $K_R$ and a constant $C_R > 0$ such that for every $t \geq K_R$,
\[
    \alpha_t\|R_t\|_{M_t}^2 \leq 4 \rho_t D_h(w^t, w^{t+1}) + C_R\alpha_t\eta_t^2.
\]
\end{lemma}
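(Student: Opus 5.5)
The plan is to split $R_t$ into its two natural pieces and bound each one separately. By $(x+y)^2\le 2x^2+2y^2$ applied entrywise with the weights $m^t_{ij}$,
\[
\|R_t\|_{M_t}^2\le 2\rho_t^2\,\bigl\|\log w^t-\log w^{t+1}\bigr\|_{M_t}^2+2\,\bigl\|G(\pi^{t+1})-G(w^t)\bigr\|_{M_t}^2 .
\]
After multiplying by $\alpha_t=1/\rho_t$, the first term becomes $2\rho_t\|\log w^t-\log w^{t+1}\|_{M_t}^2$. The second becomes $2\alpha_t\|G(\pi^{t+1})-G(w^t)\|_{M_t}^2$, and this should produce the $C_R\alpha_t\eta_t^2$ remainder.

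\textbf{First term: an entrywise inequality.} For $x,y>0$ I will show
\[
x\log\tfrac{x}{y}-x+y\ \ge\ \tfrac12\min\{x,y\}\bigl(\log\tfrac{x}{y}\bigr)^2 .
\]
By homogeneity it suffices to take $y=1$, $x=e^s$, so the left side is $\psi(s)=e^s(s-1)+1$, the same $\psi$ used in the proof of Proposition~\ref{prop:automatic-marginal-feasibility}.
\begin{itemize}
\item For $s\ge0$: $\psi(0)=\psi'(0)=0$ and $\psi''(s)=e^s(1+s)\ge1$, so $\psi(s)\ge s^2/2$.
\item For $s<0$: set $g(s)=\psi(s)-e^s s^2/2$. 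Then $g(0)=0$ and $g'(s)=-e^s s^2/2\le0$, so $g$ is nonincreasing and hence $g(s)\ge g(0)=0$ for $s<0$.
\end{itemize}
Apply this with $x=w^t_{ij}$, $y=w^{t+1}_{ij}$ and sum over $(i,j)$. This gives $D_h(w^t,w^{t+1})\ge\frac12\|\log w^t-\log w^{t+1}\|_{M_t}^2$, so the first term is at most $4\rho_t D_h(w^t,w^{t+1})$. This produces exactly the constant $4$ in the statement, and I expect this inequality to be the only delicate point.

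\textbf{Second term.} Since $0<m^t_{ij}\le w^t_{ij}$ and $\sum_{ij}w^t_{ij}=1$, we have $\|U\|_{M_t}^2\le\|U\|_\infty^2$. Because $G$ is linear, $\|G(\pi^{t+1})-G(w^t)\|_\infty\le |D_X|_{\max}|D_Y|_{\max}\|\pi^{t+1}-w^t\|_1$.

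Both $\pi^{t+1}$ and $w^t$ have unit total mass, so $D_h(\pi^{t+1},w^t)$ is an ordinary KL divergence between probability vectors. Pinsker's inequality then gives $\|\pi^{t+1}-w^t\|_1^2\le 2D_h(\pi^{t+1},w^t)$. By Proposition~\ref{prop:automatic-marginal-feasibility}, $D_h(\pi^{t+1},w^t)\le C_{1,\mathrm{cross}}\eta_t^2$ for $t\ge K_{\mathrm{cross},1}$.

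Combining these gives $2\alpha_t\|G(\pi^{t+1})-G(w^t)\|_{M_t}^2\le C_R\alpha_t\eta_t^2$ with
\[
C_R=4\,\bigl(|D_X|_{\max}|D_Y|_{\max}\bigr)^2 C_{1,\mathrm{cross}}.
\]

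\textbf{Conclusion.} Take $K_R:=K_{\mathrm{cross},1}$, or any larger index from which the estimates of Lemma~\ref{lem:guarded-backtracking} and Proposition~\ref{prop:automatic-marginal-feasibility} hold. Adding the bounds for the two terms gives $\alpha_t\|R_t\|_{M_t}^2\le 4\rho_t D_h(w^t,w^{t+1})+C_R\alpha_t\eta_t^2$ for every $t\ge K_R$, which is the claim.
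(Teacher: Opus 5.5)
Your proof is correct and follows the paper's argument. Both use the split $(x+y)^2\le 2x^2+2y^2$, the entrywise bound $\min\{a,b\}(\log a-\log b)^2\le 2\bigl(a\log(a/b)-a+b\bigr)$ (which you prove, whereas the paper only asserts it), and Pinsker's inequality together with Proposition~\ref{prop:automatic-marginal-feasibility} for the gradient-difference term; the only cosmetic difference is that you bound that term via $\|U\|_{M_t}\le\|U\|_\infty$ and $|D_X|_{\max}|D_Y|_{\max}$, while the paper uses $\|U\|_{M_t}\le\|U\|_F$ and $\|D_X\|_2\|D_Y\|_2$, and either gives a valid $C_R$.
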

\begin{proof} 
Entrywise, we have \[\min \{a, b\}(\log a - \log b)^2 \leq 2(a\log \frac{a}{b} - a + b) \Rightarrow \|\log w^t - \log w^{t+1}\|_{M_t}^2 \leq 2D_h (w^t, w^{t+1}).\]

By Pinsker's Inequality, $\|\pi^{t+1} - w^t\|_F^2 \leq \|\pi^{t+1} - w^t\|_1^2 \leq 2D_h(\pi^{t+1}, w^t)$. Then for $L_G := \|D_X\|_2 \|D_Y\|_2$,
\begin{align*}
    \|R_t\|_{M_t}^2 &\leq 2\rho_t^2 \|\nabla h(w^t) - \nabla h(w^{t+1})\|_{M_t}^2 + 2\|G(\pi^{t+1}) - G(w^t)\|_{M_t}^2 \\
    &\leq  2\rho_t^2 \|\nabla h(w^t) - \nabla h(w^{t+1})\|_{M_t}^2 + 2\|D_X\|_2^2 \|D_Y\|_2^2\|\pi^{t+1} - w^t\|_F^2\\
    & \leq 4 \rho_t^2 D_h(w^t, w^{t+1}) + 4L_G^2 D_h(\pi^{t+1}, w^t).
\end{align*}
By Proposition \ref{prop:automatic-marginal-feasibility}, there exist $K_{\mathrm{cross},1}$ and $C_{1,\mathrm{cross}}>0$ such that, for all $t\ge K_{\mathrm{cross},1}$, $D_h(\pi^{t+1},w^t) \le C_{1,\mathrm{cross}}\eta_t^2$. Hence
\[ 
4L_G^2\alpha_tD_h(\pi^{t+1},w^t) \le 4L_G^2C_{1,\mathrm{cross}}\alpha_t\eta_t^2. 
\] 
Define $C_R:=4L_G^2C_{1,\mathrm{cross}}$ and choose $K_R\ge K_{\mathrm{cross},1}$ sufficiently large. Combining all the above, for every $t\ge K_R$, 
\[ 
\alpha_t\|R_t\|_{M_t}^2 \le 4\rho_tD_h(w^t,w^{t+1}) + C_R\alpha_t\eta_t^2. 
\]
\end{proof}

\paragraph{Proof of Proposition \ref{prop:residual}}
\begin{proof}
From \eqref{eq:half_step_opt}, 
\[p^{t+1}+ q^{t+1} = G(\pi^{t+1}) + G(w^t) + \rho_t(\nabla h(w^{t}) - \nabla h(w^{t+1})),\] i.e., $R_t = -2G(w^t) + p^{t+1}+q^{t+1}$.
Since $w_{ij}^{t} \geq 0$ and $\sum_{ij} w_{ij}^t = 1$,  $w_{ij}^t \leq 1$ and thus for any matrix $U \in \mathbb R^{n \times m}$, $\|U\|^2_{M_t} = \sum_{ij}m_{ij}^tU_{ij}^2 \leq \|U\|_F^2$.
By Lemma \ref{lem:corr-res-inequal}, there exist constants $C_R>0$ and $K_R$ such that, for every $t\ge K_R$, $\alpha_t\|R_t\|_{M_t}^2 \le 4\rho_t D_h(w^t,w^{t+1}) + C_R\alpha_t\eta_t^2$. 
    Summing from $t=K_R$ to $N-1$, 
    \[ 
    \sum_{t=K_R}^{N-1} \alpha_t\|R_t\|_{M_t}^2 \le 4\sum_{t=K_R}^{N-1} \rho_tD_h(w^t,w^{t+1}) + C_R\sum_{t=K_R}^{N-1}\alpha_t\eta_t^2. 
    \] 
    By the Lyapunov summability estimate:  $\sum_{t=K_R}^{\infty} \rho_tD_h(w^t,w^{t+1}) <\infty$ from Proposition~\ref{prop:lyap-main} . 
    Since $\alpha_t=O((t+1)^{-\beta})$ and $\eta_t^2=(t+1)^{-2(\beta-p)}$, the second series is bounded by a constant times $\sum_t (t+1)^{-3\beta+2p}$, which is summable under $\beta>2/3+p$.
    Then $\sum_{t=K_R}^{N-1} \alpha_t\|R_t\|_{M_t}^2$ are uniformly bounded in $N$. Passing to the limit, 
    \[\sum_{t=K_R}^{\infty} \alpha_t\|R_t\|_{M_t}^2 <\infty.\] 
    Define $C_\star := \sum_{t=K_R}^{\infty} \alpha_t\|R_t\|_{M_t}^2 <\infty$. Then for every $N>K_R$, 
    \[ 
    \left(\min_{K_R\le i\le N-1}\|R_i\|_{M_i}^2\right) \sum_{t=K_R}^{N-1}\alpha_t 
    \le \sum_{t=K_R}^{N-1} \alpha_t\|R_t\|_{M_t}^2 \le C_\star. 
    \] 
    Thus $\min_{K_R\le i\le N-1}\|R_i\|_{M_i}^2 \le C_\star/\sum_{t=K_R}^{N-1}\alpha_t$. 
    Since 
    \[ \sum_{t=K_R}^{N-1}\alpha_t \ge c_- \sum_{t=K_R}^{N-1}\frac1{t+2}
    \ge c_- \log\!\left(\frac{N+1}{K_R+2}\right), \] we have showed \[ \min_{K_R\le i\le N-1}\|R_i\|_{M_i}^2 \le \frac{C_\star} {c_-\log\!\bigl((N+1)/(K_R+2)\bigr)}, \] which is equivalent to $\min_{0\le i\le N-1}\|R_i\|_{M_i}^2 = O\left(\frac1{\log N}\right)$.
\end{proof}

\subsection{Subsequence Convergence}
\begin{lemma}[Linear regularity of the marginal sets]
\label{lem:blr}
There exists
$\kappa>0$, depending only on $n$ and $m$, such that every
$\pi\in\mathbb R^{n\times m}$ satisfies
$\operatorname{dist}_F(\pi,\mathcal C)
\leq \kappa\max\!\left\{
\operatorname{dist}_F(\pi,\mathcal C_1),
\operatorname{dist}_F(\pi,\mathcal C_2)
\right\}$.
\end{lemma}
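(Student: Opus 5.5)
We prove the bound by a Hoffman-type error bound for polyhedra, and make the constant depend only on $n,m$ (together with the fixed positive marginals). The sets $\mathcal C_1$, $\mathcal C_2$ and $\mathcal C=\mathcal C_1\cap\mathcal C_2$ are nonempty polyhedra: $\mu\nu^\top\in\mathcal C$. Each is described by linear equalities (row sums or column sums) and the nonnegativity inequalities $\pi\ge\mathbf 0$.

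\textbf{Step 1: Hoffman bound for $\mathcal C$.} Write $\mathcal C=\{\pi: A\operatorname{vec}(\pi)=b,\ \pi\ge\mathbf 0\}$, where $A$ stacks the row-sum and column-sum operators and $b=(\mu,\nu)$. Hoffman's lemma gives a constant $\kappa_0$, depending only on $A$ (hence only on $n,m$), such that
\[
\operatorname{dist}_F(\pi,\mathcal C)\le\kappa_0\bigl(\|\pi\mathbf 1_m-\mu\|_2+\|\pi^\top\mathbf 1_n-\nu\|_2+\|(\pi)_-\|_F\bigr),
\]
where $(\pi)_-$ denotes the entrywise negative part.

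\textbf{Step 2: bound each residual by the two distances.} Set $d_r:=\operatorname{dist}_F(\pi,\mathcal C_r)$ and $d:=\max\{d_1,d_2\}$.
\begin{itemize}
\item Row residual: take $y=\mathcal P_{\mathcal C_1}(\pi)$. Then $y\mathbf 1_m=\mu$, so $\|\pi\mathbf 1_m-\mu\|_2=\|(\pi-y)\mathbf 1_m\|_2\le\sqrt m\,d_1$.
\item Column residual: in the same way, $\|\pi^\top\mathbf 1_n-\nu\|_2\le\sqrt n\,d_2$.
\item Negative part: since $y\ge\mathbf 0$, every negative entry of $\pi$ satisfies $|(\pi)_{-,ij}|\le|\pi_{ij}-y_{ij}|$. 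Hence $\|(\pi)_-\|_F\le d_1\le d$.
\end{itemize}
Substituting these into Step 1 gives the claim with $\kappa=\kappa_0(\sqrt m+\sqrt n+1)$.

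\textbf{Step 3: independence of the marginals.} The stated dependence is only on $n$ and $m$, so we must check that $\kappa_0$ does not depend on $b$. Hoffman's constant is determined by the constraint matrix alone, uniformly over all right-hand sides that make the system feasible. This is the step needing care, because the standard statement must be invoked in its form with the inequality residual measured by the negative part. Should a more self-contained route be preferred, the alternative is: use the fact that $A$ is totally unimodular, and bound $\kappa_0$ through the finitely many basis submatrices of $[A;I]$. This yields a constant that is again a function of $n,m$ only.
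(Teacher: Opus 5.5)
Your proof is correct, but it uses a different key tool from the paper's.

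\textbf{Your route.} You apply Hoffman's error bound to the whole polyhedron $\mathcal C=\{\pi\ge 0,\ \pi\mathbf 1_m=\mu,\ \pi^\top\mathbf 1_n=\nu\}$. You then bound the three residuals (row sums, column sums, negative part) by $d_1$ and $d_2$. Uniformity in $(\mu,\nu)$ comes from the right-hand-side-independent form of Hoffman's constant, which you rightly flag as the point to cite carefully.

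\textbf{The paper's route.} It gives an explicit repair instead. First project onto $\mathcal C_1$ to obtain $U\ge 0$ with the correct row sums. Then, within each row, move the column excess $a_j=(c_j-\nu_j)_+$ proportionally to the deficient columns. This yields $Y=U-H+rb^\top/d\in\mathcal C$ with
\[
\|Y-U\|_1=\|U^\top\mathbf 1_n-\nu\|_1\le\|U-V\|_1\le\sqrt{nm}\,(d_1+d_2),
\]
and hence the explicit constant $\kappa=1+2\sqrt{nm}$.

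\textbf{Comparison.} The paper's argument is self-contained and gives a concrete constant. It also handles nonnegativity automatically, since it starts from $U\ge 0$ and preserves row sums, so no separate negative-part residual is needed. Your argument is shorter and generic: it applies unchanged to any nonempty polyhedral intersection. The cost is that $\kappa_0$ stays implicit unless you carry out the totally-unimodular bound on basis inverses that you sketch, and the argument rests on quoting Hoffman's lemma in exactly the right form.
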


\begin{proof}
Let
$U=\mathcal{P}_{\mathcal C_1}(\pi)$ and
$V=\mathcal{P}_{\mathcal C_2}(\pi)$.
Denote the column sums of $U$ by
$c=U^\top\mathbf 1_n$. We first construct a coupling
$Y\in\mathcal C$ close to $U$.

For each column $j$, define
$a_j:=(c_j-\nu_j)_+$ and $b_j:=(\nu_j-c_j)_+$ where $a_j$ is the amount by which the $j$-th column sum of $U$ exceeds its prescribed marginal $\nu_j$, while $b_j$ is the
corresponding shortage. Since
$\mathbf 1_m^\top c =\mathbf 1_n^\top\mu
=\mathbf 1_m^\top\nu =1$,
the total excess equals the total shortfall, i.e.,  $d:=\sum_{j=1}^m a_j=\sum_{j=1}^m b_j$.
If $d=0$, then $U\in\mathcal C$. Otherwise, remove the excess mass
from each column proportionally:
\[
H_{ij}=
\begin{cases}
U_{ij}a_j/c_j, & c_j>0,\\
0, & c_j=0,
\end{cases}
\qquad r=H\mathbf 1_m,
\]
and redistribute the removed mass within each row among the deficient columns with
$Y=U-H+\frac{rb^\top}{d}$.
Then $Y\geq0$, $Y\mathbf 1_m=\mu$, and $Y^\top\mathbf 1_n=\nu$. Thus $Y\in\mathcal C$ and
$\|Y-U\|_1 \leq 2d =\|c-\nu\|_1$.
Because $V^\top\mathbf 1_n=\nu$,
$\|c-\nu\|_1 =\|(U-V)^\top\mathbf 1_n\|_1 \leq\|U-V\|_1 \leq\sqrt{nm}\,\|U-V\|_F$.
Therefore,
\[
\operatorname{dist}_F(\pi,\mathcal C)
\leq \|\pi-U\|_F+\|U-Y\|_F
\leq
(1+\sqrt{nm})\operatorname{dist}_F(\pi,\mathcal C_1)
+\sqrt{nm}\operatorname{dist}_F(\pi,\mathcal C_2).
\]
With $\kappa=1+2\sqrt{nm}$, $\operatorname{dist}_F(\pi,\mathcal C)
\leq \kappa\max\!\left\{
\operatorname{dist}_F(\pi,\mathcal C_1),
\operatorname{dist}_F(\pi,\mathcal C_2)
\right\}$.
\end{proof}
\paragraph{Proof for Lemma \ref{lem:aae}}
\begin{proof}
Proposition~\ref{prop:cubic-main} gives
$\bigl|D_h(\pi^{t+1},w^t)-D_h(w^t,\pi^{t+1})\bigr|
+
\bigl|D_h(w^{t+1},\pi^{t+1})
      -D_h(\pi^{t+1},w^{t+1})\bigr|
=O(\eta_t^3)$.
Because $\eta_t=(t+1)^{-(\beta-p)}$ and $\beta-p>2/3$, the resulting series is summable.
\end{proof}

\paragraph{Proof for Theorem \ref{thm:unconditional}}
\begin{proof}
    Since $k_0<\infty$, the finite prefix before $k_0$ does not affect any asymptotic statement. After relabeling the post-burn tail, the accepted steps satisfy the guarded bounds in Lemma~\ref{lem:guarded-backtracking}; all estimates above apply with constants depending on the switched point $(\pi^{k_0},w^{k_0})$.

\textbf{(1)} By Proposition \ref{prop:residual}, $\sum_{t=K_R}^{\infty}\alpha_t\|R_t\|_{M_t}^2<\infty$.
Since $\sum_{t=K_R}^{\infty}\alpha_t=\infty$, if $ \liminf_{t\to\infty}\|R_t\|_{M_t} \neq 0$, then there exists $\varepsilon>0$ and $K\ge K_R$ such that for all $t \geq K$, $\|R_t\|_{M_t}\ge \varepsilon$. Then \[\sum_{t=K}^{\infty}\alpha_t\|R_t\|_{M_t}^2 \ge  \varepsilon^2\sum_{t=K}^{\infty}\alpha_t   = \infty,\] which contradicts Proposition \ref{prop:residual}. Thus $\liminf_{t\to\infty}\|R_t\|_{M_t}=0$.

\textbf{(2)}By Proposition \ref{prop:automatic-marginal-feasibility}, $D_h(\pi^{t+1},w^{t+1})\le C_{2,\mathrm{cross}}\eta_t^2$. Since $\eta_t\to0$, $D_h(\pi^{t+1},w^{t+1})\to 0$. Since  $\pi^{t+1}$ and $w^{t+1}$ have total mass one, by Pinsker's inequality $\|\pi^{t+1}-w^{t+1}\|_1^2 \le  2D_h(\pi^{t+1},w^{t+1})$, and thus $ \varepsilon_t:= \|\pi^{t+1}-w^{t+1}\|_1\to 0$. Define $x^{t+1}:=\mathcal P_{\mathcal C}(\pi^{t+1})$. Since $\pi^{t+1}\in \mathcal C_1$ and $w^{t+1}\in \mathcal C_2$, we have $\operatorname{dist}(\pi^{t+1},\mathcal C_1)=0$ and $\operatorname{dist}(\pi^{t+1},\mathcal C_2) \le \|\pi^{t+1}-w^{t+1}\|$. Since $\mathcal C_1, \mathcal C_2$ are polyhedral, by Lemma \ref{lem:blr}, \[\operatorname{dist}(\pi^{t+1},\mathcal C) \le \kappa\max\{ \operatorname{dist}(\pi^{t+1},\mathcal C_1), \operatorname{dist}(\pi^{t+1},\mathcal C_2) \},\] and
\begin{align*}
    \|x^{t+1} - \pi^{t+1}\| + \|x^{t+1} - w^{t+1}\| & \leq 2\|x^{t+1} - \pi^{t+1}\| + \|\pi^{t+1} - w^{t+1}\| = 2 \mathrm{dist}(\pi^{t+1}, \mathcal C) + \varepsilon_t \leq (2\kappa + 1) \varepsilon_t.
\end{align*}
 As $\varepsilon_t \to 0$, $\|x^{t+1} - \pi^{t+1}\| \to 0$ and $\|x^{t+1} - w^{t+1}\| \to 0$. Then the sequences $\{x^t\}$, $\{\pi^t\}$, and $\{w^t\}$ have the same set of accumulation points.
\end{proof}
\begin{lemma}\label{lem:active-face-compactness}
    Let $x^{t_\ell+1}=\mathcal P_{\mathcal C}(\pi^{t_\ell+1})\to \bar x$ be a residual-vanishing
convergent subsequence, and let $S:=\{(i,j):\bar x_{ij}>0\}$.
The following conditions hold along the subsequence:
\begin{enumerate}[label=(\alph*)]
\item \textbf{Active-support nondegeneracy.} There exists $c_S>0$ such that for all sufficiently large $\ell$, for all $(i,j)\in S$, $\min\{w^{t_\ell}_{ij},w^{t_\ell+1}_{ij}\}\ge c_S$.
\item \textbf{Bounded normalized row/column potentials.} Let Assumption
\ref{assum:support} hold along this subsequence. Then, after fixing a reference row index $i_0$, $\{p^{t_\ell+1}\}$ and $\{q^{t_\ell+1}\}$ are bounded.
\end{enumerate}
\end{lemma}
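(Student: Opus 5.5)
For part (a), I will use Theorem~\ref{thm:unconditional}(2), which gives $\|x^{t+1}-w^{t+1}\|_F\to0$ and $\|x^{t+1}-\pi^{t+1}\|_F\to0$. Along the subsequence, $x^{t_\ell+1}\to\bar x$, so $w^{t_\ell+1}\to\bar x$. The index shift is handled as follows: $\|w^{t+1}-w^t\|_1\le\|w^{t+1}-\pi^{t+1}\|_1+\|\pi^{t+1}-w^t\|_1$, and both terms are $O(\eta_t)$ by Proposition~\ref{prop:automatic-marginal-feasibility} and Pinsker's inequality. Hence $w^{t_\ell}\to\bar x$ as well. Let $c_S:=\tfrac12\min_{(i,j)\in S}\bar x_{ij}>0$, which is a finite minimum of positive numbers. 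Then for large $\ell$ both $w^{t_\ell}_{ij}$ and $w^{t_\ell+1}_{ij}$ are at least $c_S$ on $S$. This part is routine.

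For part (b), I will start from the identity used in the proof of Proposition~\ref{prop:residual}: $R_t=-2G(w^t)+p^{t+1}+q^{t+1}$, with $p^{t+1}$ row-constant and $q^{t+1}$ column-constant. The pair $(p,q)$ is determined only up to the gauge $p_i\mapsto p_i+c$, $q_j\mapsto q_j-c$. I will therefore normalize by subtracting $p^{t+1}_{i_0}$ from all $p_i$ and adding it to all $q_j$. This leaves $p+q$, and hence $R_t$, unchanged. The normalized sum $P_{ij}:=p_i+q_j=R^{t_\ell}_{ij}+2G_{ij}(w^{t_\ell})$ is the object to control. Since $\|G\|_\infty\le M_G$, bounds on $R$ translate directly into bounds on $P$.

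On $S$ I will get two-sided bounds. By (a), the weights satisfy $m^{t_\ell}_{ij}\ge c_S$, so $c_S (R^{t_\ell}_{ij})^2\le\|R^{t_\ell}\|_{M_{t_\ell}}^2\to0$. Thus $R^{t_\ell}_{ij}\to0$ and $|p_i+q_j|\le 2M_G+1$ on $S$ for large $\ell$. Off $S$, Assumption~\ref{assum:support} gives only a one-sided bound, $p_i+q_j\ge -2M_G-1$ eventually. The key combinatorial fact is that every row and every column of $\bar x\in\mathcal C$ has at least one positive entry, because $\mu,\nu>0$. I will pick, for each row $i$, some $j(i)$ with $(i,j(i))\in S$, and for each column $j$, some $i(j)$ with $(i(j),j)\in S$.

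The main obstacle is turning these partial constraints into bounds on the individual potentials, since two-sided control holds only on $S$. From the gauge $p_{i_0}=0$ and $(i_0,j(i_0))\in S$, I get $q_{j(i_0)}$ bounded. Propagation then alternates along edges of the bipartite support graph of $S$, with each step of the form $p_i=(p_i+q_j)-q_j$, so it reaches only the connected component of $i_0$.

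If $S$ has several components, there is a separate gauge freedom per component, and I will need the one-sided inequalities off $S$ to pin the components together. For components $K_1$ and $K_2$, the inequalities $p_i+q_j\ge -c$ for $i\in K_1$, $j\in K_2$ bound the relative shift $\delta$ from below, and the reversed pairs $i\in K_2$, $j\in K_1$ bound $-\delta$ from below, which together give two-sided control. Cross pairs in both directions exist because components of a support containing every row and column contain both row and column nodes. I expect this step to need care. If it fails, I would fall back to normalizing one reference index per component, which is harmless because only $p+q$ on $S$ and its sign off $S$ enter the later KKT argument.
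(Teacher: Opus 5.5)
Your proposal is correct and matches the paper's proof essentially step for step. In (a) you use the same $c_S=\tfrac12\min_{(i,j)\in S}\bar x_{ij}$; in (b) you propagate two-sided bounds along the bipartite support graph of $S$ within each component, then apply Assumption~\ref{assum:support} to cross-component pairs in both directions to bound the component offsets under a single global gauge. The only deviation is minor: you get $w^{t_\ell}\to\bar x$ from the cross-divergence bounds of Proposition~\ref{prop:automatic-marginal-feasibility} rather than from the Lyapunov summability of Proposition~\ref{prop:lyap-main}, which is equally valid, and your per-component fallback is unnecessary since the cross-component step goes through (it also would not yield the single-reference bound the statement claims).
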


\begin{proof}
\textbf{(a)}  By Proposition \ref{prop:automatic-marginal-feasibility}, $D_h(\pi^{t_\ell+1},w^{t_\ell+1})\to 0$. Since both $\pi^{t_\ell+1}$ and $w^{t_\ell+1}$ have total mass one, by Pinsker's inequality, 
$\|\pi^{t_\ell+1}-w^{t_\ell+1}\|_1 \to0$ .
Since $\pi^{t_\ell+1}\in \mathcal C_1$ and $w^{t_\ell+1}\in \mathcal C_2$, $\mathrm{dist}(\pi^{t_\ell+1},\mathcal C_1)=0$ and $\mathrm{dist}(\pi^{t_\ell+1},\mathcal C_2) \le \|\pi^{t_\ell+1}-w^{t_\ell+1}\|$. 
Since $\mathcal C_1$ and $\mathcal C_2$ are polyhedral, by Lemma \ref{lem:blr}, there exists a constant $\kappa>0$ such that
\[
\rm{dist}(\pi^{t_\ell+1},\mathcal C_1\cap \mathcal C_2)
\le \kappa
    \max\{\rm{dist}(\pi^{t_\ell+1},\mathcal C_1), \rm{dist}(\pi^{t_\ell+1},\mathcal C_2) \}.
\]
Since $x^{t_\ell+1}=\mathcal P_{\mathcal C}(\pi^{t_\ell+1})$ and $\mathcal C = \mathcal C_1\cap \mathcal C_2$, 
$\|x^{t_\ell+1}-\pi^{t_\ell+1}\| = \rm{dist}(\pi^{t_\ell+1},\mathcal C) \to 0$.
Since $x^{t_\ell+1}\to\bar x$, $\pi^{t_\ell+1}\to\bar x$.
By Proposition~\ref{prop:lyap-main},
$\sum_{k=1}^{\infty}\rho_k D_h(w^k,w^{k+1})<\infty$.
Since $\rho_k$ is bounded below by a positive constant, $D_h(w^k,w^{k+1})\to0$. 
Again by Pinsker's inequality $\|w^{k+1}-w^k\|_1\to0$ .
Therefore $w^{t_\ell}\to\bar x$ as well. 
If $(i,j)\in S$, then
$\bar x_{ij}>0$. Hence, for all sufficiently large $\ell$,
    \[w^{t_\ell}_{ij}\ge \frac{\bar x_{ij}}{2} \qquad\text{and} \qquad
    w^{t_\ell+1}_{ij}\ge \frac{\bar x_{ij}}{2}.\]
Since $S$ is finite, for all $(i,j)\in S$ and for sufficiently large $\ell$,
$\min\{w^{t_\ell}_{ij},w^{t_\ell+1}_{ij}\}\ge c_S$ where \[c_S:=\frac12\min_{(i,j)\in S}\bar x_{ij}>0.\]

\textbf{(b)} By \textbf{(a)}, for every $(i,j)\in S$,
$m^{t_\ell}_{ij} = \min\{w^{t_\ell}_{ij},w^{t_\ell+1}_{ij}\} \ge c_S$ for sufficiently large $\ell$. 
Since $\|R_{t_\ell}\|_{M_{t_\ell}}\to 0$,
for $(i,j)\in S$, 
\[
c_S R_{t_\ell, ij}^2 \leq m_{ij}^{t_\ell}R_{t_\ell, ij}^2 
\leq \sum_{a,b} m^{t_\ell}_{ab} R_{t_\ell, ab}^2
= \|R_{t_\ell}\|_{M_{t_\ell}}^2 \to 0,
\]
so $R_{t_\ell, ij} \to 0$.
Since $R_k=-2G(w^k)+p^{k+1}+q^{k+1}$, for every $(i,j)\in S$,
$p_i^{t_\ell+1}+q_j^{t_\ell+1}
= 2G(w^{t_\ell})_{ij}+R_{t_\ell,ij}$.
Since $w^{t_\ell}\to\bar x$ and $G$ is continuous, the right-hand side is bounded. Hence the active-edge sums
$p_i^{t_\ell+1}+q_j^{t_\ell+1}$ are bounded for all $(i,j)\in S$.

Define $\mathcal G_S$ as a bipartite graph with row vertices $\{1,\ldots,n\}$ and column vertices $\{1,\ldots,m\}$, and edge set $S$. Since $\bar x\in \mathcal C$ and the $\mu,\nu$ have strictly positive entries, every row and column is incident to at least one edge of $S$. 
Let $\mathcal C_1,\ldots, C_r$ be the connected components of $\mathcal G_S$. For each component, write $ C_a=(I_a,J_a)$, where $I_a$ and $J_a$ denote its row and column index sets. Choose one reference row index $i_a\in I_a$ and define
\[
    \widehat p_i^{t_\ell+1} :=
    p_i^{t_\ell+1}-p_{i_a}^{t_\ell+1},
    \qquad
    \widehat q_j^{t_\ell+1} :=
    q_j^{t_\ell+1}+p_{i_a}^{t_\ell+1},
\]
for every $i,j\in C_a$.
Then $\widehat p_{i_a}^{t_\ell+1}=0$ since $p_i^{t_\ell+1}$ is row-constant and on every active edge $(i,j)\in S$,
\[\widehat p_i^{t_\ell+1} + \widehat q_j^{t_\ell+1} = p_i^{t_\ell+1}+ q_j^{t_\ell+1}.\]
Hence, these normalized sums remain bounded on all active entries.
Because $ C_a$ is connected, as shown in Figure \ref{fig:lemma_10_b}, every row or column index in $ C_a$ can be reached from $i_a$ through a finite alternating sequence of row and column indices with corresponding entries belonging to $S$. 
Starting from $\widehat p_{i_a}^{t_\ell+1}=0$, boundedness of the active-entry sums
successively implies boundedness of the adjacent normalized column multipliers and then of the adjacent normalized row multipliers. 
Proceeding along such a sequence, we show that $\{\widehat p_i^{t_\ell+1}\}_\ell$ and $\{\widehat q_j^{t_\ell+1}\}_\ell$ bounded for every $i\in I_a$ and $j\in J_a$.
\begin{figure}[ht]
  \centering
  \includegraphics[width=0.42\textwidth]{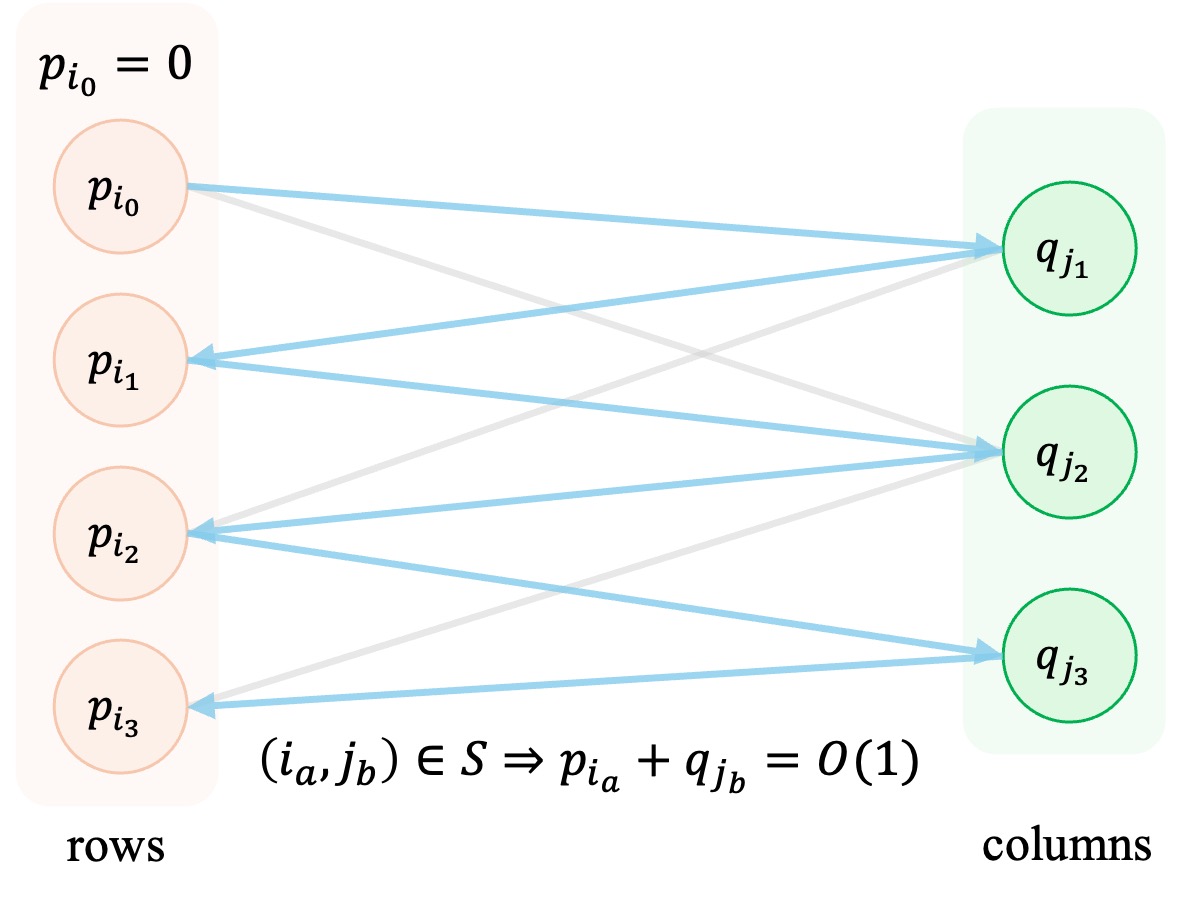}
  \caption{Bipartite Active-Support Graph. Each edge represents
  $(i_a,j_b)\in S$. Fixing
  $p_{i_0}=0$, the bounded edge sums propagate boundedness along
  the blue alternating path; gray lines denote the remaining
  active-support edges.}
  \label{fig:lemma_10_b}
\end{figure}

Define the component offset
$c_a^\ell := p_{i_a}^{t_\ell+1}$. We normalize the offsets by setting $c_1^\ell=0$, and prove that the relative offsets $c_a^\ell-c_b^\ell$ are bounded.
For two distinct components $ C_a$ and $ C_b$, choose a row vertex $i_a\in C_a$ and a column vertex $j_b\in C_b$. Then $(i_a,j_b)\notin S$. 
By Assumption~\ref{assum:support},
$\liminf_{\ell\to\infty} R_{t_\ell,i_a j_b}\ge0$. By construction,
\[
R_{t_\ell,i_a j_b}
= -2G(w^{t_\ell})_{i_a j_b} + \widehat p_{i_a}^{t_\ell+1} + \widehat q_{j_b}^{t_\ell+1}
 + c_a^\ell-c_b^\ell .
\]
Since the first three terms on the right-hand side are bounded, by Assumption~\ref{assum:support}, $c_a^\ell-c_b^\ell$ is bounded from below.
Similarly, reverse and choose a row vertex in $\mathcal C_b$ and a column vertex in $\mathcal C_a$. The same argument shows that $c_b^\ell-c_a^\ell$ is bounded from below. Therefore, $c_a^\ell-c_b^\ell$ is bounded above and below.
Since there are finitely many connected components and $c_1^\ell=0$, all normalized component offsets $\{c_a^\ell\}_\ell$ are bounded. 
Combining this with the boundedness of $\{\widehat p^{t_\ell + 1}\}$ and $\{\widehat q^{t_\ell + 1}\}$, we obtain the boundedness of
$\{p^{t_\ell+1}\}_\ell$ and $\{q^{t_\ell+1}\}_\ell$.
\end{proof}

\paragraph{Proof for Theorem \ref{thm:stationarity}}
\begin{proof}
Let $x^{t_\ell+1}\to\bar x$ be any residual-vanishing convergent shadow subsequence. By Theorem~\ref{thm:unconditional},
$\pi^{t_\ell+1}\to\bar x$ and $w^{t_\ell+1}\to\bar x$. Lemma~\ref{lem:active-face-compactness} further gives $w^{t_\ell}\to\bar x$ and, after normalizing the row and column potentials and passing to a further
subsequence, $p^{t_\ell+1}\to\bar p$ and  $q^{t_\ell+1}\to\bar q$.
Thus the identity $R_t=-2G(w^t)+p^{t+1}+q^{t+1}$ passes to the limit. Writing the limit as $\bar R$, Lemma~\ref{lem:active-face-compactness} gives $\bar R_{ij}=0$ on $\operatorname{supp}(\bar x)$, while
Assumption~\ref{assum:support} gives $\bar R_{ij}\geq0$ on its complement. Therefore,
$\mathbf 0=-2G(\bar x)+\bar p+\bar q-\bar R$ and
$\bar p+\bar q-\bar R\in\mathcal N_{\mathcal C}(\bar x)$.
Since $\nabla F(\bar x)=-2G(\bar x)$,
$\mathbf 0\in\nabla F(\bar x)+\mathcal N_{\mathcal C}(\bar x)$. Since Theorem~\ref{thm:unconditional} provides a residual-vanishing subsequence, and compactness of $\mathcal C$ provides a convergent further subsequence, the shadow sequence has at least one stationary accumulation point.
\end{proof}

\subsection{Proof of Corollary \ref{coro:1/k}}\label{app:1/k}
\begin{proof}
    For any $(i,i^\prime, j, j^\prime)$,
    \[
    \log \frac{(z_1^k)_{ij}(z_1^k)_{i^\prime j^\prime}}{(z_1^k)_{i^\prime j}(z_1^k)_{ij^\prime}}
    = \log \frac{w^k_{ij}w^k_{i^\prime j^\prime}}{w^k_{i^\prime j}w^k_{ij^\prime}} 
    + \alpha_k^{\prime} (G(w^k)_{ij} + G(w^k)_{i^\prime j^\prime} - G(w^k)_{i^\prime j} - G(w^k)_{ij^\prime}).
    \]
    Since $\|G(w^k)\|_{\infty} \leq M_G$, $G(w^k)_{ij} + G(w^k)_{i^\prime j^\prime} - G(w^k)_{i^\prime j} - G(w^k)_{ij^\prime} \leq 4M_G$. Taking maxima on both sides, we obtain $\Delta (z_1^k) \leq \Delta (w^k) + 4M_G \alpha_k^{\prime}$. Since the row projection onto $\mathcal C_1$ is a diagonal row scaling, $\Delta (\pi^{k+1}) = \Delta(z_1^k)$. 
    Similarly, $\Delta(z_2^k) \leq \Delta(\pi^{k+1}) +4M_G \alpha_k^{\prime}$ with $\Delta(w^{k+1}) = \Delta (z_2^k)$. Thus \[\Delta(w^{k+1}) \leq \Delta (\pi^{k+1}) + 4M_G \alpha_k^{\prime} \leq \Delta (w^k) + 8M_G\alpha_k^{\prime} = \Delta (w^k) + B_\Delta\alpha_k^{\prime}.\] Summing the changes up until iteration $k$, 
    \[
    \Delta(w^{k}) \leq \Delta (w^0) + B_\Delta^{\prime} \sum_{t = 0}^{t-1} \frac{1}{t+1} \leq \Delta(w^0) + B_{\Delta}^{\prime}(1+\log k).
    \]
    Similar to the proof in Lemma \ref{lem:polynomial-row-error},
    \[
    1 - \tanh^2\left(\frac{\Delta(w^k)}{4}\right) 
    \geq \exp(-\frac{\Delta(w^k)}{2})
    \geq \exp\left(-\frac{\Delta(w^0)+B_\Delta^\prime}{2}\right)(k+1)^{-4M_G} =: c_\Delta^\prime (k+1)^{-4M_G},
    \]
    with $c_\Delta^\prime := \exp\left(-\frac{1}{2}(\Delta(w^0)+B_\Delta^\prime)\right)$. Hence \[\tanh^2\left(\frac{\Delta(w^k)}{4}\right) \leq 1 - c_\Delta^{\prime}(k+1)^{-\gamma},\] with $\gamma := 4M_G$. Thus, the harmonic trajectory produces the same polynomial contraction-loss form used in Lemma \ref{lem:polynomial-row-error} with $p$ replaced by $\gamma$. 
    
    By Lemma \ref{lem:full-cycle-comparison}, 
    $d_H\left(r(w^{k+1}, r(\widehat w^k)\right)\leq 12M_G \alpha_k^\prime = O((k+1)^{-1})$.
    Combining this with Lemma \ref{lem:sinkhorn-projective-contraction}, we obtain 
    \[
    d_H(r(w^{k+1}), \mathbf 1_n) \leq (1- c_\Delta^\prime(k+1)^{-\gamma}) d_H(r(w^{k}), \mathbf 1_n) + C^\prime (k+1)^{-1},
    \]
    which is exactly the recursion we see in Lemma \ref{lem:polynomial-row-error} with $\beta = 1, p = \gamma$, and $\beta - p = 1 - \gamma$. When $B_\Delta^\prime < \frac{2}{3}$, $\gamma < \frac{1}{3} < \beta$. Thus the comparison condition in Lemma \ref{lem:polynomial-row-error} applies and gives
    $d_H(r(w^{k}), \mathbf 1_n) = O((k+1)^{-(1-\gamma)}) = O(\eta_k^{\prime})$
    for $\eta_k^{\prime}:= (k+1)^{-(1-\gamma)}$. 
    Similar to Proposition \ref{prop:automatic-marginal-feasibility}, $\|w^k \mathbf 1_m - \mu\|_1 = O(\eta_k^\prime)$ and $\|z_1^k - w^k\| = O(\alpha_k^\prime)$. 
    Since $\gamma \geq 0$, $\alpha_k^\prime = (k+1)^{-1} \leq (k+1)^{-(1-\gamma)} = \eta_k^{\prime}$. Then 
    \[
    \|z_1^k\mathbf 1_m - \mu\|_1 \leq \|z^{k}_1 - w^k\|_1 + \|w^k\mathbf{1}_m -\mu\|_1 = O(\eta_k^\prime) + O(\alpha_k^\prime) = O(\eta_k^\prime).
    \]
    Similarly, we obtain $\|(\pi^k)^\top\mathbf 1_n - \nu\|_1 = O(\eta_k^\prime)$, thus yielding corresponding row- and column-marginal feasibility estimates of order $O(\eta_k^\prime)$.
    
    Since the proof for Proposition \ref{prop:automatic-marginal-feasibility} and \ref{prop:cubic-main} requires only the marginal errors and the step size to be both $O(\eta_k^\prime)$ with $\gamma < 1$. The same proof structure gives $D_h(\pi^{k+1}, w^k) = O({\eta_k^\prime}^2)$, $D_h(\pi^{k+1}, w^{k+1}) = O({\eta_k^\prime}^2)$, and $|A_t| = O({\eta_k^\prime}^3)$.

    For the harmonic schedule, $\rho_{k+1}^\prime - \rho_k^\prime = 1$ and $\rho_k^\prime |A_k| = O\left((k+1)^{1-3(1-\gamma)}\right) = O\left((k+1)^{-2 + 3\gamma}\right)$. When $\gamma < \frac{1}{3}$, 
    \[\sum_{k = 1}^{\infty} \rho_k^{\prime}|A_t| < \infty.\]
    When $\gamma < \frac{1}{2}$, the series of penalty-variation error satisfies
    \[
   \sum_{k = 1}^{\infty} (\rho_{k+1}^\prime - \rho_k^\prime)D(\pi^{k+1}, w^{k+1}) = \sum_{k = 1}^{\infty} O({\eta^\prime_k}^2) = \sum_{k = 1}^{\infty} O\left((k+1)^{-2(1-\gamma)}\right) < \infty.
    \]
    By the Lyapunov identity and lower-boundedness similar to Proposition \ref{prop:lyap-main}, we obtain
    \[
    \sum_{k = 1}^{\infty} \rho_k^\prime \left(D_h(\pi^k, \pi^{k+1}) + D_h(w^k, w^{k+1})\right) < \infty.
    \]
    Lemma \ref{lem:corr-res-inequal} only uses the half-step optimality conditions and the cross-divergence bound. Thus, Lemma \ref{lem:corr-res-inequal} remains valid for $(\alpha_k^\prime, \rho_k^\prime, \eta_k^\prime)$ where $\alpha_k^\prime\|R_k\|_{M_k}^2  = O\left(\rho_k^\prime D_h(w^k, w^{k+1})\right) + O(\alpha_k^\prime {\eta_k^\prime}^2)$. When $\gamma < \frac{1}{3}$, $\alpha_k^\prime {\eta_k^\prime}^2 = (k+1)^{-1 - 2 (1-\gamma)} = (k+1)^{-3+2\gamma}$ is summable. Thus 
    \[
    \sum_{k = 1}^\infty \alpha_k^\prime\|R_k\|_{M_k}^2  = \sum_{k = 1}^\infty O\left(\rho_k^\prime D_h(w^k, w^{k+1})\right) + \sum_{k = 1}^\infty O(\alpha_k^\prime {\eta_k^\prime}^2) < \infty.
    \]
    Since $\sum_{k = 1}^{N-1} \alpha_k^\prime = \Theta(\log N)$, similar to the proof in Proposition \ref{prop:residual}, we obtain 
    \[
    \min_{0 \leq i \leq N-1} \|R_i\|_{M_i}^2 = O\left(\frac{1}{\log N}\right).
    \]
    The proof of Theorem \ref{thm:stationarity} uses the divergence of $\sum_k \alpha_k^\prime$, compactness, and Assumption \ref{assum:support}, residual summability, and vanishing cross-divergence, all of which were established above. We thus show that under the restraint $\gamma < \frac{1}{3}$, i.e., $M_G < \frac{1}{12}$, the same subsequence-convergence conclusions follow.
\end{proof}

\subsection{Proof of Proposition \ref{cor:mono-suff}}
\begin{proof}
Since $R_t = \rho_t\bigl(\log w^t-\log w^{t+1}\bigr)
+G(\pi^{t+1})-G(w^t)$, by Proposition~\ref{prop:automatic-marginal-feasibility} and Pinsker's inequality,
\[\|\pi^{t+1}-w^t\|_F \leq \|\pi^{t+1}-w^t\|_1 \leq \sqrt{2D_h(\pi^{t+1},w^t)} = O(\eta_t).\] 
Since $G$ is linear,
\[\|G(\pi^{t+1})-G(w^t)\|_\infty \leq \|D_X(\pi^{t+1}-w^t)D_Y\|_F \leq \|D_X\|_2\|D_Y\|_2 \|\pi^{t+1}-w^t\|_F =O(\eta_t).\]
Since $D_X$ and $D_Y$ are fixed, along $\{t_\ell\}$,
\[R_{t_\ell,ij} = \rho_{t_\ell} \log\frac{w^{t_\ell}_{ij}}{w^{t_\ell+1}_{ij}} +O(\eta_{t_\ell}).\]  Fix $(i,j)\notin S(\bar x)$. By hypothesis, there exists a sequence $\varepsilon_\ell=o(\alpha_{t_\ell})$ such that
$\left(w^{t_\ell+1}_{ij}/w^{t_\ell}_{ij}\right) \leq 1+\varepsilon^{ij}_\ell$.
Consequently, since $\varepsilon^{ij}_\ell=o(\alpha_{t_\ell})$ and
$\log(1+\varepsilon^{ij}_\ell)=\varepsilon^{ij}_\ell+o(\varepsilon^{ij}_\ell)$,
\[
\rho_{t_\ell} \log\frac{w^{t_\ell}_{ij}}{w^{t_\ell+1}_{ij}}
\geq -\rho_{t_\ell}\log(1+\varepsilon^{ij}_\ell) 
=-\frac{\log(1+\varepsilon^{ij}_\ell)}{\alpha_{t_\ell}}
=o(1).
\]
Since $\eta_{t_\ell}\to0$, the residual decomposition therefore yields $\liminf_{\ell\to\infty}R_{t_\ell,ij}\geq0$, which holds for every $(i,j)\notin S(\bar x)$. Therefore, Assumption~\ref{assum:support} holds along the subsequence.

If $w^{t_\ell+1}_{ij}\leq w^{t_\ell}_{ij}$ eventually, then \[\rho_{t_\ell}\log\left(\frac{w^{t_\ell}_{ij}}{w^{t_\ell+1}_{ij}}\right)\geq0,\] so the same conclusion follows immediately.
The first condition then implies $\liminf_{\ell\to\infty}R^{t_\ell}_{ij}\geq0$. 
Moreover, since $\varepsilon_\ell^{ij}=o(\alpha_{t_\ell})$ and
$\rho_{t_\ell}=1/\alpha_{t_\ell}$, $-\rho_{t_\ell}\log(1+\varepsilon_\ell^{ij})=o(1)$, which proves the second claim.
\end{proof}

\subsection{Proof of Proposition \ref{prop:no-universal-power-lower-bound}}\label{app:proof_lower_bound}

\begin{proof}
    Let
    \[
        \mu=\nu=\left(\frac12,\frac12\right)^\top,
        \qquad
        D_X=D_Y=
        \begin{pmatrix}
        0&1\\
        1&0
        \end{pmatrix}
        =:J .
    \]
    Consider the one-parameter family
    \[
        W(u)=
        \begin{pmatrix}
        u&\frac12-u\\
        \frac12-u&u
        \end{pmatrix},
        \qquad u\in\left(\frac14,\frac12\right).
    \]
    Then $W(u)\in \mathcal C_1\cap \mathcal C_2$ and $G(W(u))=JW(u)J=W(u)$. Its projective diameter is
    \[
        \Delta(W(u))
        = \log\frac{W_{11}(u)W_{22}(u)}{W_{12}(u)W_{21}(u)}
        = 2\log\frac{u}{\frac12-u}>0.
    \]
    During the first exponential perturbation, the log cross-ratio changes by
    \[
        \alpha\bigl(G_{11}(W(u))+G_{22}(W(u))-G_{12}(W(u))-G_{21}(W(u))\bigr) = \alpha(4u-1).
    \]
    The row normalization is a positive diagonal scaling and therefore preserves cross-ratios. Moreover, the normalized matrix remains in the same family, say $W(u_1)$, with $u_1>u$. The second exponential perturbation changes the log cross-ratio by $\alpha(4u_1-1)\geq \alpha(4u-1)$, and the column normalization again preserves cross-ratios. Hence, one full alternating update cycle satisfies
    \[
        \Delta(w^{t+1})-\Delta(w^t) \geq 2\alpha_t(4u_t-1).
    \]
    Since $u_t$ is nondecreasing and $u_0>1/4$, there exists $c_0>0$ such that $2(4u_t-1)\geq c_0$ for all $t$. Therefore
    \[
        \Delta(w^T)-\Delta(w^0)
        \geq
        c_0\sum_{s=0}^{T-1}\alpha_s.
    \]
    If the projective guard holds, then
       $ \Delta(w^T)-\Delta(w^0) \leq a\log(T+1)$.
    Thus
    \[
        \sum_{s=0}^{T-1}\alpha_s
        \leq
        \frac{a}{c_0}\log(T+1).
    \]
    Now suppose, for contradiction, that there exist $c>0$, $\beta<1$, and $T_0$
    such that
        $\alpha_t\geq c(t+1)^{-\beta}$, for every $t\geq T_0$.
    Then for every $T>T_0$,
    \[
        \sum_{s=0}^{T-1}\alpha_s
        \geq
        c\sum_{s=T_0}^{T-1}(s+1)^{-\beta}
        \geq
        c'\bigl(T^{1-\beta}-T_0^{1-\beta}\bigr)
    \]
    for some constant $c'>0$. Since $\beta<1$, this lower bound grows faster than
    $\log(T+1)$, contradicting
    \[
        \sum_{s=0}^{T-1}\alpha_s
        \leq
        \frac{a}{c_0}\log(T+1).
    \]
\end{proof}
This proposition explains why Lemma~\ref{lem:guarded-backtracking} is stated in
the asymmetric form
    $\frac{c_-}{t+2}\leq \alpha_t\leq c_+(t+1)^{-\beta}$.
The logarithmic guard gives only the harmonic accepted-step floor $\alpha_t\geq c_-/(t+2)$, while the delayed-power proposal gives the upper control $\alpha_t\leq c_+(t+1)^{-\beta}$.

\section{Experiments}

\subsection{Synthetic Graph Alignment Experiment Details}\label{app:syn_ga}
\paragraph{Graph Construction}
The source graph contains $n=250$ nodes and is sampled from a Gaussian random partition model with parameters $s=50$, $v=5$, within-community edge probability $p_{\mathrm{in}}=0.2$, and between-community edge probability $p_{\mathrm{out}}=0.02$. Here, $s$ is the nominal mean community size, and $v$ controls its variability. Community sizes are sampled until 250 nodes are assigned, with the remaining nodes assigned to the final community. Given the source graph, we construct the target graph using the noisy-node procedure of \citet{li2023convergent}. For noise level $q=0.3$, we first add $\lfloor q|E|\rfloor$ randomly sampled edges that are absent from the source graph and append $\lfloor qn\rfloor=75$ noisy nodes, resulting in a target graph with $325$ nodes. Each added node is connected randomly to the existing graph. We use the unweighted adjacency matrices as the structural matrices $D_X$ and $D_Y$, together with uniform node distributions $\mu=\mathbf{1}_{250}/250$ and $\nu=\mathbf{1}_{325}/325$. All results are averaged over 30 independent random seeds. Runtime denotes the wall-clock time from initialization to the coupling's return.

\paragraph{FGW stationarity residual.} For synthetic and real domain-adaptation experiments, we evaluate the stationarity residual using the FGW objective $F_{\mathrm{FGW}}(X)=2\lambda F_{\mathrm{GW}}(X) +(1-\lambda)\langle M,X\rangle$. Thus, $\nabla F_{\mathrm{FGW}}(X) =2\lambda\nabla F_{\mathrm{GW}}(X)+(1-\lambda)M$. The linear term does not affect the gradient Lipschitz bound, so we use $\eta =\max\{1,4\lambda|D_X|_2|D_Y|_2\}^{-1}$.

\paragraph{KL-BAPG + Sinkhorn Scaling} We compare fixed-penalty KL-BAPG with a postprocessed variant that applies alternating row and column scalings to its final coupling, without further GW optimization. Scaling stops when the sum of the marginal $L_1$ errors is at most $10^{-12}$ or after 1,000 cycles. We apply no subsequent exact marginal correction. We report accuracy, GW objective, and marginal infeasibility for the returned coupling, and evaluate projected stationarity on its feasible shadow using the same diagnostic procedure as for other infeasible outputs. This comparison assesses the effect of post hoc scaling on feasibility and projected stationarity.
\paragraph{Top-1 Accuracy} Let $W\in\mathbb{R}+^{n\times m}$ denote the estimated coupling and let $\tau(i)$ be the ground-truth target node corresponding to source node $i$. The predicted correspondence is the target node receiving the largest coupling weight, $\widehat{\tau}(i)=\arg\max_{1\leq j\leq m} W_{ij}$, and top-1 accuracy is
\[\operatorname{Accuracy}
= \frac{1}{n} \sum_{i=1}^{n} \mathbbm{1}\left\{\widehat{\tau}(i)=\tau(i)\right\}.\]

\subsection{Real graph-alignment details}\label{app:real_ga}
In PROTEINS and ENZYMES, nodes represent secondary protein-structure elements such as helices, sheets, and turns, while edges encode sequential or spatial proximity. ENZYMES contains proteins from six enzyme classes, and PROTEINS distinguishes enzymes from non-enzymes; these graph-level labels are not used.
We retain graphs with at most 250 nodes, yielding 598 ENZYMES graphs and 1106 PROTEINS graphs. For each source graph $G_X=(V_X,E_X)$ and corruption level $q=0.1$, following \citet{li2023convergent}, we add $\lfloor q|E_X|\rfloor$ randomly sampled absent edges and $\lfloor q|V_X|\rfloor$ noisy nodes and randomly connect them to the augmented graph. This produces the target graph $G_Y=(V_Y,E_Y)$. We set $D_X=A_X$ and $D_Y=A_Y$, where $A_X$ and $A_Y$ are the corresponding unweighted symmetric adjacency matrices, and use uniform node measures.

Given a feasible coupling $\widehat W$, Hungarian decoding produces an injective assignment $\widehat\tau$. We measure Hungarian one-to-one matching accuracy as
\[
\mathrm{Accuracy}
=
\frac{1}{|V_X|}
\sum_{i\in V_X}
\mathbf 1\{\widehat\tau(i)=i\}.
\]

\subsection{Synthetic Domain Adaptation Details} 
\label{app:syn_da}
\paragraph{Graph Construction}
For source sample $i$ and target sample $j$, let $x_i^s\in\mathbb{R}$ denote the common source feature and $x_j^t=(x_{j,c}^t,x_{j,a}^t)\in\mathbb{R}^2$ denote the target observation, where $x_{j,c}^t$ is the common feature and $x_{j,a}^t$ is the target-only feature.  Source and target class proportions are both $1/2$, and we use uniform marginals $\mu_i=1/n_s$ and $\nu_j=1/n_t$.
Target labels are hidden from all methods and accessed only after training to compute accuracy. Following \citet{aritake2026domain}, the cross-domain feature cost is \[M_{ij} = \frac{|x_i^s-x_{j,c}^t|^2} {\max_{i',j'}|x_{i'}^s-x_{j',c}^t|^2}.\]
Let $\widetilde D_X$ be the max-normalized squared-Euclidean source distance.
We form the label-aware source matrix
\[
(D_X)_{ii'} = 0.9(\widetilde D_X)_{ii'} + 0.1\mathbf{1}\{y_i^s\ne y_{i'}^s\}.
\]
The target matrix $D_Y$ is the all-pairs shortest-path distance on the symmetrized 4-NN graph of the full target features with squared-Euclidean edge weights and clipping distances at 10. This differs from \citet{aritake2026domain}, where they assign disconnected pairs a penalty equal to ten times their largest finite shortest-path distance, which assigns a penalty of 10 times the largest finite shortest-path distance to disconnected pairs.
All BAPG variants minimize $F_{\mathrm{FGW}}(X)$ with $\lambda=10^{-2}$. Here we use the squared-loss FGW for the objective function while \citet{aritake2026domain} uses an absolute distance discrepancy. 

Given the returned coupling $X$, we form a feasible coupling $\widehat X$ by alternating the row scaling
$X\leftarrow\operatorname{Diag}(\mu/(X\mathbf 1))X$ and column scaling
$X\leftarrow X\operatorname{Diag}(\nu/(X^\top\mathbf 1))$.  We stop when $\operatorname{feas}(X)\leq10^{-12}$ or after 500 scaling cycles, and use the resulting $\widehat X$ to transport source labels:
  \[\widehat y_j^t=\arg\max_c
  \frac{1}{\nu_j}\sum_i\widehat X_{ij}\mathbf 1\{y_i^s=c\}.\]
The accuracy is then defined as \[ \operatorname{Accuracy}=\frac{1}{n_t}\sum_j
  \mathbf 1\{\widehat y_j^t=y_j^t\}.\]

\paragraph{Baseline Information}
 Baselines are POT implementations of exact OT, GW, FGW, LA--FGW, GB--FGW, LAGB--FGW, and naive OT clustering \citep{aritake2026domain}. Naive OT clustering averages the transported class scores within each target spectral cluster and assigns the cluster-wise maximum to the cluster. The BAPG variants start from $X^0=\mu\nu^\top$ and run for 4,000 iterations. The POT GW- and FGW-based baselines use a maximum of 200 outer iterations with a tolerance of $10^{-7}$.
\begin{figure}[htbp]
    \centering
    \includegraphics[width=\linewidth]{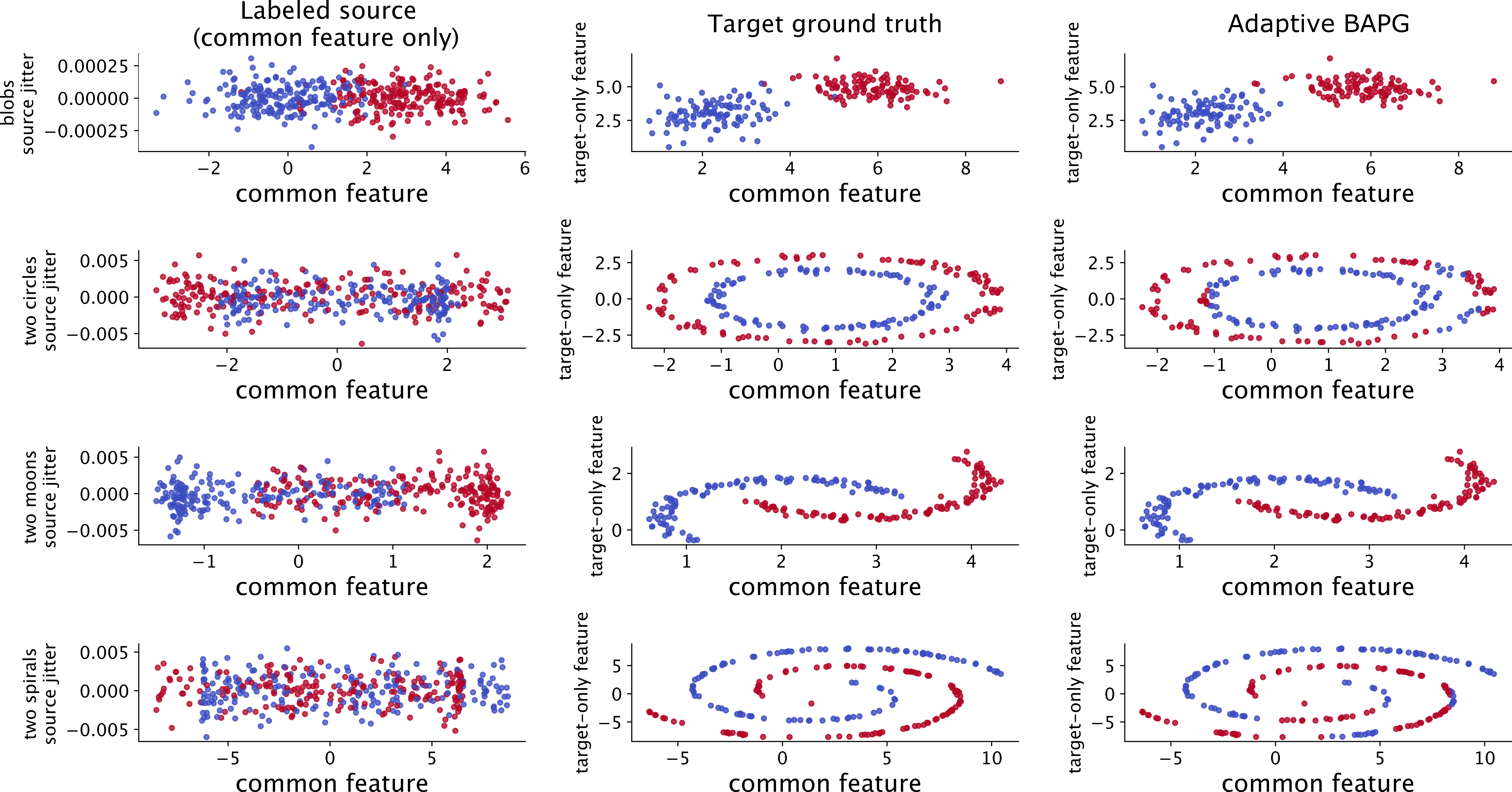}
    \caption{Visualization of heterogeneous domain adaptation for seed 0. Left: labeled source samples plotted by their single observed common feature; the vertical displacement of source points is jitter for visualization only and is not used as method input. Middle: target samples in the common and target-only features, colored by ground-truth labels hidden during training. Right: the same target samples colored by A-KL-BAPG predictions. Quantitative comparisons with other baselines are reported in Table \ref{tab:da-results}(a).}
    \label{fig:da_syn_vis}
\end{figure}
\paragraph{Results Visualization} Figure \ref{fig:da_syn_vis} provides a qualitative visualization of the four synthetic domain-adaptation settings: blobs, two circles, two moons, and two spirals. The first column shows the labeled source samples, where only the common feature is observed, while the second column shows the target samples colored by their ground-truth labels. The target samples contain an additional target-specific feature that reveals their two-dimensional geometry. The third column shows the labels predicted by A-KL-BAPG. Despite the shift between the source and target distributions, A-KL-BAPG separates the two classes almost perfectly for the blobs and two moons. For the two-circles setting, most points are classified correctly, with errors concentrated in portions of the rings that are difficult to distinguish using the common feature alone. The two-spirals setting is particularly challenging because the two classes become intertwined and overlap substantially when projected onto the common feature. Nevertheless, the predicted labels largely follow the ground truth. These examples demonstrate A-KL-BAPG's ability to produce coherent target predictions across a range of heterogeneous domain-adaptation settings.

\subsection{Real Domain Adaptation Details}\label{app:real_da}
\paragraph{Graph Construction} Each recording is divided into 5-second windows and represented by the mean and standard deviation of each axis, giving six back-sensor features shared by both domains and six thigh-sensor features available only in the target domain.
For each subject, we use a stratified 80/20 split where the labeled training windows that form the source domain use only the six shared features, and the test
Windows in the unlabeled target domain use all 12 features.
Following \citet{aritake2026domain}, we introduce a controlled shift in the target data. For each split, we fit PCA to the unshifted target samples and perturb the first three principal directions sequentially. Along each direction, we sort the sample projections and perform 50 sweeps, updating samples in random order and drawing each updated projection uniformly between its current neighbors. Boundary positions are padded by the original projection range. We use the split seed for these draws and leave the source samples unchanged.
After shifting the target observations, we construct \(D_X\), \(D_Y\), and \(M\) as in Appendix \ref{app:syn_da}. In particular, \(D_Y\) is the all-pairs shortest-path distance on a symmetrized 4-NN graph built from all twelve target features, with squared-Euclidean edge weights and distances capped at \(10\); \(D_X\) uses the six shared source features and source labels, while \(M\) uses only the six features shared across domains. $\lambda$ is also set to $0.01$, following \citet{aritake2026domain}.

\begin{table}[htbp]
\centering
\small
\caption{Complete HAR70+ domain adaptation results.
Accuracy values are retained from the previous experiments. Runtimes are updated using 54 jobs and reported as mean $\pm$ standard deviation. All values are rounded to three decimal places. A dash indicates an unavailable updated runtime.}
\label{tab:har70-all-methods}
\begin{tabular}{lcc}
\toprule
Method & Accuracy & Runtime (s) \\
\midrule
KL--BAPG & $0.880\pm0.074$ & $11.739\pm7.283$ \\
KL--BAPG + Sinkhorn & $0.880\pm0.074$ & $11.967\pm 7.531$ \\
$1/k$ & $0.662\pm0.068$ & $11.635\pm7.216$ \\
Pure power & $0.741\pm0.068$ & $11.627\pm7.215$ \\
A-KL-BAPG & $0.880\pm0.074$ & $67.665\pm58.035$ \\
\midrule
POT OT & $0.827\pm0.079$ & $0.008\pm0.006$ \\
POT GW & $0.803\pm0.211$ & $0.102\pm0.127$ \\
POT FGW & $0.827\pm0.079$ & $0.025\pm0.017$ \\
POT LA-FGW & $0.829\pm0.078$ & $0.028\pm0.020$ \\
POT GB-FGW & $0.854\pm0.079$ & $0.047\pm0.036$ \\
LAGB-FGW & $0.879\pm0.077$ & $0.056\pm0.048$ \\
Naive OT cluster & $0.767\pm0.085$ & $0.011\pm0.008$ \\
JDOT & $0.805\pm0.095$ & $0.112\pm0.079$ \\
DAEVS & $0.895\pm0.091$ & $0.255\pm0.181$ \\
\bottomrule
\end{tabular}
\end{table}

\end{document}